\newtheorem{lemma}{Lemma} 
\newtheorem{theorem}{Theorem} 
\newtheorem{defn}{Definition}
\newtheorem{exmp}{Example}
\title{A Journey into Ontology Approximation: From Non-Horn to Horn}
\author{
Anneke Haga$^1$
\and
Carsten Lutz$^1$\and
Johannes Marti$^{2}$\And
Frank Wolter$^3$
\affiliations
$^1$Universit\"at Bremen, Germany\\
$^2$Universiteit van Amsterdam, Netherlands\\
$^3$University of Liverpool, UK\\
\emails
\{anneke, clu\}@uni-bremen.de,
johannes.marti@gmail.com,
wolter@liverpool.ac.uk
}
\begin{document}

\maketitle

\begin{abstract}
  We study complete approximations of an ontology formulated in a
  non-Horn description logic (DL) such as \ALC in a Horn DL such
  as~\EL. We provide concrete approximation schemes that are
  necessarily infinite and observe that in the $\ELU$-to-$\EL$ case
  finite approximations tend to exist in practice and are guaranteed to
  exist when the source ontology is acyclic. In contrast, neither of
  these are the case for $\ELU_\bot$-to-$\EL_\bot$ and for
  $\ALC$-to-$\EL_\bot$ approximations. We also define a notion of
  approximation tailored towards ontology-mediated querying, connect
  it to subsumption-based approximations, and identify a case where
  finite approximations are guaranteed to exist.
\end{abstract}

\section{Introduction}

Despite prominent standardization efforts such as OWL, a large variety
of description logics (DLs) continues to be used as ontology
languages. In fact, ontology designers choose a DL suitable for their
purposes based on many factors including expressive power,
computational properties, and tool support
\cite{DBLP:books/daglib/0041477}. Since ontology engineering
frequently involves (partial) reuse of existing ontologies, this
raises the problem of converting an ontology written in some source DL
$\Lmc_S$ into a desired target DL~$\Lmc_T$. A particularly important
case is ontology approximation where $\Lmc_T$ is a fragment of
$\Lmc_S$, studied for example in
\cite{DBLP:conf/aaai/PanT07,DBLP:conf/aaai/RenPZ10,DBLP:conf/aimsa/BotoevaCR10,DBLP:conf/cade/MartinezFGHH14,DBLP:journals/jair/ZhouGNKH15,DBLP:conf/ijcai/BotcherLW19}.

In practice, ontology approximation is often done in an ad hoc way by
dropping all statements from the source ontology $\Omc_S$ that are not
expressible in $\Lmc_T$, or at least the inexpressible parts of such
statements. It is well-known that this results in incomplete
approximations, that is, there will be knowledge in $\Omc_S$ that
could be expressed in $\Lmc_T$, but is not contained in the resulting
approximated ontology. The degree and nature of the resulting
incompleteness is typically neither understood nor analyzed. One
reason for this unsatisfactory situation might be the fact that it is
by no means easy to construct complete approximations and, even worse,
finite complete approximations are not guaranteed to exist. This was
studied in depth in \cite{DBLP:conf/ijcai/BotcherLW19} where
ontologies formulated in expressive Horn DLs such as
Horn-$\mathcal{SHIF}$ and $\mathcal{ELI}$ are approximated in
tractable Horn DLs such as \EL. For example, it is shown there that
finite complete $\ELI$-to-\EL approximations do not exist even in
extremely simple cases including those occurring in
practice. The authors then lay out a new
research program for ontology approximation that consists in mapping
out the structure of complete (infinite) ontology approximations as a
tool for guiding informed decisions when constructing incomplete
(finite) approximations in practice, and also to enable a better
understanding of the degree and nature of incompleteness.

In this paper, we consider $\Lmc_S$-to-$\Lmc_T$ ontology approximation
where $\Lmc_S$ is a non-Horn DL such as \ALC and $\Lmc_T$ is a
tractable Horn DL such as \EL. Arguably, these are extremely natural
cases of ontology approximation given that Horn vs.\ non-Horn is
nowadays the most important classification criterion for DLs
\cite{DBLP:books/daglib/0041477}. Non-Horn DLs include expressive
features such as negation and disjunction and require `reasoning by
cases' which is computationally costly, but also have considerably
higher expressive power than Horn DLs. Horn DLs, in contrast, enjoy
favourable properties such as the existence of universal models and of
`consequence-based' reasoning algorithms that avoid reasoning by cases
\cite{DBLP:conf/birthday/CucalaGH19}. Despite being natural, however,
non-Horn-to-Horn approximation turns out to be a challenging
endeavour.
%


We start with the fundamental case of $\ELU$-to-\EL
approximation. Given an \ELU ontology $\Omc_S$, we aim to find a
(potentially infinite) \EL ontology $\Omc_T$ such that for all \EL
concepts $C,D$ in the signature of $\Omc_S$, $\Omc_S \models C
\sqsubseteq D$ iff $\Omc_T \models C \sqsubseteq D$. 
\begin{exmp}
\label{ex:0}
  Consider the $\ELU$ ontology 
$$
  \begin{array}{r@{}r@{\;}c@{\;}l@{}l}
    \Omc_S = \{ & \mn{Job} &\sqsubseteq& \mn{MainJob} \sqcup \mn{SideJob} \\[0.5mm]
&  \exists \mn{job} . \mn{SideJob} &\sqsubseteq& \exists \mn{job}
  . (\mn{MainJob} \sqcap \mn{PartTime}) \ \}.
\end{array}
$$
Then the following is an $\EL$
approximation of $\Omc_S$: 
$$
  \begin{array}{@{}r@{\;}c@{\;}l@{}l}
    \Omc_T = \{ \quad  \exists \mn{job} . \mn{SideJob} &\sqsubseteq& \exists \mn{job}
  . (\mn{MainJob} \sqcap \mn{PartTime}) \\[0.5mm]
 \exists \mn{job} . \mn{Job}
&\sqsubseteq& 
\exists \mn{job} . \mn{MainJob} \\[0.5mm]
 \exists \mn{job} . (\mn{Job} \sqcap \mn{PartTime}) 
&\sqsubseteq& 
\exists \mn{job} . (\mn{MainJob} \sqcap \mn{PartTime}) \ \}. 
\end{array}
$$
\end{exmp}
The last two lines of $\Omc_T$ illustrate that \EL consequences of \ELU
ontologies can be rather non-obvious. 

We first prove that finite approximations need not exist in the
$\ELU$-to-\EL case and that depth bounded approximations may be
non-elementary in size. Our main result is then a concrete
approximation scheme that makes explicit the structure of complete
infinite approximations and aims to keep as much structure of the
source ontology as possible. 
An interesting and, given the
results in \cite{DBLP:conf/ijcai/BotcherLW19}, surprising feature of
our scheme is that it can be expected to often deliver \emph{finite}
approximations in practical cases. We perform a case study based on the
Manchester ontology corpus that confirm this expectation.  We also
show that if $\Omc_S$ is an acyclic \ELU ontology, then a finite \EL
approximation always exists (though it need not be acyclic). The finite
approximations that we obtain are too large to be directly used in
practice. Nevertheless, we view our results as positive and
believe that in practice approximations of reasonable size 
often exist, as in Example~\ref{ex:0}.  A `push button
technology' for constructing them, however, is outside of the scope of
this paper.

We then proceed to the cases of $\ELU_\bot$-to-$\EL_\bot$ and
$\ALC$-to-$\EL_\bot$ approximations which turn out to be closely
related to each other. They also turn out to be significantly
different from the $\ELU$-to-\EL case in that finite approximations do
not exist in extremely simple (and practical) cases, much like in the
Horn approximation cases studied in
\cite{DBLP:conf/ijcai/BotcherLW19}.  Also, finite approximations of
acyclic ontologies are no longer guaranteed to exist.  While this is
not good news, it is remarkable that the addition of the
$\bot$ symbol has such a dramatic effect. We again provide an
(infinite) approximation scheme.

Finally, we propose a notion of approximation that is tailored towards
applications in ontology-mediated querying
\cite{DBLP:conf/rweb/CalvaneseGLLPRR09}
 and show that 
 it is intimately related to the subsumption-based approximations that
 we had studied before.  Remarkably, if we concentrate on atomic
 queries (AQs), then we obtain finite approximations even in the
 $\ALC$-to-$\EL_\bot$ case.  Compared to the related work presented in
 \cite{DBLP:journals/ai/KaminskiNG16}, we do not require the
 preservation of all query answers, but only of a maximal subset
 thereof, and our method is applicable to all ontologies formulated in
 the source DL chosen rather than to a syntactically restricted class. We
 also observe an interesting application to the rewritability of
 ontology-mediated queries.


All proofs are deferred to the appendix.

\section{Preliminaries}
\label{sect:prelims}

Let $\NC$ and $\NR$ be disjoint and
countably infinite sets of \emph{concept names} and \emph{role
  names}. In the description logic \ALC, \emph{concepts} $C,D$ are
built according to the syntax rule
$$
   C,D ::= \top \mid \bot \mid A \mid \neg C \mid C \sqcap D \mid
   C \sqcup D \mid \exists r . C \mid \forall r . C
$$
where $A$ ranges over \NC and $r$ over \NR.   
The \emph{depth} of a concept is the nesting depth of the constructors
$\exists r$ and $\forall r$ in it. For example, the concept $\exists r
. B \sqcap \exists r . \exists s .A$ is of depth~2.  We introduce
other DLs as fragments of \ALC. An \emph{$\ELU_\bot$ concept} is an
\ALC concept that does not contain negations $\neg C$ and value
restrictions $\forall r . C$. An \emph{$\EL_\bot$ concept} is an
$\ELU_\bot$ concept that does not contain disjunctions $C \sqcup
D$. \emph{\ELU concepts} and \emph{\EL concepts} are defined likewise,
but additionally forbid the use of the bottom concept $\bot$.

For any of these DLs \Lmc, an \emph{\Lmc ontology} is a
set of \emph{concept inclusions (CIs)} $C \sqsubseteq D$ where $C$
and $D$ are \Lmc concepts. While ontologies used in practice have to
be finite, we frequently consider also infinite
ontologies. W.l.o.g., we assume that all occurrences of
$\bot$ in $\ELU_\bot$ ontologies are in CIs of the form $C \sqsubseteq
\bot$, where $C$ does not contain $\bot$. An \emph{acyclic ontology}
\Omc is a set of concept inclusions $A \sqsubseteq C$ and
\emph{concept equivalences} $A \equiv C$ where $A$ is a concept name
(that is, it is not a compound concept), the left-hand sides are
unique, and \Omc does not contain a definitiorial cycle $A_0
\bowtie_1 C_0, \dots, A_n \bowtie_n C_n$, $\bowtie_i \in \{
{\sqsubseteq},{\equiv}\}$, where $C_i$ contains $A_{i+1 \,
  \mn{mod} \, n+1}$ for all $i \leq n$. An equivalence $A \equiv C$
can be viewed as two CIs $A \sqsubseteq C$ and $C
\sqsubseteq A$ and thus every acyclic ontology is an ontology in the
original sense.\looseness=-1  

A \emph{signature} $\Sigma$ is a set of concept and role names,
uniformly referred to as \emph{symbols}. We use $\mn{sig}(X)$ to
denote the set of symbols used in any syntactic object $X$ such as a
concept or an ontology.  If $\mn{sig}(X) \subseteq \Sigma$, we also
say that $X$ is \emph{over} $\Sigma$.  The \emph{size} of a (finite)
syntactic object $X$, denoted $||X||$, is the number of symbols needed
to write it, with every occurrence of a concept and role name contributing one.

The semantics of concepts and ontologies is defined in terms of
\emph{interpretations} $\Imc=(\Delta^\Imc,\cdot^\Imc)$ as usual,
see~\cite{DBLP:books/daglib/0041477}.
%
An interpretation \Imc \emph{satisfies} a CI $C \sqsubseteq D$ if
$C^\Imc \subseteq D^\Imc$, an equivalence $A \equiv C$ if $A^\Imc =
C^\Imc$, and it is a \emph{model} of an ontology \Omc if it satisfies all
CIs in~\Omc. 
Concept $C$ is \emph{subsumed by} concept $D$ w.r.t.\ ontology \Omc,
written $\Omc \models C \sqsubseteq D$, if every model \Imc of \Omc
satisfies the CI $C \sqsubseteq D$; we then also say that the CI is a
\emph{consequence} of \Omc.  Subsumption can be decided in polynomial
time in $\EL_\bot$ and is \ExpTime-complete between $\ELU$ and
$\ALC$~\cite{DBLP:books/daglib/0041477}.  We now give our main
definition of approximation. With concept of depth bounded by
$\omega$, we mean concepts of unrestricted depth.
\begin{defn} \label{def:approx_types} Let $\Omc_S$ be an \ALC
  ontology, $\text{sig}(\Omc_S) = \Sigma$, $\Lmc_T$ any of the DLs
  introduced above, and $\ell \in \mathbb{N} \cup \{ \omega \}$.
  A (potentially infinite) $\Lmc_T$ ontology $\Omc_T$ is an
  \emph{$\ell$-bounded $\Lmc_T$ approximation of $\Omc_S$} if
  $$\Omc_S \models C \sqsubseteq D \text{ iff } \Omc_T \models
    C \sqsubseteq D
  $$
  for all $\Lmc_T$ concepts $C,D$ over $\Sigma$ of depth bounded by
  $\ell$.  $\Omc_T$ is \emph{non-projective} if $\mn{sig}(\Omc_T)
  \subseteq \Sigma$ and \emph{projective} otherwise. We refer to
  $\omega$-bounded $\Lmc_T$ approximations as \emph{$\Lmc_T$
    approximations}.
\end{defn}
We refer to the ``if'' direction of the biimplication in
Definition~\ref{def:approx_types} as \emph{soundness} of the
approximation and to the ``only if'' direction as \emph{completeness}.
Infinite 
approximations
always exist: take as $\Omc_T$ the set of all \Lmc CIs $C \sqsubseteq
D$ with $C,D$ over $\Sigma$ and $\Omc_S \models C \sqsubseteq D$.
In the same way, finite (non-projective) depth-bounded approximations
always exist.
With \emph{$\Lmc_S$-to-$\Lmc_T$ approximation}, $\Lmc_S$ a DL and
$\Lmc_T$ a fragment of $\Lmc_S$, we mean the task to approximate an
$\Lmc_S$ ontology in~$\Lmc_T$, possibly using an infinite ontology. 



\section{\ELU-to-\EL Approximation}
\label{sect:elutoel}

We consider \ELU-to-\EL approximation as the simplest case of
approximating non-Horn ontologies in a Horn DL. 


\paragraph{Fundamentals.} 
 We start with observing that projective approximations are more
powerful than non-projective ones.
\begin{restatable}{prop}{firstprop}
\label{ex:1}
  The \ELU ontology
  %
  $$
  \begin{array}{r@{}rcl@{}l}
    \Omc_S = \{ & A &\sqsubseteq& B_1 \sqcup B_2,\\[0.5mm]
& \exists r . B_i &\sqsubseteq& B_i, & \text{for } i \in \{1,2\}\\[0.5mm]
&    B_i \sqcap A' &\sqsubseteq& M& \text{for } i \in \{1,2\} \ \}.
  \end{array}
  $$
  %
  has a finite projective \EL approximation, but every non-projective 
  \EL approximation is infinite. 
\end{restatable}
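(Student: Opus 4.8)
The plan is to prove the two halves separately: exhibit a concrete finite \emph{projective} approximation, and then rule out finite \emph{non-projective} ones by a locality argument. For the positive part I would keep the two Horn parts of $\Omc_S$ verbatim and simulate the disjunction by a single fresh concept name $F \notin \Sigma$ that stands for ``$r$-reaches some $A$-element''. Concretely, take
$$\Omc_T = \{\, \exists r . B_i \sqsubseteq B_i,\ B_i \sqcap A' \sqsubseteq M \ (i \in \{1,2\}),\ A \sqsubseteq F,\ \exists r . F \sqsubseteq F,\ F \sqcap A' \sqsubseteq M \,\}.$$
Soundness is the easy direction: every model $\Imc$ of $\Omc_S$ expands to a model of $\Omc_T$ by interpreting $F$ as the set of elements reaching an $A$-element along an $r$-path. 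The three $F$-CIs then hold, because reaching an $A$ forces, via the disjunction and the backward propagation of $B_1,B_2$, membership in $B_1 \sqcup B_2$ and hence, together with $A'$, in $M$. As $F \notin \Sigma$ this expansion is $\Sigma$-faithful, so $\Omc_T \models C \sqsubseteq D$ implies $\Omc_S \models C \sqsubseteq D$ for all $\EL$ concepts $C,D$ over $\Sigma$.

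For completeness I would argue contrapositively on canonical models. If $\Omc_T \not\models C \sqsubseteq D$, take the canonical model $\Imc$ of $C$ w.r.t.\ $\Omc_T$ (with root $\rho \notin D^\Imc$) and turn it into a model of $\Omc_S$ by assigning every $A$-node to a disjunct $B_1$ or $B_2$ and closing under the backward propagation $\exists r . B_i \sqsubseteq B_i$. The observation that makes this work is that $F \sqcap A' \sqsubseteq M$ has \emph{already} forced $M$ on every $A'$-node reaching an $A$, so the assignment introduces no new $M$-memberships; since $C$ is monotone we keep $\rho \in C$, and the disjuncts only have to be chosen so that $D$ stays false at $\rho$. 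The result satisfies $\Omc_S$ and still separates $C$ from $D$, giving $\Omc_S \not\models C \sqsubseteq D$.

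For the negative part the engine is the infinite family of genuine consequences $C_n \sqsubseteq M$ with $C_n = A' \sqcap \exists r^n . A$, where $\exists r^n . A$ abbreviates the $n$-fold nesting $\exists r . \cdots \exists r . A$. Each is a consequence by reasoning over cases: an element of $C_n$ reaches an $A$, which lies in $B_1$ or in $B_2$, and either way backward propagation carries that $B_i$ up the chain to the root, which together with $A'$ yields $M$. Suppose now a finite non-projective $\EL$ approximation $\Omc_T$ existed, and let $d$ bound the role depth of its CIs. The key sub-lemma, proved via soundness, is that no concept name can be propagated up an $r$-chain out of a lone $A$: since $\Omc_S \not\models \exists r^k . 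A \sqsubseteq X$ for every $X \in \{A,A',B_1,B_2,M\}$ and every $k \geq 1$ (witnessed by models putting every node into a single $B_i$), the root of the canonical model of $\exists r^k . A$ carries the \emph{empty} label for all $k \geq 1$. Because $\EL$ canonical models are forest-shaped with fresh existential witnesses, information flows only upward along existing edges, so every node $e_1,\dots,e_{m-1}$ of the canonical model of $C_m$ is empty; when $m > d$ the root therefore cannot ``see'' the $A$ sitting at depth $m$, and its label coincides with that of the root of $A' \sqcap \exists r^m . \top$. Completeness puts $M$ on the former, whereas $\Omc_S \not\models A' \sqcap \exists r^m . \top \sqsubseteq M$, so soundness forbids $M$ on the latter: a contradiction.

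I expect the main obstacle to be exactly this last locality argument, namely pinning down that a finite $\EL$ ontology over $\Sigma$ cannot realise the unbounded, inherently \emph{disjunctive} upward propagation of $\Omc_S$; the clean route is the soundness-driven fact that any single $\Sigma$-name serving as a propagation channel would already entail the unsound subsumption $\exists r^k . A \sqsubseteq X$, which is what collapses all intermediate labels to the empty set. A secondary care point is the disjunct assignment in the completeness proof, where the choice between $B_1$ and $B_2$ must be coordinated so as not to accidentally satisfy $D$ at the root.
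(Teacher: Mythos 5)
Your positive half uses exactly the paper's approximation (your $F$ is its $X_{B_1 \sqcup B_2}$), and your soundness argument---expanding a model of $\Omc_S$ by interpreting $F$ as ``$r$-reaches an $A$-element''---is correct. But your completeness sketch defers precisely the crux: you must show that whenever $\Omc_T \not\models C \sqsubseteq D$ there \emph{exists} a disjunct assignment on the canonical model keeping $D$ false at the root, and ``the choice must be coordinated'' is the whole difficulty, not a care point. The natural attempt (pick disjuncts according to some countermodel of $\Omc_S$) is circular, since the existence of that countermodel is what is being proved. The claim is provable, but it needs a real mechanism: since all right-hand sides in $\Omc_T$ are atomic, the canonical model is just the tree of $C$ with Horn-closed labels, and an added $B_i$ at a node depends only on the colouring of $A$-nodes in that node's subtree; this independence lets you \emph{mix} falsifying assignments of disjoint subtrees (if each child of a node admits an assignment falsifying $D'$, one combined assignment falsifies $\exists r.D'$), and an induction on the structure of $D$ then goes through, with the concept-name case for $D = M$ handled exactly by $F \sqcap A' \sqsubseteq M$. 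Without something of this kind the positive half is asserted rather than proved. (For calibration: the paper's appendix proves only the non-projective half; the projective half is backed by its general scheme of Theorem~\ref{thm:corr}.)

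Your negative half takes a genuinely different route from the paper. The paper observes that replacing $A \sqsubseteq B_1 \sqcup B_2$ by the infinite family $\{A' \sqcap \exists r^n.A \sqsubseteq M \mid n \geq 0\}$ yields an infinite non-projective approximation $\Omc$, and then applies compactness: a finite non-projective approximation would yield a finite subset $\Omc' \subseteq \Omc$ that is itself an approximation, yet with $n$ maximal in $\Omc'$ one has $\Omc' \not\models A' \sqcap \exists r^{n+1}.A \sqsubseteq M$. Your locality argument aims at the same contradiction, and your sub-lemma (soundness forces the empty label on the root of the canonical model of $\exists r^k.A$, hence on all $e_1,\dots,e_{m-1}$) is correct. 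However, the step ``information flows only upward along existing edges, so the root's label coincides with that of $A' \sqcap \exists r^m.\top$'' is not sound as stated: in the oblivious $\EL$ chase, CIs with existential right-hand sides---including sound ones such as the tautology $\exists r^d.A \sqsubseteq \exists r^d.A$---can fire at the empty-labelled intermediate nodes and graft fresh subtrees below them, which left-hand sides matched at the root can inspect; an empty label does not mean absence of chase activity. The gap is fixable by one more appeal to soundness: any grafted $G$ below $e_j$ satisfies $\Omc_S \models \exists r^{m-j}.A \sqsubseteq \exists r.G$, and since homomorphisms into the two minimal chain models (all-$B_1$ and all-$B_2$) are depth-preserving, grafted nodes carry no $\Sigma$-concept names at depth below $m$; only with this invariant does the comparison of the two chase roots up to depth $d$, and hence your contradiction, go through. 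The paper's compactness argument buys brevity by avoiding this chase analysis altogether; your argument, once patched, is more explicit about \emph{why} no bounded-depth $\Sigma$-channel can relay the deep $A$.
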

In fact, a finite projective \EL approximation $\Omc_T$ of the
ontology $\Omc_S$ from Proposition~\ref{ex:1} is obtained from
$\Omc_S$ by replacing the CI in the first line with 
  %
  $$
    A \sqsubseteq X_{B_1 \sqcup B_2}, \ 
     \exists r . X_{B_1 \sqcup B_2} \sqsubseteq X_{B_1 \sqcup
      B_2},\ 
  X_{B_1 \sqcup B_2} \sqcap A' \sqsubseteq M.
  $$
  The intuitive reason for why $\Omc_S$ has no finite non-projective \EL
  approximation is that $\Omc_S \models A' \sqcap
  \exists r^{n} . A \sqsubseteq M$ for all $n \geq
  0$. 
%
  Proposition~\ref{ex:1} indicates that projective approximations are
  preferable. 
  Since they also seem perfectly acceptable from an application
  viewpoint, we concentrate on the projective case and from now
  on mean projective approximations whenever we speak of
  approximations.

  To illustrate the challenges of \ELU-to-\EL approximation, it is
  instructive to consider a candidate approximation scheme that might
  be suggested by Proposition~\ref{ex:1}.
  We use $\mn{sub}(\Omc_S)$ to denote
  the set of all subconcepts of (concepts in) the ontology $\Omc_S$
  and $\mn{sub}^-(\Omc_S)$ to denote the restriction of $\mn{sub}(\Omc_S)$
  to concept names and existential restrictions $\exists r . C$.  
%
\begin{figure}
  \centering 
   \begin{boxedminipage}{\columnwidth}
     \vspace*{-2mm}
  $$
  \begin{array}{rcll}
      C &\sqsubseteq& X_C \\[0.5mm]
      X_{D_1} \sqcap C &\sqsubseteq& X_{D_2} &
        \text{ if } \Omc_S \models D_1 \sqcap C \sqsubseteq D_2 \\[0.5mm]
      X_{D_1} \sqcap X_{D_2} &\sqsubseteq& X_{D_3} &
        \text{ if } \Omc_S \models D_1 \sqcap D_2 \sqsubseteq D_3 \\[0.5mm]
      \exists r . X_{D_1} &\sqsubseteq& X_{D_2} &
         \text{ if } \Omc_S \models \exists r . D_1 \sqsubseteq D_2 \\[0.5mm]
      X_{D_1} &\sqsubseteq& \exists r . X_{D_2} &
         \text{ if } \Omc_S \models D_1 \sqsubseteq \exists r . D_2 \\[0.5mm]
      X_{D_1} &\sqsubseteq& C &
         \text{ if } \Omc_S \models D_1 \sqsubseteq C \\[0.5mm]
  \end{array}
  $$
  \end{boxedminipage}
  \caption{Candidate \EL approximation $\Omc_T$.}
\label{fig:first}
\vspace*{-3mm}
\end{figure}
%
%
%
We use $\mn{Con}(\Omc_S)$ to denote the set of all  non-empty
conjunctions of concepts from $\mn{sub}^-(\Omc_S)$ without
repetitions
%
%
and $\mn{Dis}(\Omc_S)$ to mean the set of all 
disjunctions
of concepts from $\mn{Con}(\Omc_S)$ without repetitions.
%
Now, a (finite projective) candidate \EL approximation scheme is given
in Figure~\ref{fig:first} where $C$ ranges over $\mn{sub}(\Omc_S)$ and
$D_1,D_2,D_3$ range over $\mn{Dis}(\Omc_S)$.  It indeed yields an
approximation when applied to the ontology $\Omc_S$ in
Proposition~\ref{ex:1}. There are, however, two major
problems. First, the syntactic structure of $\Omc_S$ is lost
completely, which is undesirable in practice where ontologies are the
result of a careful modeling effort. We could include all \EL concept
inclusions from $\Omc_S$ in the approximation, but this would be
purely cosmetic since all such CIs are already implied.
%
%
Second, the approximation is incomplete in general. In fact, finite 
approximations need not exist also in the projective case 
while the
approximation scheme in Figure~\ref{fig:first} is always finite.  
\begin{restatable}{prop}{extwoprop}
\label{ex:2}
The \ELU ontology
%
  $$
  \begin{array}{rrcl}
  \Omc_S = \{ & A &\sqsubseteq& B_1 \sqcup B_2, \\[0.5mm]
  & \exists r . B_2 &\sqsubseteq& \exists r . (B_1 \sqcap L), \\[0.5mm]
  & L &\sqsubseteq&\exists s . L \qquad\qquad \}
  \end{array} 
 $$
 has no finite 
\EL approximation.
\end{restatable}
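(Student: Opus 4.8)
The plan is to exhibit an infinite family of \EL subsumptions that hold in $\Omc_S$ but that no finite \EL ontology can entail without also entailing something unsound. For $n \geq 0$ write $F_n$ for the \EL concept $\underbrace{\exists s . \cdots \exists s .}_{n}\top$, i.e.\ the assertion of an $s$-chain of length (at least) $n$. The family I would use is
$$\alpha_n : \quad \exists r .(A \sqcap F_n) \sqsubseteq \exists r .(B_1 \sqcap F_n), \qquad n \geq 0.$$
First I would verify $\Omc_S \models \alpha_n$ by the case analysis forced by the disjunction. Let $x$ have an $r$-successor $y \in A \sqcap F_n$. If $y \in B_1$, then $y$ itself witnesses the right-hand side. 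If $y \in B_2$, then $x \in \exists r . B_2$, so the second CI yields an $r$-successor $w \in B_1 \sqcap L$; since $L \sqsubseteq \exists s . L$ forces an infinite $s$-chain at $w$, we get $w \in B_1 \sqcap F_n$, and again the right-hand side holds. Thus $\alpha_n$ is a genuine consequence of $\Omc_S$ that a complete approximation $\Omc_T$ must entail for every $n$; note that it is exactly the disjunction $A \sqsubseteq B_1 \sqcup B_2$ that makes $\alpha_n$ nontrivial.

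The soundness barrier, which I would establish next, is that the ``one longer'' variant
$$\alpha'_n : \quad \exists r .(A \sqcap F_n) \sqsubseteq \exists r .(B_1 \sqcap F_{n+1})$$
is \emph{not} a consequence of $\Omc_S$. A countermodel is immediate: interpret a single $r$-edge from $x$ to $y$ with $y \in A \sqcap B_1$, give $y$ an $s$-chain of length exactly $n$, and leave $B_2$ and $L$ empty; this satisfies all three CIs and realizes the left-hand side, but the only $r$-successor $y$ is a $B_1$-element whose $s$-chain has length exactly $n$. Hence any sound $\Omc_T$ must entail every $\alpha_n$ but no $\alpha'_n$: informally, it has to transfer an $s$-chain of length \emph{exactly} $n$ from the $A$-successor to a $B_1$-successor, for every $n$ simultaneously.

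To turn this into a contradiction I would assume a finite \EL approximation $\Omc_T$ and reason in the canonical (minimal) model $\Imc_n$ of $\Omc_T$ with respect to the left-hand side concept $\exists r .(A \sqcap F_n)$, using the standard fact that $\Omc_T \models \exists r .(A \sqcap F_n) \sqsubseteq D$ iff the root of $\Imc_n$ satisfies $D$. By completeness the root has an $r$-successor $b \in B_1$ carrying an $s$-chain of length at least $n$. The key structural observation is that $b$ cannot be the input successor $a_n$ of the root: since $\Omc_S \not\models A \sqcap F_n \sqsubseteq B_1$ (choose $B_2$), soundness gives $a_n \notin B_1^{\Imc_n}$, so $b$ is a fresh element introduced by the chase of $\Omc_T$. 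Because \EL has no node-merging constructs, the entire $s$-subtree below a fresh element is determined by its \EL-type and $\Omc_T$ alone, independently of the input; hence the maximal $s$-chain length below any fresh $B_1$-successor is a function of its type and ranges over a finite subset of $\mathbb{N} \cup \{\infty\}$. A dichotomy then closes the argument: if some reachable type yields an infinite $s$-chain, then for the corresponding input $\Omc_T$ entails the forbidden $\alpha'_n$, contradicting soundness; otherwise these lengths are bounded by some $M$, and then $\Omc_T \not\models \alpha_{M+1}$ because its only candidate $B_1$-successor has an $s$-chain shorter than $M+1$, contradicting completeness.

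The step I expect to be the main obstacle is making the freshness observation fully rigorous, i.e.\ proving that the $B_1$-successor witnessing $\alpha_n$ is genuinely generated by $\Omc_T$ and that its $s$-structure is input-independent. This is where the absence of disjointness and of $\bot$ in \EL is crucial: one must rule out that $\Omc_T$ somehow ``pipes'' the given $s$-chain of length $n$ from the $A$-successor onto a separate $B_1$-successor, which would let a finite ontology track $n$ exactly. I would handle this by appealing to the tree-shaped, merge-free structure of the \EL canonical model, together with the soundness facts $\Omc_S \not\models A \sqcap F_n \sqsubseteq B_1$ and $\Omc_S \not\models A \sqcap F_n \sqsubseteq B_2$, which keep the input successor out of $B_1$ and $B_2$ and thereby sever any interaction between the input chain and the freshly created $B_1$-witness.
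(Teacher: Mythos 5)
Your proof is correct, and its skeleton coincides with the paper's: the same family of entailed subsumptions $\exists r.(A \sqcap \exists s^n.\top) \sqsubseteq \exists r.(B_1 \sqcap \exists s^n.\top)$, the same non-entailment of the ``one longer'' variants, and the same pivotal fact $\Omc_S \not\models A \sqcap \exists s^n.\top \sqsubseteq B_1$ that forces the $B_1$-witness away from the input successor. Where you diverge is the finishing step. The paper invokes a decomposition lemma from Lutz and Wolter (Lemma~\ref{lem:extra} in the appendix): if $\Omc \models C \sqsubseteq \exists r.D$, then either some top-level conjunct $\exists r.C'$ of $C$ satisfies $\Omc \models C' \sqsubseteq D$, or some $C' \in \mn{sub}(\Omc)$ satisfies $\Omc \models C \sqsubseteq \exists r.C'$ and $\Omc \models C' \sqsubseteq D$. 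With the first case ruled out, this immediately yields for every $n$ a subconcept $C_n \in \mn{sub}(\Omc_T)$ with $\Omc_T \models C_n \sqsubseteq B_1 \sqcap \exists s^n.\top$ but $\Omc_T \not\models C_n \sqsubseteq B_1 \sqcap \exists s^m.\top$ for $m>n$; the $C_n$ are pairwise distinct, so $\mn{sub}(\Omc_T)$ is infinite, with no finiteness assumption or proof by contradiction needed. Your chase-based argument --- the witness must be fresh; the subtree below a fresh element depends only on the subconcept of $\Omc_T$ it was introduced for; there are finitely many such subconcepts; dichotomy between an infinite $s$-chain (breaking soundness via $\alpha'_n$) and a uniform bound $M$ (breaking completeness at $\alpha_{M+1}$) --- is essentially a self-contained re-proof of the content of that lemma, specialized to this ontology. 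What the paper's route buys is brevity and a slightly sharper conclusion (infinitely many pairwise non-equivalent subconcepts of any approximation); what yours buys is independence from the cited lemma and an explicit model-theoretic picture. The step you flag as delicate, input-independence of the $s$-structure below the fresh witness, is indeed the crux and is exactly what the cited lemma packages; your ditree-locality argument (\EL concept membership at a node of the ditree-shaped chase depends only on the subtree below it) is the right way to discharge it, and it also handles the projective case correctly, since you only ever apply soundness to concepts over $\mn{sig}(\Omc_S)$.
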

%
The intuitive
reason for why $\Omc_S$ has no finite \EL approximation is
that
$
\Omc_S \models \exists r.(A \sqcap \exists s^n . \top) \sqsubseteq \exists r.(B_1 \sqcap \exists s^n.\top) 
$
for all $n \geq 0$. 

The ontology in Proposition~\ref{ex:2} can be varied to show that even
bounded depth approximations can get very large. The function
$\mn{tower}: \mathbb{N} \times \mathbb{N} \rightarrow \mathbb{N}$ is
defined as $\mn{tower}(0,n) := n$ and
$\mn{tower}(k+1,n):=2^{\mn{tower}(k,n)}$.
\begin{restatable}{prop}{propnonelem}
\label{prop:nonelem}
  Let $\Omc^{n}_S$ be obtained from the ontology $\Omc_{S}$ in
  Proposition~\ref{ex:2} by replacing the bottommost CI with 
  $$
L 
  \sqsubseteq A_1 \sqcap \hat{A}_1 
  \sqcap \cdots \sqcap A_n \sqcap \hat A_n \sqcap \exists r_1.L \sqcap \exists r_2 .L
 $$
 %
 Then for all $n,\ell \geq 1$ and any $\ell$-bounded 
 $\mathcal{EL}$ approximation $\Omc_T$ of $\Omc^n_S$,
 $||\Omc_T|| \geq \mn{tower}(\ell,n)$.
\end{restatable}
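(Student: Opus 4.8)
The plan is to exhibit, for each $\ell$, a family $\mathcal{F}$ of $\mn{tower}(\ell,n)$ pairwise non-equivalent $\EL$ concepts of depth $\ell-1$ over $\{A_i,\hat A_i\mid i\le n\}\cup\{r_1,r_2\}$ that are all entailed by $L$, and then to show that each member of $\mathcal{F}$ forces its own concept inclusion into any $\ell$-bounded approximation $\Omc_T$. First I would isolate the subsumptions that do the work. Exactly as in the intuition for Proposition~\ref{ex:2}, for every $C$ with $\Omc^n_S\models L\sqsubseteq C$ one has $\Omc^n_S\models \exists r.(A\sqcap C)\sqsubseteq \exists r.(B_1\sqcap C)$: an $r$-successor satisfying $A\sqcap C$ is either already a $B_1$-successor, or it is a $B_2$-successor, in which case $\exists r.B_2\sqsubseteq\exists r.(B_1\sqcap L)$ together with $\Omc^n_S\models L\sqsubseteq C$ produces a fresh $B_1\sqcap C$ successor. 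More precisely, I would prove the exact characterisation, for $C,D$ over $\{A_i,\hat A_i,r_1,r_2\}$ of depth $\le\ell-1$, that $\Omc^n_S\models\exists r.(A\sqcap C)\sqsubseteq\exists r.(B_1\sqcap D)$ holds iff $C\sqsubseteq D$ and $\Omc^n_S\models L\sqsubseteq D$. The ``only if'' is witnessed by two models of $\Omc^n_S$: one in which the $A$-successor is chosen to be $B_1$ (forcing $C\sqsubseteq D$, since it is then the only $B_1$-successor and its subtree is exactly the tree of $C$), and one in which it is chosen to be $B_2$ (forcing $\Omc^n_S\models L\sqsubseteq D$, since the only other $B_1$-successor is the freshly generated $B_1\sqcap L$ element). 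This characterisation is the key technical device.

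Next I would build $\mathcal{F}$ by a depth recursion in which each level exponentiates the count. At level $0$ take $\{C_S : S\subseteq[n]\}$ with $C_S=\bigsqcap_{i\in S}A_i\sqcap\bigsqcap_{i\notin S}\hat A_i$; these $2^n=\mn{tower}(1,n)$ concepts form an antichain (distinct $S$ impose incomparable sets of required atoms) and each is entailed by $L$ because $L$ forces all $A_i,\hat A_i$. Given an antichain $\{D_1,\dots,D_m\}$ at level $d$, I would lift it to level $d+1$ by encoding each subset $T\subseteq[m]$ as $E_T=\bigsqcap_{j\in T}\exists r_1.D_j\sqcap\bigsqcap_{j\notin T}\exists r_2.D_j$. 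Using that the $D_j$ form an antichain and that $r_1,r_2$ separate the ``in $T$'' from the ``not in $T$'' witnesses, one checks that $E_T\sqsubseteq E_{T'}$ iff $T=T'$, so the $E_T$ form an antichain of size $2^m$; and each $E_T$ is entailed by $L$ since in the universal model of $L$ every node has an $r_1$- and an $r_2$-successor again satisfying $L$, into which all the (inductively $L$-entailed) $D_j$ map. Hence $|\mathcal{F}_d|=\mn{tower}(d+1,n)$, and the level $\ell-1$ family $\mathcal{F}=\mathcal{F}_{\ell-1}$ has $\mn{tower}(\ell,n)$ members, each of depth $\ell-1$.

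Finally I would turn the characterisation into a size lower bound via the universal model. Fix an $\ell$-bounded approximation $\Omc_T$ and, for $C\in\mathcal{F}$, let $\mathcal{U}_C$ be the universal model of $\exists r.(A\sqcap C)$ w.r.t.\ $\Omc_T$. Completeness and the case $D=C$ of the characterisation give $\Omc_T\models\exists r.(A\sqcap C)\sqsubseteq\exists r.(B_1\sqcap C)$, so the root of $\mathcal{U}_C$ has an $r$-successor $w_C$ with $w_C\models B_1\sqcap C$. I would argue that (i) $w_C$ is not the $A$-successor supplied by the input concept: that successor realises the closure of $A\sqcap C$ under $\Omc_T$, and it satisfies $B_1$ only if $\Omc_T\models A\sqcap C\sqsubseteq B_1$, which by soundness forces $\Omc^n_S\models A\sqcap C\sqsubseteq B_1$, false since an $A\sqcap C$-element may be chosen to be $B_2$. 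Hence $w_C$ is fresh, generated by some existential CI $E\sqsubseteq\exists r.F$ of $\Omc_T$ with filler $F=F_C$. Next, (ii) the subtree type of $w_C$ within depth $\ell-1$ over $\{A_i,\hat A_i,r_1,r_2\}$ is exactly $\{D : C\sqsubseteq D\}$: ``$\supseteq$'' because $w_C\models C$, and ``$\subseteq$'' because $w_C\models D$ yields $\Omc_T\models\exists r.(A\sqcap C)\sqsubseteq\exists r.(B_1\sqcap D)$, whence by soundness and the characterisation $C\sqsubseteq D$. Since in an $\EL$ universal model the subtree type of an anonymous element depends only on the filler of its generating CI and on $\Omc_T$, the filler $F_C$ determines this type, and by (ii) the type determines $C$ up to equivalence. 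Hence $C\mapsto F_C$ is injective, and since the $F_C$ are then pairwise distinct subconcepts occurring in $\Omc_T$, we get $||\Omc_T||\ge|\mathcal{F}|=\mn{tower}(\ell,n)$.

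The main obstacle is step (ii) together with the claim that an anonymous element's subtree type depends only on the filler of the single existential CI creating it; this is where soundness (no overgeneration) must be combined carefully with the tree shape of $\EL$ universal models to rule out that $\Omc_T$ ``shares'' one fresh successor among several $C$'s or silently reuses the input $A$-successor. The counting and the two-model characterisation are routine by comparison, though the antichain bookkeeping at the base level is exactly what makes the pairs $A_i,\hat A_i$ necessary.
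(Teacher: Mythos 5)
Your proposal is correct and follows essentially the same route as the paper's proof: a tower-sized antichain of depth-$(\ell-1)$ concepts over $\{A_i,\hat A_i, r_1, r_2\}$ all entailed by $L$, the subsumptions $\exists r.(A \sqcap E) \sqsubseteq \exists r.(B_1 \sqcap E)$ together with the corresponding non-subsumptions for incomparable $E'$ and the failure of $A \sqcap E \sqsubseteq B_1$, and then the extraction of a distinct subconcept of $\Omc_T$ for each antichain member, which bounds $||\Omc_T||$ from below. The only differences are in presentation: you spell out the doubly-recursive antichain construction that the paper dismisses as straightforward, and you re-derive the extraction step via canonical models where the paper simply cites Lemma~\ref{lem:extra} from \cite{DBLP:journals/jsc/LutzW10}.
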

\paragraph{A Complete Approximation.} 
We present a more careful approximation scheme that aims to preserve
the structure of $\Omc_S$, is complete,  and yields a finite
approximation in many practical cases.
Let $\Omc_S$ be an \ELU ontology to be approximated.  As a
preliminary, we assume that for all CIs $C \sqsubseteq D \in \Omc_S$,
$C$ is an \EL concept. If this is not the case, then we can
rewrite $\Omc_S$ by exhaustively replacing every disjunction $C \sqcup
D$ that occurs (possibly as a subconcept) on the left-hand side of a
concept inclusion in $\Omc_S$ with a fresh concept name $X_{C \sqcup
  D}$ and adding the inclusions $C \sqsubseteq X_{C \sqcup D}$ and $D
\sqsubseteq X_{C \sqcup D}$. It is not hard to see that the resulting
ontology $\Omc'_S$ is of size polynomial in $||\Omc_S||$ and a
conservative extension of $\Omc_S$ in the sense that $\Omc_S \models C
\sqsubseteq D$ iff $\Omc'_S \models C \sqsubseteq D$ for all \ELU
concepts $C,D$ over $\mn{sig}(\Omc_S)$.  Consequently, every \EL
approximation of $\Omc'_S$ is also a projective \EL approximation
of~$\Omc_S$ and we can work with $\Omc'_S$ in place of~$\Omc_S$.

Let $\ell \in \mathbb{N} \cup \{ \omega \}$. The proposed \EL
approximation $\Omc^\ell_T$ of $\Omc_S$ is given in
Figure~\ref{fig:second} where $D_1,D_2$ range over $\mn{Dis}(\Omc_S)$
and $D$ ranges over $\mn{Dis}^-(\Omc_S)$, the set of all disjunctions
in $\mn{Dis}(\Omc_S)$ that have at least two disjuncts. 
\begin{figure}[t]
   \begin{boxedminipage}{\columnwidth}
     \begin{center}
     \vspace*{-2mm} 
  $$
  \begin{array}{rcll}
    C &\sqsubseteq& \mn{DNF}(E)^\uparrow & \text{if } C \sqsubseteq E
  \in \Omc_S \\[0.5mm]
  X_{D} \sqcap D_1^\uparrow &\sqsubseteq&
  D_2^\uparrow & \text{if } \Omc_S \models D \sqcap D_1 \sqsubseteq
  D_2\\[0.5mm]
 \exists r . X_{D} &\sqsubseteq& D_1^\uparrow & \text{if }
   \Omc_S \models \exists r . D \sqsubseteq D_1 \\[0.5mm]
   F^\uparrow &\sqsubseteq& \exists r . G & \text{if } \Omc_S \models F \sqsubseteq
  \exists r . G 
  \end{array}
  $$
  \end{center}
  where in the last line
  \begin{itemize}

  \item $F$ is an \EL concept over $\mn{sig}(\Omc_S)$ decorated with 
    disjunctions from $\mn{Dis}(\Omc_S)$ at leaves and

  \item $G$ is an     $\Omc_S$-generatable \EL concept over $\mn{sig}(\Omc_S)$

  \end{itemize}
  such that $\mn{depth}(F) \leq \mn{depth}(G) < \ell$.
  \end{boxedminipage}
  \caption{$\ell$-bounded \EL approximation $\Omc^\ell_T$.}
\label{fig:second}
\vspace*{-3mm}
\end{figure}
We still have to define the notation and terminology used in the
figure.  For an \ELU concept $C$ such that all disjunctions in $C$ are
from $\mn{Dis}(\Omc_S)$, we use $C^\uparrow$ to denote the \EL concept
obtained from $C$ by replacing every outermost
$D \in \mn{Dis}^-(\Omc_S)$ with a fresh concept name $X_D$. Set
$\mn{DNF}(C)=C$ if $C$ is a concept name or of the form
$\exists r . D$,
$\mn{DNF}(C_1 \sqcap C_2)=\mn{DNF}(C_1) \sqcap \mn{DNF}(C_2)$, and
define $\mn{DNF}(C_1 \sqcup C_2)$ to be the \ELU-concept obtained by
converting $C_1 \sqcup C_2$ into disjunctive normal form (DNF),
treating existential restrictions $\exists r. D$ as atomic concepts,
that is, the argument $D$ is not modified. Note that while
$||\mn{DNF}(C)||$ may be exponential in $||C||$, we have
$||\mn{DNF}(C)^\uparrow|| \leq ||C||$.  By \emph{decorating an \EL
  concept $C$ with disjunctions from $\mn{Dis}(\Omc_S)$ at leaves}, we
mean to replace subconcepts $\exists r . E$ of~$C$ with $E$ of depth~0 by
$\exists r. (E \sqcap D)$, $D \in \mn{Dis}(\Omc_S)$. As a special
case, we can replace $C$ with $C \sqcap D$, $D \in \mn{Dis}(\Omc_S)$,
if $C$ is of depth~0.  \looseness=-1
\begin{defn}
  
An \EL concept $C$ is
\emph{$\Omc_S$-generatable} if there is an $\exists r . D \in
\mn{sub}(\Omc_S)$ that occurs on the right-hand side of a CI in
$\Omc_S$ and satisfies $\Omc_S \models D \sqsubseteq C$.
\end{defn}

Let us explain the proposed approximation.  The first three lines of
Figure~\ref{fig:second} can be viewed as a more careful version of the
first four lines of Figure~\ref{fig:first}. In the first line, we
preserve the structure of $\Omc_S$ as long as it lies outside the
scope of a disjunction operator, thanks to the careful definition of
$\mn{DNF}(C)$.  This is not cosmetic as in the candidate approximation
in Figure~\ref{fig:first}: since we introduce the concept names $X_D$
only when a disjunction is `derived' (first line) and only for
disjunctions $D \in \mn{Dis}^-(\Omc_S)$, $\Omc^\ell_T$ is no longer
guaranteed to be an approximation when the first line in
Figure~\ref{fig:second} is dropped.
The last line of the approximation addresses the effect
illustrated by Proposition~\ref{ex:2}. It is strong enough so that a
counterpart of the second last line in Figure~\ref{fig:first} is not
needed.
An example application of our approximation scheme is given in
arxive version: put the example right here.

%
An interesting aspect of our approximation is that it turns out to be
finite in many practical cases. In fact, 
it is easy to see that $\Omc^\ell_T$ is
finite for all $\ell < \omega$
and that $\Omc^\omega_T$ is finite if and only if there are only
finitely many \EL concepts that are $\Omc_S$-generatable, up to
logical equivalence; we then say that $\Omc_S$ is \emph{finitely
  generating}. Since ontologies from practice tend to have a simple
structure, one might expect that they often enjoy this
property. Below, we report about a case study that confirms this
expectation.  

How does the approximation scheme in Figure~\ref{fig:second} relate to
the examples given above?  For the ontologies $\Omc_S$ in
Example~\ref{ex:0} and in Proposition~\ref{ex:1}, our approximation
$\Omc^\omega_T$ contains all CIs in the approximation $\Omc_T$ given
in place.  Of course, $\Omc^\omega_T$ also contains a lot of
additional CIs that, however, do not result in any new consequences $C
\sqsubseteq D$ with $C, D$ \EL concepts over $\mn{sig}(\Omc_S)$. It
seems very difficult to identify up front those CIs that are really
needed. We can remove them after constructing $\Omc^\omega_T$ by
repeatedly deciding conservative extensions
\cite{DBLP:journals/jsc/LutzW10}, but this is not practical given the
size of $\Omc^\omega_T$. Nevertheless, both ontologies $\Omc_S$ are
finitely generating and thus in both cases $\Omc^\omega_T$ is finite. In
Example~\ref{ex:0}, the $\Omc_S$-generatable concepts are $\top$,
$\mn{MainJob}$, $\mn{PartTime}$, and $\mn{MainJob} \sqcap \mn{PartTime}$
(up to logical equivalence) while there are no $\Omc_S$-generatable
concepts for Proposition~\ref{ex:1}. For Proposition~\ref{ex:2}, there
are infinitely many $\Omc_S$-generatable concepts such as $\exists s^n
. \top$ for all $n \geq 0$.






\paragraph{Examples.} 
To illustrate the proposed approximation scheme, we pick up some of
the previous examples again.

Recall that the \ELU ontology $\Omc_S$ from Example~\ref{ex:0} has
a finite \EL approximation $\Omc_T$, given in place, and repeated here
for the reader's convenience:
$$
  \begin{array}{@{}r@{\;}c@{\;}l@{}l}
    \Omc_S = \{  \hspace*{17mm} \mn{Job} &\sqsubseteq& \mn{MainJob} \sqcup \mn{SideJob} \\[0.5mm]
  \exists \mn{job} . \mn{SideJob} &\sqsubseteq& \exists \mn{job}
  . (\mn{MainJob} \sqcap \mn{PartTime}) \ \} \\[4mm]
    \Omc_T = \{ \  \exists \mn{job} . \mn{SideJob} &\sqsubseteq& \exists \mn{job}
  . (\mn{MainJob} \sqcap \mn{PartTime}) \\[0.5mm]
 \exists \mn{job} . \mn{Job}
&\sqsubseteq& 
\exists \mn{job} . \mn{MainJob} \\[0.5mm]
 \exists \mn{job} . (\mn{Job} \sqcap \mn{PartTime}) 
&\sqsubseteq& 
\exists \mn{job} . (\mn{MainJob} \sqcap \mn{PartTime}) \ \}. 
\end{array}
$$
The first CI in $\Omc_T$ is directly taken over from $\Omc_S$, via the
first line of the approximation scheme in Figure~\ref{fig:second}.
The second and third CI in $\Omc_T$ are instances of the fourth line
in Figure~\ref{fig:second} since, as already noted, $\mn{MainJob}$ and
$\mn{MainJob} \sqcap \mn{PartTime}$ are both $\Omc_S$-generatable.

Of course, the approximation scheme in Figure~\ref{fig:second}
introduces many additional CIs that, however, are not needed in this
particular case for the approximation to be complete. Let us still
consider a few of them. The first line creates
$$
  \mn{Job} \sqsubseteq X_{\mn{MainJob} \sqcup \mn{SideJob}}.
$$
Note that we do not need the DNF conversion from the first line of
Figure~\ref{fig:second} here since $\mn{MainJob} \sqcup \mn{SideJob}$ is
already in DNF. In fact, we only need this conversion if the
right-hand side of a CI in $\Omc_S$ contains a disjunction nested
inside a conjunction nested inside a disjunction, which should be rare
in practice. Then, for example, Line~2 of Figure~\ref{fig:second}
yields uninteresting CIs such as 
$$
X_{\mn{MainJob} \sqcup \mn{SideJob}} \sqsubseteq \mn{MainJob} \sqcup
\mn{SideJob}
$$
which clearly do not add new knowledge.  Line~3
of Figure~\ref{fig:second} also does not yield any interesting CIs
$$
  \exists \mn{job} . X_{\mn{MainJob} \sqcup \mn{SideJob}} \sqsubseteq
  D^\uparrow
$$
as every disjunction $D \in \mn{Dis}(\Omc_S)$ with $\Omc _S \models
\exists \mn{job} . \mn{MainJob} \sqcup \mn{SideJob} \sqsubseteq D$
is tautological. Additional concept names $X_D$ are also introduced,
e.g.\ via Line~2 and the~CI
$$
X_{\mn{MainJob} \sqcup \mn{SideJob}} \sqcap \mn{Job} \sqsubseteq
X_{(\mn{MainJob} \sqcap \mn{Job}) \sqcup (\mn{SideJob} \sqcap \mn{Job})},
$$
triggering new applications of the second and third line in turn, but
no new knowledge in $\mn{sig}(\Omc_S)$ is ever derived.

\medskip

To see non-redundant applications of Lines~2 and~3 of
Figure~\ref{fig:second}, reconsider the ontology $\Omc_S$ given in
Proposition~\ref{ex:1} and its approximation given below that
proposition, here again repeated for convenience:
  $$
  \begin{array}{r@{}rcl@{}l}
    \Omc_S = \{ & A &\sqsubseteq& B_1 \sqcup B_2,\\[0.5mm]
& \exists r . B_i &\sqsubseteq& B_i, & \text{for } i \in \{1,2\}\\[0.5mm]
&    B_i \sqcap A' &\sqsubseteq& M& \text{for } i \in \{1,2\} \ \}.
  \end{array}
  $$
  and $\Omc_T$ is obtained from $\Omc_S$ by replacing the CI in the
  first line with
  %
  $$
    A \sqsubseteq X_{B_1 \sqcup B_2}, \ 
     \exists r . X_{B_1 \sqcup B_2} \sqsubseteq X_{B_1 \sqcup
      B_2},\ 
  X_{B_1 \sqcup B_2} \sqcap A' \sqsubseteq M.
  $$
  The first additional CI is an instance of Line~1 of
  Figure~\ref{fig:second}, the second CI is an instance of Line~3, and
  the third CI is an instance of Line~2. As already mentioned, there
  are no $\Omc_S$-generatable concepts since $\Omc_S$ does not have
  existential restrictions on the right-hand side of CIs, and thus
  Line~4 of Figure~\ref{fig:second} cannot be applied. If applied
  naively, Line~2 and~3 yield many additional CIs and introduce many
  additional concepts $X_D$, but as in Example~\ref{ex:0}, they
  do not derive any new knowledge in $\mn{sig}(\Omc_S)$.
  
\paragraph{Case Study.} 
We have considered the seven 
non-trivial \ELU ontologies that are part of the Manchester OWL
corpus.\footnote{http://owl.cs.manchester.ac.uk/publications/supporting-material/owlcorpus/}
The size of the ontologies ranges from 113 to 813 concept inclusions
and equalities. All ontologies use disjunction on the right-hand side
of CIs (thus in a non-trivial way) and none of them is acyclic. We
have been able to prove that all these ontologies are finitely
generating and thus the approximation $\Omc^\omega_T$ is finite. Our
proof relies on the following observation.
\begin{restatable}{lemma}{lemCharOgen}
\label{lem:charOgen}
$\Omc_S$ is not finitely generating iff for every $n \geq 0$, there is an
$\exists r . D \in \mn{sub}(\Omc_S)$ that occurs on the right-hand
side of a CI and a sequence $r_1,\dots,r_n$ of role names from
$\Omc_S$ such that $\Omc_S \models D \sqsubseteq \exists r_1 . \cdots
. \exists r_n . \top$.
\end{restatable}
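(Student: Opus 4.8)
The plan is to recast ``finitely generating'' as a statement about the \emph{depth} of $\Omc_S$-generatable concepts. Throughout I work with \EL concepts over the finite signature $\Sigma = \mn{sig}(\Omc_S)$ (and note at the end why generatable concepts may be assumed to be over $\Sigma$). The workhorse is the standard fact that, up to logical equivalence, there are only finitely many \EL concepts over $\Sigma$ of depth at most $d$, for each fixed $d$: an \EL concept is up to logical equivalence a finite $\Sigma$-labelled tree, and bounded depth over a finite signature leaves only finitely many homomorphically reduced such trees. Writing $\mn{rd}(C)$ for the least depth of any concept logically equivalent to $C$ (a logical invariant), this finiteness fact says exactly that $\Omc_S$ is \emph{not} finitely generating iff the set $\{\, \mn{rd}(C) : C \text{ is } \Omc_S\text{-generatable} \,\}$ is unbounded. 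The whole lemma then reduces to showing that this unboundedness is equivalent to the chain condition in the statement.

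Two observations about pure role chains $E_n := \exists r_1 . \cdots . \exists r_n . \top$ drive both directions. First, any \EL concept $C$ of depth $m$ satisfies, purely logically, $C \sqsubseteq \exists s_1 . \cdots . \exists s_m . \top$, where $s_1, \dots, s_m$ are the roles along a longest branch of $C$ (keep that branch and weaken everything off it to $\top$); truncating the chain gives $C \sqsubseteq \exists s_1 . \cdots . \exists s_n . \top$ for every $n \le m$. Second, $\mn{rd}(E_n) = n$: clearly $E_n$ has depth $n$, and no concept $C'$ of depth $< n$ is equivalent to $E_n$, since the tree unfolding of $C'$ is a model whose root has no outgoing role path of length $n$ and hence falsifies $E_n$. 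I would derive both observations from the homomorphism (simulation) characterisation of \EL subsumption $C \sqsubseteq C'$.

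For the ``only if'' direction, assume $\Omc_S$ is not finitely generating and fix $n$. By the finiteness fact there is an $\Omc_S$-generatable $C$ with $\mn{rd}(C) \ge n$; choosing $C$ of minimal depth in its equivalence class gives $\mn{depth}(C) \ge n$. By definition there is some $\exists r . D \in \mn{sub}(\Omc_S)$ on a right-hand side with $\Omc_S \models D \sqsubseteq C$. Reading off the first $n$ roles $r_1, \dots, r_n$ on a longest branch of $C$ and applying the first observation yields $\Omc_S \models D \sqsubseteq C \sqsubseteq \exists r_1 . \cdots . \exists r_n . \top$, the required chain. For the ``if'' direction, assume the chain condition holds for all $n$. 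Then each witnessing $E_n = \exists r_1 . \cdots . \exists r_n . \top$ is $\Omc_S$-generatable via the $\exists r . D$ supplied for that $n$. By the second observation $\mn{rd}(E_n) = n$, so $E_0, E_1, E_2, \dots$ have pairwise distinct $\mn{rd}$-values, hence are pairwise non-equivalent; thus there are infinitely many generatable concepts up to logical equivalence and $\Omc_S$ is not finitely generating.

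The main obstacle, and the step I would write out with care, is justifying that ``depth up to logical equivalence'' is a faithful measure: both the bounded-depth finiteness fact and the equality $\mn{rd}(E_n) = n$ hinge on the homomorphism characterisation of \EL subsumption, and the counting argument is valid only because we range over concepts over the \emph{finite} signature $\Sigma$. The one supporting point to pin down beforehand is that every $\Omc_S$-generatable concept may, without loss of generality, be taken over $\Sigma$ (otherwise bounded depth would already allow infinitely many concepts and the reduction would break); this also guarantees that the extracted roles $r_1, \dots, r_n$ are genuinely role names of $\Omc_S$, as required by the statement.
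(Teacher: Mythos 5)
Your proposal is correct and takes essentially the same approach as the paper: both reduce ``finitely generating'' to boundedness of the depth of $\Omc_S$-generatable concepts, using that there are only finitely many pairwise inequivalent \EL concepts over $\mn{sig}(\Omc_S)$ of bounded depth and that logical equivalence preserves depth, and both pass between a deep generatable concept and the role chain $\exists r_1.\cdots.\exists r_n.\top$ along a longest branch (in each direction via the same witness $\exists r.D$). Your explicit handling of $\mn{rd}$, of $\mn{rd}(E_n)=n$, and of the fact that generatable concepts may be assumed to be over $\mn{sig}(\Omc_S)$ merely spells out steps the paper's proof leaves implicit.
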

In our implementation, we use role inclusions to avoid going through
all of the exponentially many sequences $r_1,\dots,r_n$. 
Lemma~\ref{lem:charOgen} can also be used to show the following.
\begin{restatable}{theorem}{thmdecidable}
\label{thm:decidable}
It is decidable whether a given \ELU-ontology $\Omc_S$ is finitely generating.
\end{restatable}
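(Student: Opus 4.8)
The plan is to use Lemma~\ref{lem:charOgen} to reduce finite generation to a boundedness question that can be settled by computing a single threshold. Let $R$ be the finite set of all concepts $D$ such that some $\exists r . D \in \mn{sub}(\Omc_S)$ occurs on the right-hand side of a CI in $\Omc_S$, and call a number $n \geq 0$ \emph{witnessed} if there are $D \in R$ and role names $r_1, \dots, r_n$ occurring in $\Omc_S$ with $\Omc_S \models D \sqsubseteq \exists r_1 . \cdots . \exists r_n . \top$. By Lemma~\ref{lem:charOgen}, $\Omc_S$ is not finitely generating iff every $n \geq 0$ is witnessed (if $R = \emptyset$ then there are no $\Omc_S$-generatable concepts and $\Omc_S$ is trivially finitely generating). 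The first observation I would record is that the set of witnessed numbers is downward closed: since $\exists r_{n-1} . \exists r_n . \top \sqsubseteq \exists r_{n-1} . \top$ is valid, $n$ witnessed implies $n-1$ witnessed. Hence this set is either all of $\mathbb{N}$ or an initial segment $\{0, \dots, N\}$, and $\Omc_S$ is not finitely generating iff it is unbounded. Note also that membership ``$n$ is witnessed'' is decidable, as there are only finitely many $D \in R$ and, for each fixed $n$, finitely many role sequences of length $n$, and each subsumption $\Omc_S \models D \sqsubseteq \exists r_1 . \cdots . \exists r_n . \top$ is decidable because subsumption between $\ELU$ concepts is decidable.

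Given downward closure, decidability follows once I exhibit a \emph{computable} threshold $N_0$ with the property that if some $n \geq N_0$ is witnessed, then every $n$ is witnessed. The decision procedure is then simply to compute $N_0$ and decide, by the brute-force check above, whether $N_0$ is witnessed: if it is, $\Omc_S$ is not finitely generating; if it is not, then by downward closure no $n \geq N_0$ is witnessed and $\Omc_S$ is finitely generating.

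To obtain $N_0$, I would set up a finite directed graph $\mathcal{G}$ whose nodes are the disjunctions in $\mn{Dis}(\Omc_S)$ and which has an $r$-labelled edge from $\delta$ to $\delta'$ whenever $\Omc_S \models \delta \sqsubseteq \exists r . \delta'$; all edges are computable since $\ELU$-subsumption is decidable, and $\mathcal{G}$ is finite. A short composition argument shows soundness: if there is a path of length $n$ in $\mathcal{G}$ starting at some $D \in R$, then chaining the single-step entailments along the path yields $\Omc_S \models D \sqsubseteq \exists r_1 . \cdots . \exists r_n . \top$, so $n$ is witnessed. The technical heart is the converse correspondence lemma: whenever $n$ is witnessed via $D$, there is a path of length $n$ in $\mathcal{G}$ from $D$. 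Granting this, $N_0 := |\mn{Dis}(\Omc_S)|$ works: a witnessed $n \geq N_0$ yields a $\mathcal{G}$-path on which, by pigeonhole, a node repeats; traversing the resulting loop arbitrarily often and appending the tail produces $\mathcal{G}$-paths of every length (again using composition), so every $n$ is witnessed.

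The main obstacle is the completeness half of the correspondence lemma in the presence of disjunction. Because $\ELU$ forces reasoning by cases, a $D$-element need not have a single canonical $r$-successor: different models may realize the witnessing path through different disjuncts, and the successor requirement that $\Omc_S$ actually forces is a disjunction of possible successor types rather than a single concept. The delicate point is to choose, at each step, a disjunction $\delta' \in \mn{Dis}(\Omc_S)$ that is simultaneously (i) forced as an $r$-successor of the current state and (ii) strong enough to still entail the remaining path $\exists r_{i+1} . \cdots . \exists r_n . \top$, so that the abstraction neither loses nor spuriously gains entailed lengths. Establishing that such a $\delta'$ always exists---essentially that entailment of further existential paths is determined by the forced subconcept type of the successor, which is expressible within $\mn{Dis}(\Omc_S)$---is where the real work lies; once the correspondence is in place, the threshold argument and the decision procedure are routine.
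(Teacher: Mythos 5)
Your overall architecture matches the paper's proof: reduce via Lemma~\ref{lem:charOgen} to boundedness of witnessed chain lengths, abstract the chains into sequences over a finite set of disjunctions, and pump by pigeonhole to obtain a computable threshold. However, your writeup has a declared hole at exactly its load-bearing point: the completeness half of your correspondence lemma (``$n$ witnessed via $D$ implies a length-$n$ path in $\mathcal{G}$ from $D$'') is only asserted (``granting this\dots''), and everything else is, as you say yourself, routine. Without that lemma the threshold $N_0 = |\mn{Dis}(\Omc_S)|$ certifies nothing, since a witnessed $n \geq N_0$ need not yield any path, hence no repeating node and no cycle to traverse. Since turning semantic witnesses into a finitely pumpable structure \emph{is} the content of Theorem~\ref{thm:decidable}, the proposal is incomplete as it stands, even though it correctly isolates where the difficulty lies (reasoning by cases prevents a single canonical successor).

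The missing step is true, and it is instructive to compare how the paper gets it. The paper's proof of Theorem~\ref{thm:decidable} sidesteps your ``choose one forced successor disjunction'' problem by not working with positive disjunctions from $\mn{Dis}(\Omc_S)$ at all: it uses disjunctions of full \emph{$\Omc_S$-types} (maximal satisfiable subsets of the closure of $\mn{sub}(\Omc_S)$ under single negation) and defines the sequence $X_1,\dots,X_n$ semantically, with $X_{i+1}$ collecting \emph{all} types realizable at an $r_{i+1}$-successor of an $X_i$-element in some model. With this exhaustive choice, $\Omc_S \models D \sqsubseteq \forall r_1.X_1$ and $\Omc_S \models X_i \sqsubseteq \forall r_{i+1}.X_{i+1}$ hold by construction, the existential inclusions then follow by induction using the assumed long chain, and pigeonhole on the pairs $(r_i,X_i)$ gives the computable bound $|\mn{sig}(\Omc_S)| \times 2^{2^{||\Omc_S||}}$ and a strictly longer witnessed length. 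Alternatively, your positive-disjunction version can be repaired directly: the successor-step statement you need is precisely Lemma~\ref{lem:semadd} from the appendix --- if $\delta \in \mn{Dis}(\Omc_S)$ is satisfiable w.r.t.\ $\Omc_S$ and $\Omc_S \models \delta \sqsubseteq \exists r . C$ with $C$ an \EL concept, then some $\delta' \in \mn{Dis}(\Omc_S)$ satisfies $\Omc_S \models \delta \sqsubseteq \exists r . \delta'$ and $\Omc_S \models \delta' \sqsubseteq C$. In plain \ELU every concept is satisfiable w.r.t.\ every ontology (there is no $\bot$ and no negation), so the satisfiability proviso is vacuous, and iterating the lemma with $C = \exists r_{i+1} . \cdots . \exists r_n . \top$, starting from $\mn{DNF}(D) \in \mn{Dis}(\Omc_S)$ (note $D$ itself need not lie in $\mn{Dis}(\Omc_S)$, a detail you gloss over), produces exactly the path your argument presupposes. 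Its proof is the model-surgery argument you gesture at: for each model element satisfying $\delta$, some realized positive successor type must already entail $C$, for otherwise one could replace the successor subtrees by countermodels and refute $\Omc_S \models \delta \sqsubseteq \exists r . C$. So the ``real work'' you deferred is genuinely nontrivial, is carried out in the paper (by a different route in the main proof, and in your style in Lemma~\ref{lem:semadd}), but is absent from your proposal.
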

By what was said above, this implies that it is decidable whether the
approximation $\Omc^\omega_T$ from Figure~\ref{fig:second} is finite.

\paragraph{Soundness and Completeness.}
We now establish soundness and completeness of the proposed approximation,
the main result in this section.
\begin{theorem}
\label{thm:corr} 
For every $\ell \in \mathbb{N} \cup \{ \omega \}$, $\Omc^\ell_T$ is an  
$\ell$-bounded \EL approximation of $\Omc_S$.  
\end{theorem}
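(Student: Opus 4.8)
The plan is to establish the two directions of Definition~\ref{def:approx_types} separately: \emph{soundness} ($\Omc^\ell_T \models C \sqsubseteq D \Rightarrow \Omc_S \models C \sqsubseteq D$) by a semantic expansion argument, and \emph{completeness} ($\Omc_S \models C \sqsubseteq D \Rightarrow \Omc^\ell_T \models C \sqsubseteq D$) by a model construction argument driven by the universal model of $\Omc^\ell_T$. Throughout, $C,D$ range over \EL concepts over $\Sigma = \mn{sig}(\Omc_S)$ of depth $\leq \ell$, which in particular do not mention the fresh names $X_D$. We may assume, by the preceding normalization, that all left-hand sides of CIs in $\Omc_S$ are \EL concepts.

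For soundness I would take an arbitrary model $\Imc \models \Omc_S$ and expand it to an interpretation $\Imc'$ over the extended signature by setting $X_D^{\Imc'} := D^\Imc$ for every $D \in \mn{Dis}^-(\Omc_S)$, leaving all $\Sigma$-symbols untouched. The key bookkeeping observation is that this choice makes $\cdot^\uparrow$ semantically transparent: for every \ELU concept $E$ whose disjunctions lie in $\mn{Dis}(\Omc_S)$ we have $(E^\uparrow)^{\Imc'} = E^\Imc$, and likewise $(\mn{DNF}(E)^\uparrow)^{\Imc'} = \mn{DNF}(E)^\Imc = E^\Imc$ since DNF conversion preserves equivalence. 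With this in hand I verify each of the four lines of Figure~\ref{fig:second} in turn: each reduces to its side condition being a genuine consequence of $\Omc_S$ (for instance line~3 becomes $(\exists r . D)^\Imc \subseteq D_1^\Imc$, which holds because $\Omc_S \models \exists r . D \sqsubseteq D_1$ and $\Imc \models \Omc_S$). Hence $\Imc' \models \Omc^\ell_T$, and since $C,D$ avoid the $X_D$ we obtain $C^\Imc = C^{\Imc'} \subseteq D^{\Imc'} = D^\Imc$ whenever $\Omc^\ell_T \models C \sqsubseteq D$, giving $\Omc_S \models C \sqsubseteq D$.

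For completeness I would argue by contraposition. Assume $\Omc^\ell_T \not\models C \sqsubseteq D$ and let $\Imc$ be the universal model of $C$ w.r.t.\ the \EL ontology $\Omc^\ell_T$, with root $d_0 \in C^\Imc$ and the characteristic property that $d_0 \in E^\Imc$ iff $\Omc^\ell_T \models C \sqsubseteq E$ for every \EL concept $E$; in particular $d_0 \notin D^\Imc$. The goal is to build a model $\Jmc \models \Omc_S$ with a distinguished element $e_0$ such that (i) $e_0 \in C^\Jmc$ and (ii) there is a depth-$\ell$ simulation from $\Jmc$ to $\Imc$ sending $e_0$ to $d_0$. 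Since \EL-membership is preserved forward along simulations, (ii) together with $d_0 \notin D^\Imc$ and $\mn{depth}(D) \leq \ell$ yields $e_0 \notin D^\Jmc$, so $\Jmc$ refutes $C \sqsubseteq D$ under $\Omc_S$, as required. I would obtain $\Jmc$ by \emph{resolving} the disjunctions recorded by the fresh names in $\Imc$: at each $x$ with $x \in X_D^\Imc$, where $D = D^{(1)} \sqcup \cdots \sqcup D^{(k)} \in \mn{Dis}^-(\Omc_S)$, one must realize some disjunct $D^{(j)}$, adding existential successors where needed, while keeping the $\Sigma$-structure of $\Imc$ intact so that the simulation survives; this is essentially the disjunctive chase of $C$ with $\Omc_S$ read off $\Imc$. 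Normalization to \EL left-hand sides is used here to check $\Jmc \models \Omc_S$, since satisfaction of a left-hand side is then an \EL condition controlled by the simulation.

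The main obstacle will be the \emph{disjunct-choice lemma}: at every element one can pick a disjunct whose realization neither forces an \EL concept over $\Sigma$ that is absent at the corresponding point of $\Imc$ (which would break the simulation and could reintroduce $D$ at $d_0$) nor obstructs the global construction of a model of $\Omc_S$. This is precisely where lines~2--4 of Figure~\ref{fig:second} enter: line~2 guarantees that the joint consequences of $X_D$ with already-asserted disjunctions are reflected in $\Imc$, line~3 propagates the effect of a disjunction occurring under an existential, and line~4 supplies, up to depth $\ell$, every existential witness $\exists r . G$ with $G$ an $\Omc_S$-generatable concept that $\Omc_S$ forces but that the non-generating rules would miss --- exactly the phenomenon behind Propositions~\ref{ex:2} and~\ref{prop:nonelem}. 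The argument must be organized so that these witnesses are available at the depth demanded by $D$, which for finite $\ell$ is ensured by the bound $\mn{depth}(F) \leq \mn{depth}(G) < \ell$ in the last line, and for $\ell = \omega$ holds unconditionally.
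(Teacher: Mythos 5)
Your soundness argument is correct, and it takes a genuinely different (and arguably cleaner) route than the paper: the paper proves soundness (Lemma~\ref{lem:sound}) by chasing $\Amc_{C_0}$ with $\Omc^\omega_T$ and showing by induction that $\Omc_S \models C_i^\downarrow \sqsubseteq C_{i+1}^\downarrow$ under the back-substitution $X_D \mapsto D$, whereas you expand an arbitrary model $\Imc \models \Omc_S$ by $X_D^{\Imc'} := D^\Imc$ and verify the four lines of Figure~\ref{fig:second} directly. Your observation that $(E^\uparrow)^{\Imc'} = E^\Imc$ (and $(\mn{DNF}(E)^\uparrow)^{\Imc'} = E^\Imc$) makes each line reduce to its side condition, and this checks out; both arguments buy the same thing with comparable effort.

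The completeness half, however, has a genuine gap: your ``disjunct-choice lemma'' is not a lemma you prove but a name for the entire difficulty of the theorem, and the sketch underestimates three specific obstructions that the paper spends most of its appendix on. First, disjunct resolution is not a pointwise matter: choosing a disjunct at $x$ forces, via $\Omc_S$, new disjunctions at $x$ itself \emph{and at $x$'s predecessor} (this is why line~3 exists), so choices interact in both directions along the tree; the paper handles this by extending top-down from the root with an invariant (properties (P1)--(P3) in the proof of Lemma~\ref{lem:chasecompl}) certifying that lines~2--3, instantiated over all of $\mn{Dis}(\Omc_S)$, derive exactly the semantically strongest implied disjunctions --- this is the content of Lemma~\ref{lem:compaux}, $\Omc_T^- \models C_0 \sqsubseteq \mn{Dis}^\EL_{\Omc_S}(C_0)^\uparrow$, whose proof via a bespoke deterministic chase is the technical core and is nowhere reconstructed in your plan. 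Second, realizing a chosen disjunct adds fresh existential successors, which must themselves be simulated in $\Imc$; that is completeness again for the introduced subconcepts, so you need a simultaneous induction, and at anonymous elements only the information forced by the introducing concept is available (the paper's marking mechanism in Rules~6--7 of the special chase exists precisely for this). Third, to see that line~4 actually fires in $\Imc$ at the required element, you must show its premise $F^\uparrow$ is entailed there; this needs the truncation argument of Lemma~\ref{lem:smallC} (cut the ABox at depth $\mn{depth}(G)$ and decorate leaves with $\mn{Dis}^\EL_{\Omc_S}(\cdot)^\uparrow$, justified by Lemma~\ref{lem:compaux}) together with the dichotomy of Lemma~\ref{lem:generatablesem} that any forced $\exists r . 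G$ not witnessed by a top-level conjunct has $G$ being $\Omc_S$-generatable. Your plan correctly identifies where lines~2--4 must enter, so the shape is right (it is essentially the contrapositive of the paper's induction on co-depth in Lemma~\ref{lem:comp}), but without proofs of these three points the proposal asserts rather than establishes the statement the paper itself flags as ``remarkably subtle.''
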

While soundness is easy to show, completeness is remarkably
 subtle to prove. 
%
 It is stated by the following lemma which shows that our
 approximation $\Omc^\ell_T$ is actually stronger than required in
 that it preserves all \EL subsumptions $C \sqsubseteq D$ with $D$ of
 depth bounded by $\ell$ and $C$ of unrestricted depth.
\begin{restatable}{lemma}{lemcomp} \label{lem:comp} Let $\ell \in
  \mathbb{N} \cup \{ \omega \}$. Then $\Omc_S \models C_0 \sqsubseteq
  D_0$ implies $\Omc^\ell_T \models C_0 \sqsubseteq D_0$ for all \EL
  concepts $C_0,D_0$ over $\mn{sig}(\Omc_S)$ such that the role depth
  of $D_0$ is bounded by $\ell$.
\end{restatable}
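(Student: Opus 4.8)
The plan is to prove the forward direction directly: I construct the canonical (universal) model $\Imc_T$ of $C_0$ with respect to the $\EL$ ontology $\Omc^\ell_T$ and show that it already satisfies $D_0$ at its root whenever $\Omc_S \models C_0 \sqsubseteq D_0$. Since $\Omc^\ell_T$ is Horn, such a universal $\Imc_T$ exists and $\Omc^\ell_T \models C_0 \sqsubseteq D_0$ holds iff the root of $\Imc_T$ satisfies $D_0$, so this suffices. On the $\ELU$ side I would set up a \emph{disjunctive chase} of $C_0$ with $\Omc_S$: because every CI of $\Omc_S$ has an $\EL$ left-hand side (by the preliminary rewriting), disjunctions are introduced only on right-hand sides via the first line of the scheme, and resolving each disjunction yields a forest of fully determined tree models, the \emph{completions}. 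A standard argument gives $\Omc_S \models C_0 \sqsubseteq D_0$ iff $D_0$ is satisfied at the root of \emph{every} completion. Equivalently, I describe the chase as assigning to each node a set of admissible conjunctive types (elements of $\mn{Con}(\Omc_S)$), and call a disjunction $D \in \mn{Dis}(\Omc_S)$ \emph{certain} at a node if every admissible type there contains one of its disjuncts.

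The technical core is a correspondence lemma linking $\Imc_T$ to the chase. I would relate the elements of $\Imc_T$ to the nodes of the completions, matching the root of $\Imc_T$ to the roots and respecting the tree structure generated along $r$-edges, and prove by induction over the derivation order of $\Omc^\ell_T$ that the $X_D$-labels and $\uparrow$-encoded facts produced in $\Imc_T$ are exactly the certainties of the chase at the matched nodes. The correspondence is driven by the three ``horizontal and upward'' lines of Figure~\ref{fig:second}: line~1 seeds the certainties forced by $\Omc_S$ at a node, with the $\mn{DNF}$ conversion guaranteeing that the non-disjunctive part of a right-hand side is kept intact and only genuine disjunctions become $X_D$; line~2 is exactly reasoning by cases within a single node, and is complete for deriving further certain disjunctions from certain ones because $\Omc_S \models D \sqcap D_1 \sqsubseteq D_2$ ranges over all such entailments; and line~3 propagates a disjunction that is certain at an $r$-successor up to its predecessor. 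The main point to verify is that lines~2 and~3 \emph{together} already capture every way in which certainty can arise in the chase, including chains where a disjunction certain deep in a completion forces, case by case, a conclusion higher up.

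The hardest step is the existential case, handled by an outer induction on the structure of $D_0$; this is precisely why the statement can afford $C_0$ of unrestricted depth but needs the role depth of $D_0$ bounded by $\ell$. For $D_0 = A$ and $D_0 = D_0' \sqcap D_0''$ the claim reduces to the correspondence lemma and the inductive hypotheses. For $D_0 = \exists r . D_0'$, certainty of $\exists r . D_0'$ at the root of every completion must be converted into a single $r$-successor of the root of $\Imc_T$ whose subtree satisfies $D_0'$. Different completions will in general witness $\exists r . D_0'$ by different successors, so the obstacle is to exhibit one \emph{common} witness inside $\Imc_T$; this is exactly the role of line~4, which generates an $r$-successor carrying an $\Omc_S$-generatable concept $G$ entailed to lie below all of these witnesses, with the side condition $\mn{depth}(F) \le \mn{depth}(G) < \ell$ ensuring the witness is available within the depth budget. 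I expect the delicate part of the proof to be exactly here: arguing that a generatable $G$ with $\Omc_S \models G \sqsubseteq D_0'$ (so that the inductive hypothesis applies to the successor) always exists when $\exists r . D_0'$ is certain, and that line~4's decorated left-hand sides $F$ correctly fire on the certainties already present at the predecessor in $\Imc_T$. Once the existential case is settled, instantiating the outer induction at the root with $D_0$ of depth $\le \ell$ yields $\Omc^\ell_T \models C_0 \sqsubseteq D_0$; I would also check the boundary regimes $\ell = \omega$ (line~4 unconstrained) and small finite $\ell$ (where the depth bound must be threaded through the existential induction) separately.
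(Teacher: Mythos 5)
The central gap is in your existential case. You route every certain $\exists r . D_0'$ through line~4 and rest on the claim that ``a generatable $G$ with $\Omc_S \models G \sqsubseteq D_0'$ always exists when $\exists r . D_0'$ is certain.'' That claim is false: generatability requires some $\exists r . D \in \mn{sub}(\Omc_S)$ on the right-hand side of a CI, so an ontology with no existential restrictions on right-hand sides (e.g.\ the one in Proposition~\ref{ex:1}, which has no $\Omc_S$-generatable concepts at all) still entails many existential subsumptions, namely those witnessed by a top-level conjunct $\exists r . E'$ of the left-hand concept itself. The paper therefore makes a case split that your structural induction on $D_0$ omits: if the current subtree $C^a_0$ of $C_0$ has a top-level conjunct $\exists r . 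E'$ with $\Omc_S \models E' \sqsubseteq E$, the proof recurses into that explicit successor (the induction is over positions of $C_0$, decomposing $E$ into its conjuncts); only in the complementary case does Lemma~\ref{lem:generatablesem} yield generatability of $E$ --- and its proof is a model surgery that genuinely needs the absence of a conjunct witness. Relatedly, even in the generatable case you have not said how line~4 can fire: $F$ must satisfy $\mn{depth}(F) \leq \mn{depth}(G) < \ell$ while $C_0$ has unrestricted depth. This is closed in the paper by the truncation Lemma~\ref{lem:smallC} --- cutting $C^a_0$ at the depth of $\exists r . E$ and decorating the cut leaves with the strongest certain \EL disjunctions $\mn{Dis}^\EL_{\Omc_S}(\cdot)$ preserves $\Omc_S$-entailment of the existential --- combined with Lemma~\ref{lem:compaux} to make those decorations $\Omc^\ell_T$-derivable. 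Your phrase that $F$ ``fires on the certainties already present'' gestures at this but does not identify the semantic truncation claim, which is a real proof obligation.

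Second, your correspondence lemma is proposed with the induction running in the wrong direction. An induction ``over the derivation order of $\Omc^\ell_T$'' can only show soundness (every derived $X_D$-label corresponds to a certain disjunction); the hard content is completeness --- that lines~1--3 derive \emph{every} certain disjunction, or at least the strongest one, $\mn{Dis}^\EL_{\Omc_S}(C_0)$. The paper proves this (Lemma~\ref{lem:compaux} via Lemma~\ref{lem:chasecompl}) by a contrapositive countermodel construction: if the deterministic special chase fails to derive the target disjunction at the root, one selects disjuncts $E_a$ at disjunctive individuals and grafts tree models to build a model of $\Omc_S$ refuting certainty, with anonymous successors requiring separate treatment (the chase rules~6--7 and marking) because their certainties come from the concepts they were introduced for rather than from the input $C_0$. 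You flag this verification as ``the main point'' but offer no mechanism for it, and the asymmetry between original and anonymous nodes does not appear in your matching at all. Your overall architecture (lines~1--3 as certainty propagation, line~4 for common existential witnesses, $C_0$ unrestricted and $D_0$ depth-bounded) does match the paper's, but these three items --- the missing conjunct-witness case with its false generatability claim, the absent truncation lemma, and the reversed correspondence induction --- leave the proposal short of a proof.
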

%
%
The proof of Lemma~\ref{lem:comp} is the most substantial one in this
paper. It uses a chase procedure for \ELU ontologies that is
specifically tailored towards proving completeness in that it is
deterministic rather than disjunctive and mimics the concept
inclusions in Figure~\ref{fig:second}. Showing that this chase is 
complete is far from trivial. 

\paragraph{Fewer Symbols.}
%
The number of fresh concept names $X_D$ in $\Omc^\ell_T$ is double
exponential in $||\Omc_S||$ since the number of disjunctions in
$\mn{Dis}^-(\Omc)$ is. However, $\Omc^\ell_T$ can be rewritten
into an ontology $\widehat\Omc^\ell_T$ that uses only single
exponentially many fresh concept names and is still an $\ell$-bounded
approximation of~$\Omc_S$. The idea is to transition from disjunctive
normal form to conjunctive normal form, that is, to replace each
concept name $X_D$, $D \in \mn{Dis}^-(\Omc)$, with a conjunction of
concept names $Y_{D'}$ where $D'$ is a disjunction of concepts from
$\mn{sub}^-(\Omc)$, rather than conjunctions thereof.
Details are in the appendix. 
%
%
%
\begin{theorem}
\label{thm:smallercomplete}
For every $\ell \in \mathbb{N} \cup \{ \omega \}$,
  $\widehat \Omc^\ell_T$ is an $\ell$-bounded \EL approximation of $\Omc_S$.  
\end{theorem}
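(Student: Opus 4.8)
The plan is to exhibit $\widehat\Omc^\ell_T$ as a single syntactic substitution instance of $\Omc^\ell_T$ and then to transfer both directions of the approximation property from Theorem~\ref{thm:corr} by a model-theoretic substitution argument. Recall that each $D \in \mn{Dis}^-(\Omc_S)$ is a disjunction of conjunctions of atoms from $\mn{sub}^-(\Omc_S)$; let $\Gamma_D$ denote the set of clauses of the CNF of $D$, obtained by distributing while treating existential restrictions $\exists r.E$ as atomic (so the argument $E$ is untouched). Every clause in $\Gamma_D$ is then a disjunction of concepts from $\mn{sub}^-(\Omc_S)$ and hence indexes one of the fresh names $Y_{D'}$. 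I would fix the substitution $\sigma$ defined on concept names by $\sigma(X_D) = \bigsqcap_{D' \in \Gamma_D} Y_{D'}$ for $D \in \mn{Dis}^-(\Omc_S)$ and $\sigma(A)=A$ for $A \in \mn{sig}(\Omc_S)$, and check that $\widehat\Omc^\ell_T$ is exactly $\{\sigma(\alpha)\mid \alpha \in \Omc^\ell_T\}$ (splitting any right-hand conjunction $\ldots \sqsubseteq \bigsqcap_{D'} Y_{D'}$ into the equivalent family $\ldots \sqsubseteq Y_{D'}$ is harmless, and the result stays within \EL since $\sigma$ introduces only conjunctions of concept names).

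Two elementary facts drive the argument. First, a substitution lemma: for any interpretation $\widehat\Imc$ of $\mn{sig}(\widehat\Omc^\ell_T)$, the interpretation $\Imc^+$ that agrees with $\widehat\Imc$ on $\mn{sig}(\Omc_S)$ and sets $X_D^{\Imc^+} := (\sigma(X_D))^{\widehat\Imc} = \bigcap_{D' \in \Gamma_D} Y_{D'}^{\widehat\Imc}$ satisfies $\gamma^{\Imc^+} = (\sigma(\gamma))^{\widehat\Imc}$ for every \EL concept $\gamma$ over $\mn{sig}(\Omc^\ell_T)$; this is routine since $\sigma$ commutes with $\sqcap$, $\sqcup$, $\top$ and $\exists r$ and \EL has no negation. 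Second, the DNF/CNF identity $D \equiv \bigsqcap_{D' \in \Gamma_D} D'$ holds as a logical equivalence under \emph{every} interpretation of the atoms, so that $D^\Imc = \bigcap_{D' \in \Gamma_D} (D')^\Imc$ for any $\Imc$.

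For completeness, suppose $\Omc_S \models C \sqsubseteq D$ for \EL concepts $C,D$ over $\mn{sig}(\Omc_S)$ of depth at most $\ell$; by Theorem~\ref{thm:corr} we have $\Omc^\ell_T \models C \sqsubseteq D$. Given any model $\widehat\Imc$ of $\widehat\Omc^\ell_T$, form $\Imc^+$ as in the substitution lemma. Applying $\gamma^{\Imc^+} = (\sigma(\gamma))^{\widehat\Imc}$ to both sides of each CI of $\Omc^\ell_T$ and using $\widehat\Imc \models \sigma(\Omc^\ell_T) = \widehat\Omc^\ell_T$ yields $\Imc^+ \models \Omc^\ell_T$, whence $\Imc^+ \models C \sqsubseteq D$; since $C,D$ are over $\mn{sig}(\Omc_S)$ and $\Imc^+,\widehat\Imc$ agree there, $\widehat\Imc \models C \sqsubseteq D$. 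As $\widehat\Imc$ was arbitrary, $\widehat\Omc^\ell_T \models C \sqsubseteq D$. For soundness I would build a model of $\widehat\Omc^\ell_T$ from scratch out of a model $\Imc$ of $\Omc_S$: set $\widehat\Imc$ to agree with $\Imc$ on $\mn{sig}(\Omc_S)$ and put $Y_{D'}^{\widehat\Imc} := (D')^\Imc$. The CNF identity then gives $(\sigma(X_D))^{\widehat\Imc} = \bigcap_{D'}(D')^\Imc = D^\Imc$, so the interpretation $\Imc^+$ produced by the substitution lemma is exactly the extension of $\Imc$ by $X_D^{\Imc^+}=D^\Imc$, which is a model of $\Omc^\ell_T$ by the soundness construction underlying Theorem~\ref{thm:corr}. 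The substitution lemma therefore transfers this to $\widehat\Imc \models \widehat\Omc^\ell_T$, and reading off $\widehat\Omc^\ell_T \models C \sqsubseteq D$ forces $\Imc \models C \sqsubseteq D$, i.e.\ $\Omc_S \models C \sqsubseteq D$.

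The step I expect to require the most care is the soundness half, where the naive idea of transporting an \emph{arbitrary} model of $\Omc^\ell_T$ into one of $\widehat\Omc^\ell_T$ fails: in a model of $\Omc^\ell_T$ the set $X_D^\Imc$ need not equal $D^\Imc$, so one cannot recover the $Y_{D'}$ by inverting $\sigma$. The key observation that makes the proof go through is that this problematic inverse transformation is never needed—completeness only uses the forward substitution direction, and soundness is obtained by constructing a model of $\widehat\Omc^\ell_T$ directly from an $\Omc_S$-model, where $X_D$ is pinned to $D^\Imc$ and the CNF identity applies cleanly. Beyond this, the only routine points to verify are that every clause in $\Gamma_D$ is genuinely a disjunction of members of $\mn{sub}^-(\Omc_S)$ (so that the name $Y_{D'}$ exists) and that reusing a single $Y_{D'}$ across the CNFs of different $D$ is semantically harmless, which is immediate as the substitution lemma is insensitive to such sharing.
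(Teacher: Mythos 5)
Your proof is correct, but it takes a genuinely lighter route than the paper's. The paper derives Theorem~\ref{thm:smallercomplete} from the stronger Lemma~\ref{lem:singleexp}, which asserts that $\Omc^\ell_T$ and $\widehat\Omc^\ell_T$ entail exactly the same \EL CIs over $\mn{sig}(\Omc_S)$; to get this, the paper must transform an \emph{arbitrary} model $\Imc$ of $\Omc^\ell_T$ into a model of $\widehat\Omc^\ell_T$ agreeing with it on $\mn{sig}(\Omc_S)$ --- precisely the inverse step you correctly flag as impossible to do by inverting $\sigma$, since in such a model $X_D^\Imc$ need not equal $D^\Imc$. The paper resolves this not by inversion but by the semantic definition $Y_{D}^{\widehat\Imc}=\big({\mn{Dis}_{\Omc_S}(D)}^\uparrow \sqcap \big(\bigsqcup_{D' \in \mn{Dis}^-(\Omc_S) \mid D \in K_{D'}} X_{D'}\big)\big)^\Imc$, followed by a delicate chain argument through the second line of Figure~\ref{fig:second} to establish $\bigsqcap_{D' \in K_D} Y_{D'}^{\widehat\Imc} = X_D^\Imc$. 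Your observation that the theorem itself never needs this inverse --- completeness uses only the forward substitution direction, while soundness can be anchored at a model $\Imc$ of $\Omc_S$, where $X_D$ is pinned to $D^\Imc$ and the tautological CNF identity $\emptyset \models D \equiv \bigsqcap K_D$ applies cleanly --- eliminates the hardest part of the paper's argument. What the paper buys with the extra work is the full interchangeability of $\Omc^\ell_T$ and $\widehat\Omc^\ell_T$, a statement of independent interest; what you buy is brevity and robustness. One small repair: you justify the pinned extension $\Imc^+$ (with $X_D^{\Imc^+}:=D^\Imc$) being a model of $\Omc^\ell_T$ by appeal to ``the soundness construction underlying Theorem~\ref{thm:corr},'' but the paper's soundness proof (Lemma~\ref{lem:sound}) is chase-based and never states this model-extension fact; you should verify it directly, which is immediate: under the pinning one has $(\gamma^\uparrow)^{\Imc^+}=\gamma^\Imc$ for every $\gamma$ whose disjunctions lie in $\mn{Dis}(\Omc_S)$, so each CI in Figure~\ref{fig:second} reduces to the \ELU inclusion in its side condition, which $\Omc_S$ entails (using also $\mn{DNF}(E) \equiv E$ for the first line).
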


\paragraph{Acyclic Ontologies.}
%
Using Lemma~\ref{lem:charOgen}, one can show that $\Omc^\omega_T$ is
finite whenever $\Omc_S$ is an acyclic \ELU ontology. In fact, the
length $n$ of role sequences with the properties stated in the lemma
is bounded by $||\Omc_S||$ if $\Omc_S$ is acyclic. 
\begin{theorem}
  Every acyclic \ELU ontology has a finite \EL approximation.
\end{theorem}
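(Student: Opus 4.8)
The plan is to reduce the claim to the finite-generation property and then exploit acyclicity to bound the depth of forced existential chains. By Theorem~\ref{thm:corr}, the ontology $\Omc^\omega_T$ of Figure~\ref{fig:second} is an \EL approximation of $\Omc_S$, and as observed above $\Omc^\omega_T$ is finite exactly when $\Omc_S$ is finitely generating. Hence it suffices to show that every acyclic \ELU ontology is finitely generating. By the contrapositive of Lemma~\ref{lem:charOgen}, this amounts to finding a bound $N$ such that for no $\exists r . D \in \mn{sub}(\Omc_S)$ occurring on a right-hand side and no role sequence $r_1,\dots,r_n$ with $n > N$ we have $\Omc_S \models D \sqsubseteq \exists r_1 . \cdots . \exists r_n . \top$. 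I claim that $N = ||\Omc_S||$ works.

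First I would turn acyclicity into a rank function. In an acyclic ontology every axiom has the form $A \bowtie C$ with $A$ a concept name and the left-hand sides are unique; writing $A \succ B$ whenever $B$ occurs in the definition $C$ of $A$, the absence of a definitorial cycle makes $\succ$ well-founded, so there is a rank $\rho$ with $\rho(A) > \rho(B)$ whenever $A \succ B$. The crucial structural observation, made possible by \ELU having neither inverse roles nor value restrictions, is that a node in a model can acquire a concept name only locally --- either as the filler of an existential forced by its parent, or by closing under the CIs that fire at the node itself. Consequently, along any chain of forced existentials the rank of the concept name responsible for the next successor strictly decreases. This bounds the length of any forced chain, and since the role nesting contributed by the definitions visited along a strictly descending rank-path sums to at most the total number of role-name occurrences in $\Omc_S$, the overall forced depth is at most $||\Omc_S||$.

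To make this precise I would argue contrapositively via an explicit countermodel. Given $n > ||\Omc_S||$, I would build a tree-shaped model of $\Omc_S$ containing a $D$-node by the usual chase-style expansion: at each node close under all CIs and, for every forced $\exists r . E$, attach one $r$-successor satisfying $E$. The rank argument above shows that this expansion terminates, yielding a model whose every branch has length at most $||\Omc_S||$; such a model witnesses $\Omc_S \not\models D \sqsubseteq \exists r_1 . \cdots . \exists r_n . \top$, as required. Finitely generating ontologies then have finite $\Omc^\omega_T$, which is the desired finite \EL approximation.

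The main obstacle is disjunction: because \ELU allows CIs such as $A \sqsubseteq C_1 \sqcup C_2$, there is in general no single universal model of $\Omc_S$ from which forced successors can simply be read off, so the chase must commit to a disjunct at each node while still producing a model that satisfies every CI. I expect the delicate point to be exactly this interaction of the case analysis with termination: one must check that resolving disjunctions (by choosing, at each node, one disjunct of every applicable disjunctive CI) neither destroys the strict rank-descent nor forces an existential beyond depth $||\Omc_S||$. Acyclicity is what saves the argument here, since the chosen disjunct is again a subconcept of a definition and therefore mentions only concept names of strictly smaller rank, so the descent --- and with it the depth bound $||\Omc_S||$ --- survives the case distinction.
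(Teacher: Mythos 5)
Your proposal is correct and follows essentially the same route as the paper: reduce to finite generation via Theorem~\ref{thm:corr} and the finiteness criterion for $\Omc^\omega_T$, then use Lemma~\ref{lem:charOgen} and argue that acyclicity bounds the length of role sequences $r_1,\dots,r_n$ with $\Omc_S \models D \sqsubseteq \exists r_1 . \cdots . \exists r_n . \top$ by $||\Omc_S||$. The paper merely asserts this bound, whereas you supply the well-founded rank argument and chase-style countermodel that justify it (just make sure, when writing it out, that the backward direction $C \sqsubseteq A$ of equivalences --- which activates \emph{higher}-rank names --- never forces a new successor, since its definition is already satisfied at the node).
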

There is, however,
more that we can say about acyclic ontologies.  We first observe that
there are acyclic \ELU ontologies that have finite \EL approximations,
but no \EL approximation that is an acyclic ontology.
\begin{exmp}
  Consider the acyclic \ELU ontology 
  $$\Omc_{S}=\{A\equiv (B_{1}\sqcap
  B_{2}) \sqcup (B_{1} \sqcap B_{3})\}.$$ Then
$
\Omc_{T}=\{ B_{1}\sqcap B_{2}\sqsubseteq A, B_{1}\sqcap B_{3}\sqsubseteq A, A \sqsubseteq B_{1}\}
$
is an $\mathcal{EL}$ approximation of $\Omc_{S}$, but $\Omc_S$ has no 
 \EL approximation that is an acyclic ontology, finite or infinite.
\end{exmp}
Further, our approximations $\Omc^\ell_T$ can be simplified for
acyclic \ELU ontologies $\Omc_S$. Let $\widetilde\Omc^\ell_T$ be
defined like $\Omc^\ell_T$ in Figure~\ref{fig:second}, except that in
the last line, $F$ ranges only over concept names (not decorated with
disjunctions) rather than over compound concepts, a significant
simplification.
\begin{theorem}
\label{thm:acyclicbetter} 
Let $\ell \in \mathbb{N} \cup \{ \omega \}$ and let $\Omc_S$ be an
acyclic \ELU ontology. Then $\widetilde\Omc^\ell_T$ is an
$\ell$-bounded \EL approximation of $\Omc_S$.
\end{theorem}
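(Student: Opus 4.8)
The plan is to establish soundness and completeness of $\widetilde\Omc^\ell_T$ separately, reusing as much of Theorem~\ref{thm:corr} as possible. Soundness is immediate: every CI of $\widetilde\Omc^\ell_T$ is also a CI of $\Omc^\ell_T$ (a bare concept name is a special case of an \EL concept decorated with disjunctions at leaves, and for $F$ a concept name the side condition $\mn{depth}(F)\le\mn{depth}(G)$ of the last line of Figure~\ref{fig:second} is vacuous), so $\widetilde\Omc^\ell_T\subseteq\Omc^\ell_T$ and hence $\widetilde\Omc^\ell_T\models C\sqsubseteq D$ implies $\Omc^\ell_T\models C\sqsubseteq D$, which implies $\Omc_S\models C\sqsubseteq D$ by the soundness part of Theorem~\ref{thm:corr}. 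The whole difficulty is completeness: for \EL concepts $C_0,D_0$ over $\mn{sig}(\Omc_S)$ with $\mn{depth}(D_0)\le\ell$ I must show that $\Omc_S\models C_0\sqsubseteq D_0$ implies $\widetilde\Omc^\ell_T\models C_0\sqsubseteq D_0$. I would do this by rerunning the deterministic chase used to prove Lemma~\ref{lem:comp}, firing only the CIs of $\widetilde\Omc^\ell_T$, and then arguing that acyclicity lets every successor-generating step that the $\Omc^\ell_T$-chase performs via a compound or decorated $F$ in the last line of Figure~\ref{fig:second} be performed instead via a concept name.

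The engine of the argument is a primeness property of existentials specific to acyclic ontologies. Because an acyclic ontology contains only CIs with a concept name on the left-hand side, full unfolding of a concept w.r.t.\ $\Omc_S$ distributes over conjunction: if $F=F_1\sqcap\cdots\sqcap F_k$ lists the top-level conjuncts of a decorated \EL concept $F$, its unfolding is the conjunction of the unfoldings of the $F_i$, and acyclicity guarantees the unfolding terminates. Writing each unfolded conjunct in disjunctive normal form and using that a TBox-free \EL subsumption $M\sqsubseteq\exists r.G$ forces a matching top-level existential inside every disjunct $M$, a short combinatorial computation (of the shape $ab\lor cd$, i.e.\ $\exists r.G$ is join-irreducible across the product of the disjunct sets) shows that $\Omc_S\models F\sqsubseteq\exists r.G$ already implies $\Omc_S\models F_{i^\ast}\sqsubseteq\exists r.G$ for a single top-level conjunct $F_{i^\ast}$. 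If $F_{i^\ast}$ is an existential $\exists r.E$ of $F$, then—again because no axiom fires on an existential in the acyclic case—one must have $\Omc_S\models E\sqsubseteq G$, so the required $r$-successor is already present in the chase and $G$ is obtained at it by the induction hypothesis at smaller depth. If $F_{i^\ast}$ is a concept name $A$, then $\Omc_S\models A\sqsubseteq\exists r.G$ and, since $G$ is $\Omc_S$-generatable and $\mn{depth}(F)=0$ reduces the side condition to $\mn{depth}(G)<\ell$, the simplified last line of $\widetilde\Omc^\ell_T$ supplies exactly the CI $A\sqsubseteq\exists r.G$.

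The step I expect to be the main obstacle is the remaining case, where the responsible top-level conjunct $F_{i^\ast}$ is a disjunction $D\in\mn{Dis}(\Omc_S)$ introduced by decoration, so that only the fresh name $X_D$, and not a genuine concept name, is present at the chase element. In isolation the CI $X_D\sqsubseteq\exists r.G$ need not follow from $\widetilde\Omc^\ell_T$, since deriving an existential from a disjunction is precisely the case analysis that \EL cannot carry out; this is also why a CI-by-CI redundancy argument showing $\widetilde\Omc^\ell_T\models\Omc^\ell_T$ does \emph{not} go through and the chase is genuinely needed. The resolution is to track which concept name is \emph{responsible} for each occurrence of $X_D$: in an acyclic ontology every disjunction present at an element is propagated there by unfolding the definition of some concept name $B$ with $\Omc_S\models B\sqsubseteq D$, whence $\Omc_S\models B\sqsubseteq\exists r.G$ and the simplified last line fires on $B$ instead. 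Making this responsibility bookkeeping precise—showing that the generating concept name persists at the element alongside $X_D$, and that acyclicity bounds the definitional unfolding so the bookkeeping is well-founded—is the technical heart of the proof. Once it is in place, the chase built from $\widetilde\Omc^\ell_T$ generates every successor that the $\Omc^\ell_T$-chase does, and completeness follows exactly as in Lemma~\ref{lem:comp}.
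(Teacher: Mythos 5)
Your soundness shortcut is fine, and your ``primeness'' property is exactly the paper's key lemma (Lemma~\ref{lem:acyclic}): for an acyclic \ELU ontology, if $\Omc \models C \sqsubseteq \exists r.E$ for an \EL concept $C$ and no top-level conjunct $\exists r.E'$ of $C$ satisfies $\Omc \models E' \sqsubseteq E$, then a \emph{single concept-name} top-level conjunct $A_i$ of $C$ satisfies $\Omc \models A_i \sqsubseteq \exists r.E$. The paper proves this semantically, by gluing tree-shaped countermodels for the individual conjuncts at a common root and recursively closing off the defined concept names (well-founded by acyclicity), whereas you sketch a syntactic unfolding/DNF/join-irreducibility argument; for pure \EL concepts $C$ your route is plausible. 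The genuine gap is the third case you yourself flag as ``the technical heart'' and leave open --- and the point is that it never needs to arise. The paper does not rerun a chase over $\widetilde\Omc^\ell_T$-derived decorated concepts; it plugs Lemma~\ref{lem:acyclic} into the co-depth induction of Lemma~\ref{lem:comp}, applying it to the \emph{pure} \EL concepts $C^a_0$ (subtrees of the input concept $C_0$), whose top-level conjuncts are only concept names and existentials. In Case~2 of that induction one directly obtains a concept name $A_i$ among the top-level conjuncts of $C^a_0$ with $\Omc_S \models A_i \sqsubseteq \exists r.E$; since $A_i$ has no existential conjuncts, Lemma~\ref{lem:generatablesem} gives that $E$ is $\Omc_S$-generatable, so $A_i \sqsubseteq \exists r.E$ is an instance of the simplified last line and the induction closes. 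The whole construction of $\Amc^\pm$/$C^\pm$ with disjunction decorations --- the only place where decorated $F$'s enter --- is bypassed, and the concept-name case of $D_0$ uses only the first three lines, which are identical in $\widetilde\Omc^\ell_T$.

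Worse, the repair you propose for the disjunction case is unsound as stated. It is not true that in an acyclic ontology every $X_D$ present at a chase element is put there by unfolding the definition of a concept name $B$ that holds at that element with $\Omc_S \models B \sqsubseteq D$: disjunctions are also propagated \emph{upwards} from successors via Line~3 of Figure~\ref{fig:second} (Rule~5 of the special chase). For instance, with the acyclic ontology containing $A \equiv B_1 \sqcup B_2$ and $C_i \equiv \exists r.B_i$, chasing the concept $\exists r.A$ puts $X_{\exists r.B_1 \,\sqcup\, \exists r.B_2}$ at the root even though no concept name of $\mn{sig}(\Omc_S)$ holds there at all. So the invariant your bookkeeping relies on fails, and any correct trace-back would have to follow the tree structure of the input concept --- which is precisely what the paper's induction on the subtrees $C^a_0$ already does, rendering the bookkeeping (and the problematic case) unnecessary.
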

Based on this observation, constructing finite \EL approximations of
acyclic \ELU ontologies does not seem infeasible in practice.


\section{\ALC-to-$\EL_\bot$ Approximation}
\label{sec:bottom}

We consider $\ELU_\bot\!$-to-$\EL_\bot$ and $\ALC$-to-$\EL_\bot$
approximation which turn out to be closely related to each other and
significantly different from $\ELU$-to-$\EL$ approximation.

It immediately follows from the results in Section~\ref{sect:elutoel}
that finite approximations are guaranteed to exist neither in the
$\ELU_\bot\!$-to-$\EL_\bot$ nor in the $\ALC$-to-$\EL_\bot$
case. However, while we have argued that finite
$\ELU$-to-$\EL$ approximations can be expected to exist in many
practical cases, this does not appear to be true for
$\ELU_\bot\!$-to-$\EL_\bot$ and $\ALC$-to-$\EL_\bot$. The following
example illustrates the problem.
%
  \begin{exmp}
\label{ex:ELUbot}
     Consider the $\ELU_\bot$ ontology
  $$
  \begin{array}{rr@{\;}c@{\;}l}
  \Omc_S = \{&  
  A_1 &\sqsubseteq& M \sqcup N_1, \\
  & A_2 &\sqsubseteq& M \sqcup N_2, \\
  & \exists r . N_1 \sqcap \exists r . N_2 &\sqsubseteq& \bot \qquad\qquad\qquad \}.
  \end{array} 
 $$
There are no $\Omc_S$-generatable \EL concepts. Yet, there
 is no finite $\EL_\bot$ approximation of $\Omc_S$. Informally, this
 is because $$\Omc_S \models \exists r . (A_1 \sqcap \exists r^n
 . \top) \sqcap \exists r . (A_2 \sqcap \exists r^n . \top)
 \sqsubseteq \exists r . (M \sqcap \exists r^n . \top)$$ for all $n
 \geq 1$.\footnote{A
 formal proof is analogous to that of Proposition~\ref{ex:2}.}
\end{exmp}
While the above example is for $\ELU_\bot\!$-to-$\EL_\bot$, there is an
additional effect in $\ALC$-to-$\EL_\bot$ that already occurs
for very simple ontologies $\Omc_S$.
%
\begin{exmp}
\label{ex:ELI}
The $\mathcal{ALC}$ ontology $\Omc_{S}=\{A \sqsubseteq \forall r.B\}$
has no finite $\EL_{\bot}$ approximation.  This is shown in
\cite{DBLP:conf/ijcai/BotcherLW19} for the equivalent \ELI ontology
$\{ \exists r^- . A \sqsubseteq B\}$. Informally, this is because
$\Omc_S \models A \sqcap \exists r^{n+1}.\top \sqsubseteq \exists r.(B
\sqcap \exists r^n . \top) $ for all $n \geq 1$.
\end{exmp}
Note that the ontology $\Omc_S$ in Example~\ref{ex:ELI} is acyclic and
thus in contrast to the $\ELU$-to-$\EL$ case, finite $\EL_\bot$
approximations of acyclic \ALC ontologies need not
exist. In a sense, Example~\ref{ex:ELUbot} shows the same negative
result for the $\ELU_\bot\!$-to-$\EL_\bot$ case. While the ontology
used there is not strictly acyclic, acyclic ontologies do not make
much sense in the case of $\ELU_\bot$ and additionally admitting CIs
$C_1 \sqcap C_2 \sqsubseteq \bot$ as used in Example~\ref{ex:ELUbot}
seems to be the most modest extension possible that incorporates
$\bot$ in a meaningful way.

Despite these additional challenges, we can extend the approximation
given in Section~\ref{sect:elutoel} to $\ELU_\bot\!$-to-$\EL_\bot$ and
to $\ALC$-to-$\EL_\bot$ when we are willing to drop
$\Omc_S$-generatability and, as a consequence, accept the fact that
approximations are infinite unless they are depth bounded. Note that
the latter is also the case in \Lmc-to-\EL approximation where \Lmc
is an expressive Horn DL such as
\ELI~\cite{DBLP:conf/ijcai/BotcherLW19}.

We first reduce $\ALC$-to-$\EL_\bot$ approximations to
$\ELU_\bot\!$-to-$\EL_\bot$ approximations. Let $\Omc_S$ be an \ALC
ontology. We can transform $\Omc_S$ into an $\ELU_\bot$ ontology as
follows:
\begin{enumerate}

\item replace each subconcept $\forall r . C$ with $\neg \exists r
  . \neg C$;

\item select a concept $\neg C$ such that $C$ contains no negation,
  replace all occurrences of $\neg C$ with the fresh concept name
  $A_{\neg C}$, and add the CIs $\top \sqsubseteq C \sqcup A_{\neg
    C}$ and $C \sqcap A_{\neg C} \sqsubseteq \bot$; repeat until no
  longer possible.

\end{enumerate}
The resulting ontology $\Omc'_S$ is of size polynomial in $||\Omc_S||$
and a conservative extension of $\Omc_S$ in the sense that $\Omc_S
\models C \sqsubseteq D$ iff $\Omc'_S \models C \sqsubseteq D$ for all
\ALC concepts $C,D$ over $\mn{sig}(\Omc_S)$.  Consequently, every
$\EL_\bot$ approximation of $\Omc'_S$ is also a (projective) $\EL_\bot$
approximation of~$\Omc_S$.

\begin{figure}[t]
   \begin{boxedminipage}{\columnwidth}
     \vspace*{-2mm} 
  \begin{center}
  $$
  \begin{array}{rcll}
    C &\sqsubseteq& \mn{DNF}(E)^\uparrow & \text{if } C \sqsubseteq E
  \in \Omc_S \\[0.5mm]
  X_{D} \sqcap D_1^\uparrow &\sqsubseteq&
  D_2^\uparrow & \text{if } \Omc_S \models D \sqcap D_1 \sqsubseteq
  D_2\\[0.5mm]
 \exists r . X_{D} &\sqsubseteq& D_1^\uparrow & \text{if }
   \Omc_S \models \exists r . D \sqsubseteq D_1 \\[0.5mm]
   F^\uparrow &\sqsubseteq& \exists r . G & \text{if } \Omc_S \models F \sqsubseteq
  \exists r . G 
  \end{array}
  $$
  \end{center}
  where in the last line $F$ is an \EL concept over $\mn{sig}(\Omc_S)$
  decorated with disjunctions from $\mn{Dis}(\Omc_S)$ at leaves and
  $G$ is an \EL concept over $\mn{sig}(\Omc_S)$ such that 
    \begin{enumerate}

    \item $F$ has no top-level conjunct $\exists r . F'$ s.t.\ $\Omc_S
      \models F' \sqsubseteq G$;




    \item   $\mn{depth}(F) \leq \mn{depth}(G) < \ell$.

    \end{enumerate}
  \end{boxedminipage}
  \caption{$\ell$-bounded $\EL_\bot$ approximation $\Omc^\ell_T$.}
\label{fig:third}
\vspace*{-3mm}
\end{figure}
It thus suffices to consider $\ELU_\bot\!$-to-$\EL_\bot$
approximations. Thus let $\Omc_S$ be an $\ELU_\bot$ ontology. For each
$\ell \in \mathbb{N} \cup \{ \omega \}$, the $\EL_\bot$ approximation
$\Omc^\ell_T$ of $\Omc_S$ is given in Figure~\ref{fig:second} where
again $D$ ranges over $\mn{Dis}^-(\Omc_S)$ and $D_1,D_2$ range over
$\mn{Dis}(\Omc_S)$; both $\mn{Dis}(\Omc_S)$ and $\mn{Dis}^-(\Omc_S)$
are defined exactly as for $\ELU$ ontologies and in $\mn{DNF}(C)$ we
drop all disjuncts that contain $\bot$ as a conjunct, possibly
resulting in the empty disjunction (which represents $\bot$).  Point~1
can be viewed as an optimization that sometimes helps to avoid the
expensive last line.  There, a \emph{top-level conjunct} means a
concept $F_i$ if $F$ takes the form $F_1 \sqcap \cdots \sqcap F_n$, $n
\geq 1$.
In the appendix
we point out another non-trivial such optimization.
\begin{theorem}
\label{thm:corrbot}
  $\Omc^\ell_T$ is an $\ell$-bounded $\EL_\bot$ approximation of $\Omc_S$.  
\end{theorem}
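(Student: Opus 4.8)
The plan is to establish the two halves of Definition~\ref{def:approx_types} separately, exactly as for Theorem~\ref{thm:corr}: \emph{soundness}, i.e.\ $\Omc^\ell_T \models C \sqsubseteq D$ implies $\Omc_S \models C \sqsubseteq D$, and \emph{completeness}, the converse, for all $\EL_\bot$ concepts $C,D$ over $\mn{sig}(\Omc_S)$ of depth bounded by $\ell$. Since the reduction preceding the theorem shows that an $\EL_\bot$ approximation of the $\ELU_\bot$ translation $\Omc'_S$ of any \ALC ontology is also an $\EL_\bot$ approximation of that ontology, it suffices to treat the $\ELU_\bot$ case, and I would do so; the approximation is the one in Figure~\ref{fig:third}.

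For soundness I would take an arbitrary model $\Imc$ of $\Omc_S$ and extend it to a model of $\Omc^\ell_T$ by setting $X_D^\Imc := D^\Imc$ for every $D \in \mn{Dis}^-(\Omc_S)$, then verify each of the four lines of Figure~\ref{fig:third}, reading $\cdot^\uparrow$ as the identity under this interpretation. Lines~2--4 are sound by the same reasoning as in the $\ELU$ case. The only genuinely new point is Line~1 together with the modified $\mn{DNF}$: the disjuncts dropped because they contain $\bot$ are empty in $\Imc$, so $\mn{DNF}(E)^\uparrow$ still covers $E^\Imc$, and if all disjuncts are dropped the empty disjunction faithfully denotes $\bot$. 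Point~1 of Line~4 only removes CIs and hence cannot endanger soundness.

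Completeness is the hard half, and I would prove the analogue of Lemma~\ref{lem:comp}: $\Omc_S \models C_0 \sqsubseteq D_0$ implies $\Omc^\ell_T \models C_0 \sqsubseteq D_0$ whenever the role depth of $D_0$ is bounded by $\ell$ and $C_0$ is unrestricted. I would adapt the deterministic (non-disjunctive) chase used for Lemma~\ref{lem:comp}, which builds a tree from $C_0$, records disjunctions from $\mn{Dis}(\Omc_S)$ at each node through the $X_D$/$\cdot^\uparrow$ encoding, and fires rules mirroring Figure~\ref{fig:third}. The new ingredient is \emph{clash handling}: a node clashes precisely when the empty disjunction (i.e.\ $\bot$) becomes derivable at it, which can happen via Line~1 (empty $\mn{DNF}$), via Line~2 with $\Omc_S \models D \sqcap D_1 \sqsubseteq \bot$, or via Line~3 with $\Omc_S \models \exists r . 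D \sqsubseteq \bot$, with clashes then propagating toward the root along these same rules. If the root clashes, then $\Omc_S \models C_0 \sqsubseteq \bot$ and $\Omc^\ell_T \models C_0 \sqsubseteq \bot$, whence $\Omc^\ell_T \models C_0 \sqsubseteq D_0$ trivially. Otherwise I would show that the clash-free chase is $\Omc_S$-sound, so that the root derives $D_0$ whenever $\Omc_S \models C_0 \sqsubseteq D_0$, and complete, in that its markers can be resolved into a genuine model of $\Omc_S$ satisfying $C_0$ but falsifying any non-consequence.

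The main obstacle will be the completeness of this clash-aware chase, specifically that the \emph{deterministic} chase derives the empty disjunction at the root \emph{exactly} when $\Omc_S \models C_0 \sqsubseteq \bot$. Unlike in the $\ELU$ case, $C_0$ may be unsatisfiable only through an interaction of forced disjunctions with a $\bot$-clash that materialises across sibling successors, as in Example~\ref{ex:ELUbot}, where the clash is recovered by applying Line~4 to a compound $F$ carrying the disjunction markers of two distinct $r$-successors rather than by any single branch. I therefore expect the crux to be proving that the compact $X_D$-markers faithfully encode \emph{all} cases of the implicit disjunctive case analysis and that a clash arising in every case is captured by a single derivation of the empty disjunction. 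A secondary complication is that, because $\Omc_S$-generatability has been dropped, Line~4 now ranges over arbitrary decorated \EL concepts $F$ and \EL concepts $G$ of depth below $\ell$, so the chase tree is no longer bounded in branching by a fixed finite set; I would handle this by carrying out the completeness argument at depth $\le \ell$ and inducting on the bounded depth of $D_0$, and would separately verify that the Point~1 optimisation in Line~4 removes only CIs already entailed by the remaining ones, so that it does not compromise completeness.
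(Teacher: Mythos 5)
Your proposal matches the paper's proof in all essentials: the paper likewise reduces to the $\ELU_\bot$ case via the conservative-extension translation, treats $\bot$ as the empty disjunction (with $\mn{Dis}^\EL_{\Omc_S}(C_0)=\bot$ signalling a clash, split off first as a special case exactly as you suggest), and proves completeness through the deterministic marker chase (Lemmas~\ref{lem:compaux} and~\ref{lem:chasecompl}, stated directly for $\ELU_\bot$) combined with the truncation Lemma~\ref{lem:smallC} and an induction on the structure of $C_0$ in which Condition~1 of Figure~\ref{fig:third} replaces $\Omc_S$-generatability, just as you anticipate. Your only deviations are presentational and harmless: soundness is argued by extending a model of $\Omc_S$ with $X_D^\Imc := D^\Imc$ instead of the paper's chase-translation argument (both are routine), and the innocuousness of Condition~1 is established inside the paper's induction---Case~2 checks, via a second application of Lemma~\ref{lem:smallC}, that the truncated decorated concept $C^\pm$ itself satisfies Condition~1---rather than by your proposed separate verification that the omitted CIs are entailed by the remaining ones.
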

The proof of Theorems~\ref{thm:corr} and~\ref{thm:corrbot} also
establishes another result that will turn out to be interesting in the
context of ontology-mediated queries in Section~\ref{sect:OMQs}. We
use $\Omc_T^-$ to denote the restriction of $\Omc^\omega_T$ to the
(instantiations) of the first three lines in Figure~\ref{fig:third}
(equivalently: Figure~\ref{fig:second}). Clearly, $\Omc_T^-$ is always
finite.
%
\begin{restatable}{theorem}{thmcorrminus}
\label{thm:corrminus}
  Let $C_0,D_0$ be $\EL_\bot$ concepts with $D_0 \in
  \mn{sub}(\Omc_S)$. Then $\Omc_S \models C_0 \sqsubseteq D_0$ iff
  $\Omc^-_T \models C_0 \sqsubseteq D_0$.
\end{restatable}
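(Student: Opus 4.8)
Soundness is immediate and requires nothing new. Since $\Omc^-_T$ consists exactly of the instances of the first three lines of Figure~\ref{fig:third}, we have $\Omc^-_T \subseteq \Omc^\omega_T$, so $\Omc^-_T \models C_0 \sqsubseteq D_0$ implies $\Omc^\omega_T \models C_0 \sqsubseteq D_0$, and the soundness part of Theorem~\ref{thm:corrbot} then yields $\Omc_S \models C_0 \sqsubseteq D_0$. More directly, one extends any model $\Imc$ of $\Omc_S$ to the signature of $\Omc^-_T$ by setting $X_D^\Imc := D^\Imc$ for every $D \in \mn{Dis}^-(\Omc_S)$; under this choice $(\cdot)^\uparrow$ and $(\cdot)$ have the same extension, every instance of the three lines is satisfied by the corresponding $\Omc_S$-entailment (with the convention that dropping $\bot$-disjuncts in $\mn{DNF}$ is harmless since those disjuncts are empty in $\Imc$), and as $C_0,D_0$ mention no $X_D$, their extensions are untouched and $\Omc^-_T \models C_0 \sqsubseteq D_0$ transfers back to $\Omc_S$.

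For completeness assume $\Omc_S \models C_0 \sqsubseteq D_0$ with $D_0 \in \mn{sub}(\Omc_S)$; we must show $\Omc^-_T \models C_0 \sqsubseteq D_0$. The plan is to reuse the deterministic, disjunction-tracking chase of $C_0$ that drives the completeness proof of Theorem~\ref{thm:corrbot} (the $\EL_\bot$ analogue of Lemma~\ref{lem:comp}). That proof shows that $\Omc_S \models C_0 \sqsubseteq D_0$ is equivalent to the root $d_0$ of the chase satisfying $D_0$, and that satisfaction of an $\EL_\bot$ concept at a chase node is mirrored by the CI applications of Figure~\ref{fig:third}, hence by $\Omc^\omega_T$. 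What I would add is a refinement of the second part that tracks \emph{which} lines of Figure~\ref{fig:third} are actually responsible for $d_0 \in D_0$ when the target is a subconcept of $\Omc_S$.

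The key claim is that for $D_0 \in \mn{sub}(\Omc_S)$ the root satisfaction is witnessed using only lines~1--3, i.e.\ entirely within $\Omc^-_T$, and I would prove it by induction on the structure of $D_0$. If $D_0$ is a concept name or $\bot$, it is captured by the disjunctive bookkeeping carried by the names $X_D$, which is precisely lines~1--3 (line~2 with an appropriate consequence $D_2$, the empty disjunction playing the role of $\bot$). The case $D_0 = F_1 \sqcap F_2$ is immediate from the induction hypothesis, as $F_1,F_2 \in \mn{sub}(\Omc_S)$. The interesting case is $D_0 = \exists r . F$ with $\exists r . F \in \mn{sub}(\Omc_S)$, hence $F \in \mn{sub}(\Omc_S)$: the chase realizes $\exists r . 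F$ at $d_0$ through an $r$-successor $w$ satisfying $F$, and the point is that such a successor originates from line~1 (an existential restriction carried over from the right-hand side of an $\Omc_S$-CI, after resolving any enclosing disjunction via lines~2--3) rather than from the generatability line~4; by the induction hypothesis the satisfaction of $F$ at $w$ again stays within $\Omc^-_T$, so the entire derivation does.

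The main obstacle is exactly the justification that line~4 is dispensable in the existential case. Line~4 exists to realize $\Omc_S$-generatable concepts $G$ on the right, and this is the mechanism by which the full approximation proves subsumptions whose right-hand side carries existential structure that is \emph{not} a subconcept of $\Omc_S$, as in Proposition~\ref{ex:2}, where the target $\exists r.(B_1 \sqcap \exists s^n.\top)$ involves $\exists s^n.\top \notin \mn{sub}(\Omc_S)$. One must verify that every entailed existential demand that is itself a subconcept of $\Omc_S$ is already met by a line~1 successor: informally, an $\exists r . F \in \mn{sub}(\Omc_S)$ can only be forced at $d_0$ by the structure of $C_0$ or by an existential restriction occurring on the right-hand side of some $\Omc_S$-CI, and the latter are exactly the successors line~1 introduces, so no deeper, synthesized line~4 witness is ever required. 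I expect the bulk of the work to be a careful bookkeeping argument inside the existing chase confirming that, whenever the target is drawn from $\mn{sub}(\Omc_S)$, no application of line~4 is essential.
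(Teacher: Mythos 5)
Your soundness argument is fine and matches the paper's (monotonicity via $\Omc^-_T \subseteq \Omc^\omega_T$, or equivalently expanding models of $\Omc_S$ with $X_D^\Imc := D^\Imc$). The completeness half, however, has a genuine gap: the entire argument hinges on your ``key claim'' that for targets $D_0 = \exists r . F \in \mn{sub}(\Omc_S)$ the chase witness ``originates from line~1'' and that line~4 is dispensable, and you never prove this -- you explicitly defer it (``I expect the bulk of the work to be a careful bookkeeping argument''). That deferred step is precisely the theorem, so the proposal as written is a proof plan, not a proof. Worse, the plan misidentifies the mechanism by which $\Omc^-_T$ derives existential targets: since $D_0 \in \mn{sub}(\Omc_S)$, the restriction $\exists r . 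F$ is itself an element of $\mn{sub}^-(\Omc_S)$ and is tracked \emph{atomically} by the semantic disjunctions underlying the fresh names $X_D$; it is extracted in a single application of line~2, never by constructing an $r$-successor and recursing into $F$. Your proposed structural induction on $D_0$ also faces a technical obstacle in the recursion: by a Lemma~\ref{lem:extra}-style analysis, the intermediate witness concept $C'$ with $\Omc^-_T \models C' \sqsubseteq F$ is a subconcept of $\Omc^-_T$ and hence may contain fresh names $X_{D'}$, while your induction hypothesis only covers left-hand sides over $\mn{sig}(\Omc_S)$, so it does not apply.

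For contrast, the paper's proof needs no induction and no chase bookkeeping beyond what is already packaged into Lemma~\ref{lem:compaux}: from $\Omc_T^- \models C_0 \sqsubseteq \mn{Dis}^\EL_{\Omc_S}(C_0)^\uparrow$ and $\Omc_S \models C_0 \sqsubseteq D_0$ with $D_0 \in \mn{sub}(\Omc_S)$, one observes that every top-level conjunct of $D_0$ lies in $\mn{sub}^-(\Omc_S)$ and therefore occurs in \emph{every} disjunct of the semantic disjunction $\mn{Dis}^\EL_{\Omc_S}(C_0)$. If that disjunction has a single disjunct, $D_0$ is literally a conjunct of $\mn{Dis}^\EL_{\Omc_S}(C_0)^\uparrow$ (the $\uparrow$-translation is the identity on the $\EL_\bot$ concept $D_0$) and we are done; if it has several disjuncts, then $\mn{Dis}^\EL_{\Omc_S}(C_0)^\uparrow = X_{\mn{Dis}^\EL_{\Omc_S}(C_0)}$ and $\Omc^-_T$ contains the line-2 instance $X_{\mn{Dis}^\EL_{\Omc_S}(C_0)} \sqcap X_{\mn{Dis}^\EL_{\Omc_S}(C_0)} \sqsubseteq D_0$, which closes the argument. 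Your intuition that line~4 only matters for right-hand sides carrying existential structure outside $\mn{sub}(\Omc_S)$ (as in Proposition~\ref{ex:2}) is correct, but to repair your proof you should replace the successor-and-recursion analysis by this atomic-tracking argument via Lemma~\ref{lem:compaux} and line~2.
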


%

\enlargethispage*{4mm}

\section{Approximations and Query Evaluation}
\label{sect:OMQs}

The notion of approximations given in Section~\ref{sect:prelims} is
tailored towards preserving subsumptions.
In ontology-mediated querying, in contrast, the main aim of
approximation is to preserve as many query answers as possible. We
propose a suitable notion of approximation and show that the results
obtained in the previous sections have interesting applications also in
ontology-mediated querying. 
%

Let \NI be a countably infinite set of \emph{individual names}
disjoint from \NC and \NR.  An \emph{ABox} is a finite set of
\emph{concept assertions} $A(a)$ and \emph{role assertions} $r(a,b)$
where $A \in \NC$, $r \in \NR$, and $a,b \in \NI$. We use
$\mn{Ind}(\Amc)$ to denote the set of individual names 
in
the ABox \Amc. An interpretation \Imc \emph{satisfies} a concept
assertion $A(a)$ if $a \in A^\Imc$ and a role assertion $r(a,b)$ if
$(a,b) \in r^\Imc$. 
It is a \emph{model} of an ABox if it satisfies all assertions in it. A
\emph{$\Sigma$-ABox} is an ABox \Amc with $\mn{sig}(\Amc)=\Sigma$.

An \emph{ontology-mediated query (OMQ)} is a triple
$Q=(\Omc,\Sigma,q)$ with \Omc an ontology, $\Sigma \subseteq
\mn{sig}(\Omc) \cup \mn{sig}(q)$ an ABox signature,
and $q$ an actual query. While conjunctive queries (CQs) and unions of
CQs are a popular choice for formulating $q$ and our central
Definition~\ref{def:approx_queryNEW} below makes sense also for these
richer query languages, for simplicity we concentrate on \emph{atomic
  queries (AQs)} $A(x)$ where $A$ is a concept name and on \emph{\EL
  queries (ELQs)} $C(x)$ where $C$ an \EL concept. We also
mention \emph{\ALC queries (ALCQs)} $C(x)$ where $C$ is an \ALC
concept.  Note that all such queries are unary. We use ELQ$(\Sigma)$
to denote the language of all ELQs that use only symbols from
signature~$\Sigma$. Let $(\Lmc,\Qmc)$ denote the \emph{OMQ language}
that contains all OMQs~$Q$ in which $\Omc$ is formulated in DL \Lmc
and $q$ in query language \Qmc, such as in
$(\EL,\text{AQ})$.

Let $Q=(\Omc,\Sigma,C(x))$ be an OMQ and \Amc a $\Sigma$-ABox. Then $a \in
\mn{Ind}(\Amc)$ is an \emph{answer} to $Q$ on \Amc, written $\Amc
\models Q(a)$, if $a \in C^\Imc$ for all models \Imc of \Omc and \Amc.
For OMQs $Q_1$ and $Q_2$, $Q_i =(\Omc_i,\Sigma,q_i)$, we say that
$Q_1$ is \emph{contained} in $Q_2$ and write $Q_1 \subseteq Q_2$ if
for every $\Sigma$-ABox \Amc and $a \in \mn{Ind}(\Amc)$, $\Amc
\models Q_1(a)$ implies $\Amc \models Q_2(a)$. We say that $Q_1$ is
\emph{equivalent} to $Q_2$ and write $Q_1 \equiv Q_2$ if $Q_1
\subseteq Q_2$ and $Q_2 \subseteq Q_1$. 

A natural definition of ontology approximation in the context of OMQs
is as follows.
%
%
%
\begin{defn} \label{def:approx_queryNEW} Let $\Omc_S$ be an \ALC
  ontology, 
$\Lmc_T$ one of the DLs
 from Section~\ref{sect:prelims}, and $\mathcal{Q}$ a query
  language.  An $\Lmc_T$ ontology $\Omc_T$ is an \emph{$\Lmc_T$
    approximation of $\Omc_S$ w.r.t $\mathcal{Q}$} if for all queries $q \in
  \Qmc$ and all signatures $\Sigma$ with $\Sigma \cap \mn{sig}(\Omc_T)
  \subseteq \mn{sig}(\Omc_S)$,
\begin{enumerate}

\item $(\Omc_S,\Sigma,q) \supseteq (\Omc_T,\Sigma,q)$ and

  \item $(\Omc_S,\Sigma,q) \supseteq Q$ implies
    $(\Omc_T ,\Sigma,q) \supseteq Q$ for all OMQs $Q
    =(\Omc'_T,\Sigma,q)$ with $\Omc'_T  \in
    \Lmc_T$.

\end{enumerate}
\end{defn}
$\Omc_T$ might use fresh symbols and thus approximations
are projective.
Informally, Point~1 is a soundness condition and Point~2 formalizes
`to preserve as many query answers as possible'.
%
It is not guaranteed
that the OMQs $(\Omc_S,\Sigma,q)$ and
$(\Omc_T,\Sigma,q)$ are equivalent for all relevant queries $q$ and
signatures $\Sigma$, and the following example shows that this is
in fact impossible to achieve. 
\begin{exmp}
  Let $\Omc_S$ be the \ELU ontology
  $$
  \Omc_S = \{  \top \sqsubseteq B_1 \sqcup B_2 \} \cup \{ B_i \sqcap
  \exists r . B_i \sqsubseteq A \mid i \in \{1,2\} \}
 $$
 Then an \EL approximation of $\Omc_S$ w.r.t.\
 ELQ is 
  $$
  \begin{array}{r@{\;}c@{\;}l}
  \Omc_T &=& \{ B_1 \sqcap B_2 \sqcap \exists r . \top \sqsubseteq A, \
  \exists r . (B_1 \sqcap B_2) \sqsubseteq A  \}\\[0.5mm]
&&\cup \; \{ B_i \sqcap
  \exists r . B_i \sqsubseteq A \mid i \in \{1,2\} \}.
  \end{array} 
 $$
 However, there is no OMQ in $(\EL,\text{ELQ})$ that is equivalent to
 $(\Omc_S,\{r\},A(x))$ since it would have to return $a$ as an answer
 on the ABox $\{ r(a,a) \}$, but not on the ABox $\{
 r(a,b),r(b,a)\}$. No OMQ from $(\EL,\text{ELQ})$ has this property.
\end{exmp}
It turns out that the approximations from Sections~\ref{sect:elutoel}
and~\ref{sec:bottom} are also useful in the context of
Definition~\ref{def:approx_queryNEW} when we choose ELQ or AQ as the
query language. In particular, it follows from Theorem~\ref{thm:corrminus}
that every \ALC ontology $\Omc_S$ has a finite $\EL_\bot$
approximation w.r.t.\ AQ.
%
\begin{restatable}{theorem}{thmOMQresults}
\label{thm:OMQresults}
Let $\Omc_S$ be an \ALC ontology, $\mn{sig}(\Omc_S)=\Sigma$.
Then
   \begin{enumerate}

   \item the ontology $\Omc_T^\omega$ from
     Section~\ref{sec:bottom} is an $\EL_\bot$ approximation of
     $\Omc_S$ w.r.t.~ELQ$(\Sigma)$;

     \item the ontology $\Omc_T^-$ from Section~\ref{sec:bottom} is a
     (finite) $\EL_\bot$ approximation of $\Omc_S$ w.r.t.~AQ;

   \item if $\Omc_S$ falls within \ELU, then the ontology
     $\Omc_T^\omega$ from Section~\ref{sect:elutoel} is an $\EL$
     approximation of $\Omc_S$ w.r.t.~ELQ$(\Sigma)$. 

  \end{enumerate}
\end{restatable}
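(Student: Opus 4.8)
The plan is to treat the soundness condition (Point~1 of Definition~\ref{def:approx_queryNEW}) uniformly for all three items by a model-expansion argument, and then to reduce the maximality condition (Point~2) to the subsumption-approximation results already established: Theorem~\ref{thm:corrbot} for item~1, Theorem~\ref{thm:corr} for item~3, and the finite ontology $\Omc_T^-$ via Theorem~\ref{thm:corrminus} for item~2. The glue between OMQ answers and subsumptions will be a single lemma stating that query entailment over a (possibly cyclic) ABox is always witnessed by a finite-depth subsumption.

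First I would establish soundness. Let $\Omc_T$ be $\Omc_T^\omega$ or $\Omc_T^-$, let $\Amc$ be a $\Sigma$-ABox with $\Sigma \cap \mn{sig}(\Omc_T) \subseteq \mn{sig}(\Omc_S)$, let $q=C(x)$, and suppose $a$ is an answer to $(\Omc_T,\Sigma,q)$. Given any model $\Imc$ of $\Omc_S$ and $\Amc$, I expand it to $\Imc'$ by setting $X_D^{\Imc'} := D^\Imc$ for every fresh name $X_D$ while leaving all $\Sigma$-symbols untouched. A line-by-line inspection of Figure~\ref{fig:second}/\ref{fig:third} shows $\Imc' \models \Omc_T$: under the reading $X_D \mapsto D$ each CI is a consequence of $\Omc_S$, and $\mn{DNF}(E)$ is equivalent to $E$. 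Since the fresh names do not occur in the $\Sigma$-ABox, $\Imc' \models \Amc$, so $a \in C^{\Imc'} = C^\Imc$. As $\Imc$ was arbitrary, $a$ is an answer to $(\Omc_S,\Sigma,q)$, giving Point~1 for all three items at once.

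The core is the following lemma. For an $\EL_\bot$ ontology $\Omc$, a $\Sigma$-ABox $\Amc$ consistent with $\Omc$, an individual $a$, and an $\EL_\bot$ concept $C$, I would show that $\Omc,\Amc \models C(a)$ iff there is a finite $\EL_\bot$ concept $C^\ast$ over $\Sigma$, obtained by unravelling $\Amc$ from $a$ to some finite depth, with $\Omc \models C^\ast \sqsubseteq C$. The ``if'' direction is immediate, since the unravelling maps homomorphically into $\Amc$ and hence into the canonical (least) model $\mathcal{U}_{\Omc,\Amc}$, so $a \in (C^\ast)^{\mathcal{U}} \subseteq C^{\mathcal{U}}$. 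For ``only if'' I would use that $\mathcal{U}_{\Omc,\Amc}$ is the \emph{least} model of the Horn ontology $\Omc$: every fact $a \in C^{\mathcal{U}}$ is witnessed by a well-founded, hence finite-depth, derivation touching only a bounded neighbourhood of $a$ in $\Amc$ together with a bounded part of the tree-shaped anonymous region; unravelling $\Amc$ from $a$ beyond that depth yields $C^\ast$, and the same derivation replays over $\mathcal{U}_{\Omc,\Amc_{C^\ast}}$, giving $\Omc \models C^\ast \sqsubseteq C$. With this lemma the maximality proof is short. Let $\Omc'_T \in \Lmc_T$ be sound, i.e.\ $(\Omc_S,\Sigma,q) \supseteq (\Omc'_T,\Sigma,q)$, and let $a$ be an answer to $(\Omc'_T,\Sigma,q)$ on $\Amc$. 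The lemma gives a finite $C^\ast$ with $\Omc'_T \models C^\ast \sqsubseteq C$; after restricting $\Sigma$ to $\mn{sig}(\Omc_S)$ (fresh ABox symbols are unconstrained and eliminable), $C^\ast$ is over $\mn{sig}(\Omc_S)$. Passing to the tree ABox $\Amc_{C^\ast}$, whose root $\rho$ forces exactly $C^\ast$, its root is an answer to $(\Omc'_T,\Sigma,q)$, so soundness of $\Omc'_T$ yields $\Omc_S \models C^\ast \sqsubseteq C$. Completeness of the subsumption approximation then applies: Theorem~\ref{thm:corrbot} (item~1) or Theorem~\ref{thm:corr} (item~3) gives $\Omc_T^\omega \models C^\ast \sqsubseteq C$, and since $C^\ast$ maps homomorphically into $\Amc$ at $a$ we get $a \in (C^\ast)^{\mathcal{U}_{\Omc_T^\omega,\Amc}} \subseteq C^{\mathcal{U}_{\Omc_T^\omega,\Amc}}$, i.e.\ $a$ is an answer to $(\Omc_T^\omega,\Sigma,q)$.

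For item~2 the query is an AQ $A(x)$ with $A \in \mn{sig}(\Omc_S) \subseteq \mn{sub}(\Omc_S)$, so the witnessing subsumption $C^\ast \sqsubseteq A$ has its right-hand side in $\mn{sub}(\Omc_S)$ -- precisely the regime of Theorem~\ref{thm:corrminus}. From $\Omc_S \models C^\ast \sqsubseteq A$ I obtain $\Omc_T^- \models C^\ast \sqsubseteq A$ with the \emph{finite} ontology $\Omc_T^-$, whose existential-generating rules (last line of Figure~\ref{fig:third}) are dispensable exactly because the query is atomic; together with model-expansion soundness this shows $\Omc_T^-$ is a finite $\EL_\bot$ approximation w.r.t.\ AQ. The main obstacle is the unravelling lemma: one must argue that entailment over cyclic ABoxes is always witnessed at bounded depth, which relies on the well-foundedness of least-model derivations in the Horn logic $\EL_\bot$ (and genuinely fails for the non-Horn $\Omc_S$, as the self-loop examples show), and one must separately dispatch the case where $\Amc$ is inconsistent with $\Omc'_T$ by reducing it, via soundness, to a $\bot$-subsumption preserved by Theorem~\ref{thm:corrbot}/\ref{thm:corrminus}.
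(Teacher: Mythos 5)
Your overall strategy coincides with the paper's: the paper's proof rests on exactly your ``core lemma,'' split into Fact~1 (unravelling invariance of $\EL_\bot$ query entailment, cited from \cite{DBLP:journals/jsc/LutzW10} rather than reproved via least models as you do) plus a compactness step extracting a finite ditree sub-ABox, and then reduces Point~2 of Definition~\ref{def:approx_queryNEW} to the subsumption approximations exactly as you propose (Theorem~\ref{thm:corrbot}, Theorem~\ref{thm:corr}, Theorem~\ref{thm:corrminus} for items 1--3/2). Your two genuine variations are benign: you prove soundness (Point~1) by a direct model expansion $X_D^{\Imc'}:=D^\Imc$, where the paper instead routes Point~1 through the same unravelling-plus-compactness machinery and the soundness half of the subsumption approximation; and you prove the witnessing lemma from well-foundedness of canonical-model derivations instead of citing it. Both work, and your awareness that the unravelling step is exactly where Hornness of $\Lmc_T$ enters (and fails for $\Omc_S$) matches the paper's self-loop example.

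There is, however, one step that fails as written: in the maximality argument you restrict $C^\ast$ to $\mn{sig}(\Omc_S)$ \emph{before} invoking soundness of the competitor $\Omc'_T$. Definition~\ref{def:approx_queryNEW} constrains only $\mn{sig}(\Omc_T)$, not $\mn{sig}(\Omc'_T)$: the competitor ontology may freely use symbols of $\Sigma\setminus\mn{sig}(\Omc_S)$, so the witness $C^\ast$ extracted from $\Omc'_T\models C^\ast\sqsubseteq C$ may genuinely depend on those symbols, and after dropping the corresponding conjuncts the root of $\Amc_{C^\ast}$ need no longer be an answer to $(\Omc'_T,\Sigma,q)$ --- at which point your appeal to soundness of $\Omc'_T$ has nothing to apply to. The repair is to reverse the order, which is precisely how the paper sequences it: keep the full $C^\ast$ over $\Sigma$, use soundness of $\Omc'_T$ on the tree ABox $\Amc_{C^\ast}$ to obtain $\Omc_S\models C^\ast\sqsubseteq C$ (in the paper: $\Omc_S,\Amc_a^\ast\models C(a)$), and \emph{only then} delete the foreign assertions, which is harmless because neither $\Omc_S$ nor $q$ mentions them --- though even this step deserves the check you elide, namely that a model of $\Omc_S$ and the restricted ABox can be expanded to satisfy the deleted assertions (fresh individuals must still satisfy $\Omc_S$, handled by disjoint unions of models when $\Omc_S$ is consistent and trivially otherwise). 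Since all the needed ingredients are already in your write-up, this is a local ordering slip rather than a structural failure, but as stated the restriction-first version is unsound.
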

Point~2 also implies that $\Omc_T^-$ is an \EL approximation of
$\Omc_S$ w.r.t.~AQ whenever $\Omc_S$ is an \ELU ontology.  We close
with an interesting application of Theorem~\ref{thm:OMQresults}.

The topic of rewriting an OMQ into a simpler query language has
received a lot of interest in the literature, see for example
\cite{DBLP:journals/jar/CalvaneseGLLR07,DBLP:journals/ai/GottlobKKPSZ14,DBLP:journals/ai/KaminskiNG16,DBLP:journals/lmcs/FeierKL19}.
An OMQ $Q$ is $(\Lmc,\Qmc)$-\emph{rewritable} if there is an OMQ $Q'$
in the OMQ language $(\Lmc,\Qmc)$ such that $Q \equiv Q'$. 

By virtue
of Theorem~\ref{thm:OMQresults}, we can decide whether an OMQ
$Q=(\Omc,\Sigma,A(x))$ from $(\ALC,\text{AQ})$ is
$(\EL_\bot,\text{AQ})$-rewritable. It can be seen that this is the
case if and only if $Q$ is equivalent to an OMQ $Q' \in
(\EL_\bot,\text{AQ})$ 
of the form
$(\Omc',\Sigma,A(x))$. By
Condition~2 of
Definition~\ref{def:approx_queryNEW}, it thus suffices to
construct the finite $\EL_\bot$ approximation
$\Omc_T^-$ of $\Omc$ w.r.t.\ AQ from Theorem~\ref{thm:OMQresults}
and check whether $Q \equiv
(\Omc_T^-,\Sigma,A(x))$, which is decidable
\cite{DBLP:journals/tods/BienvenuCLW14}.  This
result extends to $(\ALC,\text{ALCQ})$ since every OMQ from this language is
equivalent to one from $(\ALC,\text{AQ})$.  Via the results in
\cite{DBLP:conf/ijcai/FeierLW18}, this can be lifted further to
a certain class of conjunctive queries.\looseness=-1  
\begin{theorem}
  Given an OMQ $Q \in (\ALC,\text{ALCQ})$, it is decidable whether $Q$
  is $(\EL_\bot,\text{AQ})$-rewritable.
\end{theorem}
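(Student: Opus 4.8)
The plan is to follow the three-step recipe implicit in the discussion above: first reduce $(\ALC,\text{ALCQ})$ to $(\ALC,\text{AQ})$, then characterise $(\EL_\bot,\text{AQ})$-rewritability of an $(\ALC,\text{AQ})$ query by a single equivalence test against the finite ontology $\Omc_T^-$, and finally discharge that test with a known decision procedure for OMQ equivalence.

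For the first step, given $Q=(\Omc,\Sigma,C(x))$ with $C$ an \ALC concept, I would introduce a fresh concept name $A$, set $\hat\Omc=\Omc\cup\{A\equiv C\}$, and put $\hat Q=(\hat\Omc,\Sigma,A(x))$. Since $A\notin\Sigma$, the certain answers to $C$ under $\Omc$ and to $A$ under $\hat\Omc$ coincide on every $\Sigma$-ABox, so $Q\equiv\hat Q$ while $\hat\Omc$ is still an \ALC ontology. As $(\EL_\bot,\text{AQ})$-rewritability is invariant under OMQ equivalence, $Q$ is rewritable iff $\hat Q$ is, so it suffices to decide rewritability of $\hat Q$.

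The core step is the characterisation: $\hat Q$ is $(\EL_\bot,\text{AQ})$-rewritable iff $\hat Q\equiv(\Omc_T^-,\Sigma,A(x))$, where $\Omc_T^-$ is the finite $\EL_\bot$ approximation of $\hat\Omc$ w.r.t.\ AQ guaranteed by Theorem~\ref{thm:OMQresults}(2). The ``if'' direction is immediate, since $(\Omc_T^-,\Sigma,A(x))\in(\EL_\bot,\text{AQ})$. For ``only if'', suppose $\hat Q\equiv(\Omc',\Sigma,B(x))$ with $\Omc'$ an $\EL_\bot$ ontology. I would first normalise the witness to use the query atom $A$: after renaming (harmless because $A\notin\Sigma$ and hence does not occur in ABoxes) we may assume $A\notin\mn{sig}(\Omc')$ whenever $B\neq A$, and then $\Omc''=\Omc'\cup\{A\equiv B\}$ yields $(\Omc'',\Sigma,A(x))\equiv(\Omc',\Sigma,B(x))\equiv\hat Q$ with $\Omc''$ still in $\EL_\bot$. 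Now I can apply Definition~\ref{def:approx_queryNEW} with query $A(x)$ and signature $\Sigma$, whose side condition $\Sigma\cap\mn{sig}(\Omc_T^-)\subseteq\mn{sig}(\hat\Omc)$ holds because $\Sigma\subseteq\mn{sig}(\hat\Omc)$ and every further symbol of $\Omc_T^-$ is fresh. Point~1 gives $(\Omc_T^-,\Sigma,A(x))\subseteq\hat Q$, and Point~2, whose premise $\hat Q\supseteq(\Omc'',\Sigma,A(x))$ follows from $\hat Q\equiv(\Omc'',\Sigma,A(x))$, yields $(\Omc'',\Sigma,A(x))\subseteq(\Omc_T^-,\Sigma,A(x))$ and hence $\hat Q\subseteq(\Omc_T^-,\Sigma,A(x))$; together these give the desired equivalence.

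Finally, $\hat Q\equiv(\Omc_T^-,\Sigma,A(x))$ is an equivalence between two OMQs whose ontologies are \ALC ontologies (as $\EL_\bot\subseteq\ALC$), whose queries are AQs, and which share the ABox signature $\Sigma$; such equivalence is decidable by \cite{DBLP:journals/tods/BienvenuCLW14}, and $\Omc_T^-$ is finite and effectively constructible, so the decision procedure is obtained. I expect the main obstacle to be the characterisation step, and within it the realignment of the query atom of an arbitrary $\EL_\bot$ witness with $A$ so that the maximality condition (Point~2 of Definition~\ref{def:approx_queryNEW}), which compares OMQs sharing a single query, becomes applicable; checking that the renaming and the $A\equiv B$ construction preserve certain answers on all $\Sigma$-ABoxes, together with the signature side condition, is where the care is needed. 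The lifting from $(\ALC,\text{AQ})$ back to $(\ALC,\text{ALCQ})$ is already subsumed by the first step, and the further extension to a class of conjunctive queries would proceed via \cite{DBLP:conf/ijcai/FeierLW18}.
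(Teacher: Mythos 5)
Your proposal is correct and takes essentially the same route as the paper: reduce $(\ALC,\text{ALCQ})$ to $(\ALC,\text{AQ})$ via a fresh concept name, characterise $(\EL_\bot,\text{AQ})$-rewritability as equivalence of the given OMQ with $(\Omc_T^-,\Sigma,A(x))$ using Points~1 and~2 of Definition~\ref{def:approx_queryNEW} together with Theorem~\ref{thm:OMQresults}(2), and decide that single equivalence via \cite{DBLP:journals/tods/BienvenuCLW14}. The only difference is that you explicitly work out the realignment of the query atom of an arbitrary $\EL_\bot$ witness (renaming plus adding $A \equiv B$), a step the paper compresses into ``it can be seen'', and your treatment of it, including the freshness of $A$ and the signature side condition, is sound.
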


\section{Conclusion}

We have investigated the structure and finiteness of ontology
approximations when transitioning from non-Horn DLs to Horn DLs.  We
believe that our results shed significant light on the situation.
It remains, however, an important and challenging topic for future
work to push our techniques further towards practical
applicability. Also, there are many other relevant cases of
approximation. As a first step, one might think about extending the
DLs considered in this paper with role inclusions.  It might further
be interesting to study the problem to decide whether a given (finite)
candidate is an approximation of a given ontology. We expect this to
be quite non-trivial. A related result in \cite{DBLP:conf/kr/LutzSW12}
states that it is between \ExpTime and 2\ExpTime to decide whether a
given \ELU ontology $\Omc_S$ of a restricted syntactic form has a finite
complete \EL approximation. Without the restriction, even decidability
is open.



\section*{Acknowledgements}
Supported by the DFG Collaborative Research Center 1320 EASE - Everyday Activity Science and Engineering.




\cleardoublepage

\appendix

\section{Proofs for Propositions~\ref{ex:1}, \ref{ex:2},
  and~\ref{prop:nonelem}}

We state the results to be proved again.

\firstprop*
\noindent
\begin{proof}\
We show that $\Omc_{S}$ has no finite non-projective \EL approximation. Observe that the ontology
$\Omc$ obtained from $\Omc_{S}$ by replacing the topmost CI with the infinite set 
$$
M = \{A' \sqcap \exists r^{n}.A \sqsubseteq M \mid n\geq 0 \}
$$
is an infinite non-projective \EL approximation of $\Omc_{S}$. Now assume for a proof by
contradiction that there exists a finite non-projective \EL approximation of $\Omc_{S}$. Then,
by compactness of reasoning in \EL, there exists a finite subset $\Omc'$ of $\Omc$ 
that is an \EL approximation of $\Omc_{S}$.
Let $n$ be maximal such that $A' \sqcap \exists r^{n}.A \sqsubseteq M\in \Omc'$. Then
$\Omc'\not\models A'\sqcap \exists r^{n+1}.A\sqsubseteq M$ and we have derived a contradiction.
\end{proof}

To prove Proposition~\ref{ex:2} and~\ref{prop:nonelem}, we use the
following lemma from~\cite{DBLP:journals/jsc/LutzW10}.  If $C$ is an
\EL concept of the form $C_1 \sqcap \cdots \sqcap C_n$, $n \geq 1$,
then the \emph{top-level conjuncts} of $C$ are $C_1,\dots,C_n$.
\begin{lemma}\label{lem:extra}
Let $\Omc$ be an $\mathcal{EL}$ ontology and $C,D$ be \EL concepts.
Then $\Omc\models C \sqsubseteq \exists r.D$ implies that 
\begin{enumerate}
\item there exists a top-level conjunct $\exists r.C'$ of $C$ such that $\Omc\models C'\sqsubseteq D$ or
\item there exists a $C' \in \mn{sub}(\Omc)$ such that $\Omc\models C \sqsubseteq \exists r.C'$ and 
$\Omc\models C'\sqsubseteq D$.
\end{enumerate}
\end{lemma}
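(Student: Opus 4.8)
The plan is to prove the lemma via the canonical (universal) model of $C$ with respect to $\Omc$, which is the standard tool for analysing $\EL$ subsumption. First I would construct an interpretation $\Imc_{C,\Omc}$ whose domain elements are representatives $d_E$ of concepts $E$: a root $d_C$ standing for the whole concept $C$, together with one element $d_{C'}$ for every existential subconcept $\exists r . C' \in \mn{sub}(\Omc)$ and, where needed, for the top-level existential conjuncts of $C$. Concept names are interpreted by setting $d_E \in A^{\Imc_{C,\Omc}}$ iff $\Omc \models E \sqsubseteq A$, and the role successors of each representative are produced by a chase that, for every entailed existential, installs the appropriate $d_{C'}$ as an $r$-successor. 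I would then establish the \emph{fundamental property}: for every $\EL$ concept $G$ and every representative $d_E$, we have $d_E \in G^{\Imc_{C,\Omc}}$ iff $\Omc \models E \sqsubseteq G$. This property carries the weight of the argument, and proving it is the main obstacle (discussed below).

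Granting the fundamental property, the lemma follows quickly. Suppose $\Omc \models C \sqsubseteq \exists r . D$. Applying the fundamental property at the root gives $d_C \in (\exists r . D)^{\Imc_{C,\Omc}}$, so $d_C$ has an $r$-successor $e$ with $e \in D^{\Imc_{C,\Omc}}$. By construction every $r$-successor of the root is created in exactly one of two ways. If $e$ stems from a top-level conjunct $\exists r . C'$ of $C$, then $e = d_{C'}$ and the fundamental property yields $\Omc \models C' \sqsubseteq D$, which is Case~1. Otherwise $e = d_{C'}$ for some $\exists r . C' \in \mn{sub}(\Omc)$; here the fundamental property again gives $\Omc \models C' \sqsubseteq D$, and since $(d_C,d_{C'})$ is an $r$-edge with $d_{C'} \in (C')^{\Imc_{C,\Omc}}$ (trivially, as $\Omc \models C' \sqsubseteq C'$), we get $d_C \in (\exists r . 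C')^{\Imc_{C,\Omc}}$ and hence $\Omc \models C \sqsubseteq \exists r . C'$. As $C' \in \mn{sub}(\Omc)$, this is Case~2. The crucial observation is that the two sources of role successors in the canonical model match the two disjuncts of the lemma exactly.

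The hard part is the fundamental property, which splits into a \emph{soundness} direction $d_E \in G^{\Imc_{C,\Omc}} \Rightarrow \Omc \models E \sqsubseteq G$ and a \emph{completeness} direction $\Omc \models E \sqsubseteq G \Rightarrow d_E \in G^{\Imc_{C,\Omc}}$. Soundness I would prove by induction on the structure of $G$, the only nontrivial case being $G = \exists s . G'$, where I pass to the successor witnessing membership and invoke the inductive hypothesis, using that the substructure below each representative is again the relevant canonical model. Completeness I would reduce to two facts: that $\Imc_{C,\Omc}$ is a model of $\Omc$, and that each representative satisfies the concept it stands for, i.e.\ $d_E \in E^{\Imc_{C,\Omc}}$; together with $\Omc \models E \sqsubseteq G$ these immediately give $d_E \in G^{\Imc_{C,\Omc}}$. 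Verifying $\Imc_{C,\Omc} \models \Omc$ is where the chase must be closed under all entailed existentials, and the principal care goes into the fact that $\Omc$ may be cyclic, so $\Imc_{C,\Omc}$ can be infinite and the construction must be read as a limit; one then checks that closing under entailed rather than merely syntactic existentials creates no spurious subsumptions, which is exactly what the soundness induction certifies. Since this canonical-model machinery for $\EL$ is standard, in the write-up I would either cite it from \cite{DBLP:journals/jsc/LutzW10} or isolate it as a construction lemma, keeping the short two-case argument above as the proof proper.
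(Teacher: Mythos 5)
Your proposal is correct, but note that the paper contains no proof of this lemma to compare against: Lemma~\ref{lem:extra} is imported verbatim from \cite{DBLP:journals/jsc/LutzW10}, exactly as you anticipated in your closing remark, and the canonical-model machinery you sketch is the standard argument by which it is established there. Your two-case analysis at the root is the right reconstruction: the $r$-successors of $d_C$ arise either from the syntactic tree of $C$ (yielding a top-level conjunct $\exists r . C'$ with $\Omc \models C' \sqsubseteq D$, Case~1) or from chase-installed edges to representatives $d_{C'}$ with $\exists r . C' \in \mn{sub}(\Omc)$ (Case~2), which is precisely why the lemma has its two-disjunct form. You also correctly defuse the circularity lurking in the fundamental property: restricted completeness for concepts in $\mn{sub}(\Omc)$ holds directly by construction (edges are installed for all \emph{entailed} existentials over $\mn{sub}(\Omc)$, and $d_{C'} \in (C')^{\Imc_{C,\Omc}}$ goes through by induction on concept size), this gives modelhood, and modelhood plus $d_E \in E^{\Imc_{C,\Omc}}$ gives completeness for arbitrary $G$ without invoking the lemma being proved. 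One point to make precise in a full write-up: for $C$ of depth greater than one, a root plus witnesses for the top-level existential conjuncts does not suffice; you need a representative for every node of the ditree unfolding $\Amc_C$ of $C$ (as defined in the paper's appendix preliminaries), so that your claim in the soundness induction that ``the substructure below each representative is again the relevant canonical model'' is literally true. Your phrase ``where needed'' gestures at this, and with that reading the construction is the standard one and the argument closes correctly.
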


\extwoprop*
\noindent
\begin{proof}\ Let $\Omc_T$ be a (potentially projective)
  $\mathcal{EL}$ approximation of $\Omc_S$. Then for all $n\geq 0$ and
  $m>n$, we have
\begin{itemize}

\item[(a)] $\Omc_T \models \exists r.(A \sqcap \exists s^{n}. \top)\sqsubseteq
\exists r. (B_{1} \sqcap \exists s^{n}.\top)$ and

\item[(b)] $\Omc_T \not\models \exists r.(A \sqcap \exists s^{n}. \top)\sqsubseteq
\exists r. (B_{1} \sqcap \exists s^{m}.\top)$ 

\end{itemize}
since the same is true for $\Omc_S$. To establish the desired result,
it suffices to argue that for every $n \geq 0$, there is a $C_n \in
\mn{sub}(\Omc_T)$ such that $\Omc_T \models C_n \sqsubseteq
B_{1}\sqcap \exists s^{n}.\top$ and $\Omc_T \not\models C_n
\sqsubseteq B_{1} \sqcap \exists s^{m}.\top$ for any $m > n$.  In
fact, if this
is the case, then $\Omc_T$ has infinitely many subconcepts
and is thus infinite.

Let $n \geq 0$. First note that 
\begin{itemize}

\item[(c)] $\Omc_T \not\models A \sqcap \exists s^{n}.\top
  \sqsubseteq B_{1}$.

\end{itemize}
because the same is true for $\Omc_S$. It follows from $(a)$, $(c)$,
and Lemma~\ref{lem:extra} that there exists a $C \in \mn{sub}(\Omc_T)$
such that $\Omc_T \models \exists r . (A \sqcap \exists s^{n}.\top)
\sqsubseteq \exists r . C$ and $\Omc_T \models C \sqsubseteq B_{1}
\sqcap \exists s^{n}.\top$.  Set $C_n = C$. By choice and by (b), $C_n$ 
is as desired.
\end{proof}

\propnonelem
\noindent
\begin{proof}\
Assume that a depth bound $\ell\geq 1$ is given.
\EL concepts $C_{1},C_{2}$ are \emph{incomparable} w.r.t.\ $\Omc_S$ if
neither $\Omc_S\models C_{1}\sqsubseteq C_{2}$ nor $\Omc_S\models
C_{2}\sqsubseteq C_{1}$.  Take a set $\Omega$ of \EL concepts of depth
bounded by $\ell-1$ that are pairwise incomparable w.r.t.\ $\Omc_S$ and use
only the symbols
$r_{1},r_{2},A_{1},\hat{A}_{1},\ldots,A_{n},\hat{A}_{n}$.
It is straightforward to
construct such a set $\Omega$ and that 
has size at least $\mn{tower}(\ell,n)$. 
%
It then suffices to show that for every $E\in \Omega$ there exists a
$C_{E} \in \mn{sub}(\Omc_T)$ such that $\Omc_T\models
C_{E}\sqsubseteq E$ and $\Omc_T\not\models C_{E}\sqsubseteq E'$ for
any $E'\in \Omega$ with $E'\not=E$. 

Let $E\in \Omega$. Then
\begin{itemize}
\item[(a)]
$\Omc_T \models \exists r.(A \sqcap E) \sqsubseteq \exists r.(B_{1}
\sqcap E)$, 

\item[(b)]
$\Omc_T \not\models \exists r.(A \sqcap E) \sqsubseteq \exists r.(B_{1}
\sqcap E')$ for any $E' \in \Omega$ with $E' \neq E$, and

\item[(c)] $\Omc_T\not\models A \sqcap E \sqsubseteq B_{1}$. 

\end{itemize}
since the same is true for $\Omc_S$.  Thus, similarly to the proof of
Proposition~\ref{ex:2} we can show that must exist a 
$C \in \mn{sub}(\Omc_T)$ such that $\Omc_T\models \exists r.(A\sqcap E)
\sqsubseteq \exists r.C$ and $\Omc_T \models C \sqsubseteq
B_{1} \sqcap E$ and use $C$ as $C_E$.
\end{proof}

\section{Proof of Theorem~\ref{thm:decidable}}

\lemCharOgen*

\noindent 
\begin{proof}\
Observe that the number of non-logically equivalent \EL concepts over $\Sigma=\mn{sig}(\Omc_{S})$ and of depth bounded by $n$
is finite, for any natural number $n\geq 0$. Moreover, any two \EL concepts of distinct depth are not logically equivalent. 
Thus, there are infinitely many non-logically equivalent $\Omc_{S}$-generatable \EL concepts if, and only if, for every 
$n\geq 0$ there exists an $\Omc_{S}$-generatable \EL concept of depth $n$. The latter holds if, and only if, for
every $n\geq 0$ there exist role names $r_{1},\ldots,r_{n}$ in $\Omc_{S}$ such that $\exists r_{1}.\cdots \exists r_{n}.\top$
is $\Omc_{S}$-generatable.
\end{proof}

\thmdecidable*

\noindent 
\begin{proof}\
It follows from Lemma~\ref{lem:charOgen} that
it suffices to decide whether there exists a bound $\ell\geq 0$ such that for every 
$\exists r.D\in \mn{sub}(\Omc_{S})$ on the right hand side of a CI in $\Omc_{S}$ and any sequence 
$r_{1},\ldots,r_{n}$ of role names in $\Omc_{S}$, if
$\Omc_{S}\models D \sqsubseteq \exists r_{1}.\cdots \exists r_{n}.\top$, then $n\leq \ell$. 
We show that there exists such an $\ell$ if, and only if, there exists such an $\ell$ with 
$\ell\leq |\mn{sig}(\Omc_{S})|\times 2^{2^{||\Omc_{S}||}}$. Then decidability follows directly.
We use a straightforward pumping argument to show the claim. Assume that there are 
$n> |\mn{sig}(\Omc_{S})|\times 2^{2^{||\Omc_{S}||}}$, 
$\exists r.D\in \mn{sub}(\Omc_{S})$ on the right hand side of a CI in $\Omc_{S}$, and role names $r_{1},\ldots,r_{n}$ in $\Omc_{S}$
with $\Omc_{S}\models D \sqsubseteq \exists r_{1}.\cdots \exists r_{n}.\top$. We show that then there exists
such a concept $\exists r.D$ and sequence of role names of length $n'>n$.
An \emph{$\Omc_{S}$-type} is a subset $t$ of the closure under single negation
of $\mn{sub}(\Omc_{S})$ such that for any $C\in\mn{sub}(\Omc_{S})$ either $C\in t$ or $\neg C\in t$ and there exists
a model $\Imc$ of $\Omc_{S}$ and $d\in \Delta^{\Imc}$ with $d\in (\bigsqcap_{C\in t}C)^{\Imc}$. We identify an $\Omc_{S}$-type
$t$ with the concept $\bigsqcap_{C\in t}C$ and let $D(\Omc_{S})$ be the set of disjunctions of $\Omc_{S}$-types 
(without repetitions).
We show that there exists a sequence $X_{1},\ldots,X_{n}\in D(\Omc_{S})$ such that 
$$
\Omc_{S}\models D \sqsubseteq \exists r_{1}.X_{1}, \quad \Omc_{S}\models X_{i}\sqsubseteq \exists r_{i+1}.X_{i+1},
$$
for all $i<n$. The proof is as follows. Let $X_{1}$ be the set of all $\Omc_{S}$-types $t$ such that
there exist a model $\Imc$ of $\Omc_{S}$ and $d,e\in\Delta^{\Imc}$ with $(d,e)\in r_{1}^{\Imc}$, $d\in D^{\Imc}$
and $e\in t^{\Imc}$. Assume that $X_{i}$ has been defined. Then $X_{i+1}$ is the set of all $\Omc_{S}$-types $t$ such that
there exist a model $\Imc$ of $\Omc_{S}$ and $d,e\in\Delta^{\Imc}$ with $(d,e)\in r_{i+1}^{\Imc}$, $d\in X_{i}^{\Imc}$
and $e\in t^{\Imc}$. By definition
$$
\Omc_{S}\models D \sqsubseteq \forall r_{1}.X_{1}, \quad \Omc_{S}\models X_{i}\sqsubseteq \forall r_{i+1}.X_{i+1},
$$
and now one can readily show by induction on $i$, and
using that $\Omc_{S}\models  D \sqsubseteq \exists r_{1}.\cdots \exists r_{n}.\top$, that
$$
\Omc_{S}\models D \sqsubseteq \exists r_{1}.X_{1}, \quad \Omc_{S}\models X_{i}\sqsubseteq \exists r_{i+1}.X_{i+1},
$$
for all $i<n$. Thus, as $n> |\mn{sig}(\Omc_{S})|\times 2^{2^{||\Omc_{S}||}}$, there exist $1< i < j \leq n$ such that 
$r_{i}=r_{j}$ and $X_{i}=X_{j}$. But then 
$$
\Omc_{S}\models D \sqsubseteq \exists r_{1}.\cdots \exists r_{j-1}.\exists r_{i}.\cdots.\exists r_{n}.\top,
$$
and we have found the sequence of role names of length $n+(j-i)>n$ we wanted.
\end{proof}

\section{Proof of Theorem~\ref{thm:corr}}

\subsection{Preliminaries}
\label{app:prelims}

We write $\Amc \models
C(a)$ if $a \in C^\Imc$ where \Imc is \Amc viewed as an interpretation
in the obvious way.  An ABox \Amc is \emph{ditree-shaped} if the
directed graph $G_\Amc=(\mn{Ind}(\Amc),\{(a,b) \mid r(a,b) \in \Amc
\})$ is a tree and there are no multi-edges, that is, $r(a,b),s(a,b)
\in \Amc$ implies $r=s$. Every \EL concept $C$ can be viewed as 
a ditree-shaped ABox $\Amc_C$ in an obvious way. 

\medskip

We will sometimes also use \emph{extended ABoxes}, that is, ABoxes
\Amc that can also contain concept assertions of the form~$C(a)$, $C$
a compound concept. If all concepts that occur in such assertions are
formulated in a description logic \Lmc, we speak of \emph{extended
  \Lmc-ABoxes}.  If \Amc is an extended ABox, then we use $\Amc^-$
to denote the non-extended ABox obtained from \Amc by removing all
assertions $C(a)$ where $C$ is not a concept name.

\medskip

We next introduce a standard chase procedure for \EL ontologies. The
procedure uses ABoxes as a data structure. Let \Omc be an \EL
ontology. There is a single chase rule that can be applied to an ABox
\Amc:
\begin{itemize}

\item if $C \sqsubseteq D \in \Omc$ and $\Amc \models C(a)$, then a
  copy of $\Amc_D$ whose individuals are disjoint from those in \Amc
  and replace \Amc with the union of \Amc and $\Amc_D$, identifying
  the root of the latter with $a$.
  
\end{itemize}
The chase starts with an ABox $\Amc_0$ and exhaustively applies the
above rule in a fair way, resulting in sequence of ABoxes
$\Amc_0,\Amc_1,\dots$. The \emph{result} of the chase is the
(potentially infinite) ABox $\bigcup_{i \geq 0} \Amc_i$ obtained in
the limit, denoted $\mn{ch}_\Omc(\Amc)$. The result is unique since
the chase is oblivious, that is, a rule can applied to $C \sqsubseteq
D$ and $C(a)$ even if $\Amc \models D(a)$ already holds. A proof of
the following is standard and omitted.
\begin{lemma}
\label{lem:chasecorr}
$\Omc \models C \sqsubseteq D$ iff $\mn{ch}_\Omc(\Amc_C) \models
D(a_0)$, for all \EL concepts $C$ and $D$.
\end{lemma}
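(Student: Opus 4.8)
The plan is to show that the chase result $\Imc := \mn{ch}_\Omc(\Amc_C)$, viewed as an interpretation, is a \emph{canonical} (universal) model: it is itself a model of $\Omc$ in which $a_0 \in C^\Imc$, and it maps homomorphically into every model $\mathcal{J}$ of $\Omc$ having some element $d_0 \in C^{\mathcal J}$, via a homomorphism sending $a_0$ to $d_0$. Both directions of the biimplication then follow immediately, so the work is to establish these two facts. I would rely on two routine auxiliary facts about \EL concepts: (a) homomorphisms preserve membership, i.e.\ if $h \colon \Imc_1 \to \Imc_2$ is a homomorphism and $a \in E^{\Imc_1}$, then $h(a) \in E^{\Imc_2}$, proved by induction on $E$; and (b) the correspondence that $d \in E^{\mathcal J}$ iff there is a homomorphism from the tree-shaped ABox $\Amc_E$ into $\mathcal J$ mapping the root of $\Amc_E$ to $d$.

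For the ``only if'' direction I would first argue that $\Imc$ is a model of $\Omc$. Fix $C' \sqsubseteq D' \in \Omc$ and $a$ with $\Imc \models C'(a)$. Since $\Imc = \bigcup_{i} \Amc_i$ and satisfaction of an \EL concept at a point is witnessed by finitely many assertions, already $\Amc_i \models C'(a)$ for some $i$, and this persists as no assertions are ever deleted; by fairness of the chase the trigger for $C' \sqsubseteq D'$ and $a$ is eventually fired, which adds a copy of $\Amc_{D'}$ rooted at $a$ and hence yields $\Imc \models D'(a)$. Thus $\Imc$ satisfies every CI of $\Omc$. Moreover $a_0 \in C^\Imc$ because $\Amc_0 = \Amc_C$ is contained in $\Imc$. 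Now if $\Omc \models C \sqsubseteq D$, then $a_0 \in C^\Imc$ forces $a_0 \in D^\Imc$, i.e.\ $\mn{ch}_\Omc(\Amc_C) \models D(a_0)$.

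For the ``if'' direction I would prove universality by constructing, for an arbitrary model $\mathcal J$ of $\Omc$ and an arbitrary $d_0 \in C^{\mathcal J}$, a homomorphism $h \colon \Imc \to \mathcal J$ with $h(a_0) = d_0$ as the union of an increasing chain of homomorphisms $h_i \colon \Amc_i \to \mathcal J$. The base case $h_0 \colon \Amc_C \to \mathcal J$ is given by fact~(b). For the inductive step, suppose $\Amc_{i+1}$ arises from $\Amc_i$ by applying $C' \sqsubseteq D' \in \Omc$ at $a$; by fact~(a) applied to $h_i$ we get $h_i(a) \in (C')^{\mathcal J}$, whence $h_i(a) \in (D')^{\mathcal J}$ since $\mathcal J \models \Omc$, and fact~(b) supplies a homomorphism from the freshly added copy of $\Amc_{D'}$ into $\mathcal J$ sending its root to $h_i(a)$; gluing this onto $h_i$ yields $h_{i+1}$. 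Taking $h = \bigcup_i h_i$ and applying fact~(a) once more, the assumption $a_0 \in D^\Imc$ gives $d_0 = h(a_0) \in D^{\mathcal J}$. As $\mathcal J$ and $d_0$ were arbitrary (the case $C^{\mathcal J} = \emptyset$ being trivial), this shows $\Omc \models C \sqsubseteq D$.

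The genuinely delicate points are the ones forced by the chase being infinite: that an \EL concept true in the limit interpretation is already true at a finite stage (so that fairness can be invoked), and that the chain $(h_i)_i$ is coherent enough for its union to be a well-defined homomorphism on all of $\Imc$. Both are handled by the observation that the chase is monotone and never identifies or deletes previously created individuals, so each $h_{i+1}$ genuinely extends $h_i$. Facts~(a) and~(b) themselves are routine inductions on concept structure, which is presumably why the authors call the lemma ``standard''.
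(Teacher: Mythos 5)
Your proposal is correct and coincides with the argument the paper has in mind: the paper explicitly omits the proof as ``standard,'' and the standard proof is precisely your canonical/universal-model argument (the chase result is a model of $\Omc$ with $a_0 \in C^\Imc$, and it maps homomorphically into any model of $\Omc$ satisfying $C$, with both directions of the biimplication following via preservation of \EL concepts under homomorphisms). You also correctly identify and discharge the two points that make the infinite case go through, namely that \EL satisfaction in the limit is witnessed at a finite chase stage (so fairness applies) and that the chain of homomorphisms is coherent because the chase is monotone and introduces only fresh individuals.
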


\subsection{Main Proof}

We start with soundness.
%
\begin{restatable}{lemma}{lemsound} \label{lem:sound} $\Omc^\omega_T
  \models C_0 \sqsubseteq D_0$ implies $\Omc_S \models C_0 \sqsubseteq
  D_0$ for all \EL concepts $C_0,D_0$ over $\mn{sig}(\Omc_S)$.
\end{restatable}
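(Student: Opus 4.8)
The plan is to prove the contrapositive by a model-extension argument: from any model of $\Omc_S$ that refutes $C_0 \sqsubseteq D_0$ I would build a model of $\Omc^\omega_T$ refuting the same inclusion. Write $\Sigma = \mn{sig}(\Omc_S)$ and suppose $\Omc_S \not\models C_0 \sqsubseteq D_0$, so there is a model $\Imc$ of $\Omc_S$ with some $d \in C_0^\Imc \setminus D_0^\Imc$. The fresh concept names $X_D$, $D \in \mn{Dis}^-(\Omc_S)$, are exactly the symbols in $\mn{sig}(\Omc^\omega_T) \setminus \Sigma$, so I would extend $\Imc$ to an interpretation $\Imc'$ over the enlarged signature by setting $X_D^{\Imc'} := D^\Imc$ for each such $D$ and leaving all symbols of $\Sigma$ untouched. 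Since $C_0, D_0$ are over $\Sigma$, their extensions are unchanged, whence $d \in C_0^{\Imc'} \setminus D_0^{\Imc'}$; it thus remains only to check that $\Imc' \models \Omc^\omega_T$.

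The technical core is a transfer lemma: for every $\ELU$ concept $C$ all of whose disjunctions lie in $\mn{Dis}(\Omc_S)$ one has $(C^\uparrow)^{\Imc'} = C^\Imc$. This follows by induction on the structure of $C$, the base case being that $\cdot^\uparrow$ replaces each outermost $D \in \mn{Dis}^-(\Omc_S)$ by $X_D$ and that $X_D^{\Imc'} = D^\Imc = D^{\Imc'}$ by our choice (the last equality holding since $D$ is over $\Sigma$); the cases of conjunction, existential restriction, and concept name are immediate since the structure is preserved and the symbols involved are interpreted identically in $\Imc$ and $\Imc'$.

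With the transfer lemma available I would verify the four families of CIs in Figure~\ref{fig:second} in turn. For the first line $C \sqsubseteq \mn{DNF}(E)^\uparrow$ with $C \sqsubseteq E \in \Omc_S$: since $C$ is an $\EL$ concept over $\Sigma$ we have $C^{\Imc'} = C^\Imc$, while $(\mn{DNF}(E)^\uparrow)^{\Imc'} = \mn{DNF}(E)^\Imc = E^\Imc$ (using that $\mn{DNF}(E)$ is logically equivalent to $E$), so satisfaction follows from $\Imc \models C \sqsubseteq E$. For the second and third lines one computes $X_D^{\Imc'} = D^\Imc$ and $(D_i^\uparrow)^{\Imc'} = D_i^\Imc$, reducing the inclusions to $(D \sqcap D_1)^\Imc \subseteq D_2^\Imc$ and $(\exists r . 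D)^\Imc \subseteq D_1^\Imc$, which hold because $\Imc \models \Omc_S$ and $\Omc_S$ entails the respective side conditions. For the last line $F^\uparrow \sqsubseteq \exists r . G$ with $\Omc_S \models F \sqsubseteq \exists r . G$, the transfer lemma gives $(F^\uparrow)^{\Imc'} = F^\Imc$, and since $G$ is over $\Sigma$ we have $(\exists r . G)^{\Imc'} = (\exists r . G)^\Imc$, so satisfaction is again inherited from $\Imc$. Hence $\Imc' \models \Omc^\omega_T$ and $\Imc'$ witnesses $\Omc^\omega_T \not\models C_0 \sqsubseteq D_0$, completing the contrapositive.

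I expect no serious obstacle: soundness is the easy direction, and the argument is a routine semantic check. The only point needing care is the transfer lemma, namely ensuring that every concept occurring in Figure~\ref{fig:second} genuinely has all of its disjunctions in $\mn{Dis}(\Omc_S)$ so that both $\cdot^\uparrow$ and the equality $(C^\uparrow)^{\Imc'} = C^\Imc$ apply --- in particular checking that $\mn{DNF}(E)$ and the decorated concepts $F$ meet this condition and that DNF conversion introduces no disjunction outside $\mn{Dis}(\Omc_S)$.
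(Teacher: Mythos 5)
Your proof is correct, but it takes a genuinely different route from the paper's. The paper argues proof-theoretically: it runs the standard \EL chase of $\Amc_{C_0}$ with $\Omc^\omega_T$, views the resulting ditree-shaped ABoxes as \EL concepts $C_0, C_1,\dots$, introduces a down-translation $C^\downarrow$ (replace each $X_D$ by the disjunction $D$), and shows by induction on chase steps that $\Omc_S \models C_i^\downarrow \sqsubseteq C_{i+1}^\downarrow$, from which $\Omc_S \models C_0 \sqsubseteq D_0$ follows once $D_0$ is derived. You argue semantically via the contrapositive: expand any countermodel $\Imc$ of $\Omc_S$ by setting $X_D^{\Imc'} := D^\Imc$ and verify each line of Figure~\ref{fig:second} directly, using the transfer identity $(C^\uparrow)^{\Imc'} = C^\Imc$. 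Both proofs rest on the same core fact, namely that the result of replacing each $X_D$ by $D$ in any CI of Figure~\ref{fig:second} is a consequence of $\Omc_S$ --- this is exactly what the side conditions of the scheme assert --- and your expansion is just the model-theoretic face of the paper's $\cdot^\downarrow$. Your version is arguably more economical (no chase, hence no fairness or limit considerations for the infinite ontology $\Omc^\omega_T$) and yields a slightly stronger conclusion: since every model of $\Omc_S$ expands to a model of $\Omc^\omega_T$ agreeing with it on $\Sigma$, soundness holds for arbitrary $\Sigma$-inclusions, not only \EL ones. The caveat you flag at the end --- that $\cdot^\uparrow$, and hence your transfer lemma, applies only when all disjunctions of the concepts in Figure~\ref{fig:second} lie in $\mn{Dis}(\Omc_S)$ --- is a well-definedness assumption of the approximation scheme itself, which the paper's induction presupposes equally, so it is not an additional gap in your argument; likewise your implicit use of $\emptyset \models \mn{DNF}(E) \equiv E$ is sound because the DNF conversion treats existential restrictions as atoms and leaves their arguments untouched.
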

\noindent
\begin{proof}\ Assume that $\Omc^\omega_T \models C_0 \sqsubseteq D_0$
  where $C_0,D_0$ are \EL concepts over $\mn{sig}(\Omc_S)$. Then
  $\mn{ch}_{\Omc^\omega_T}(\Amc_{C_0}) \models D_0(a_0)$, $a_0$ the
  root of $\Amc_{C_0}$. Let $\Amc_{C_0}=\Amc_0,\Amc_1,\dots$ be a
  sequence of ABoxes produced by the \EL chase of $\Amc_{C_0}$ with
  $\Omc^\omega_T$. Clearly, all ABoxes $\Amc_0,\Amc_1,\dots$ are
  ditree-shaped and can thus be viewed as an \EL concept $C_i$. For an
  \EL concept $C$ over $\mn{sig}(\Omc^\omega_T)$, let $C^{\downarrow}$
  be the \ELU concept obtained from $C$ by replacing every $X_D$ with
  $D \in \mn{Dis}(\Omc_S)$.  We prove the following by induction on
  $i$.
  \\[2mm]
  {\bf Claim.} $\Omc_S \models C^\downarrow_i \sqsubseteq
  C^\downarrow_{i+1}$ for all $i \geq 0$.
  \\[2mm]
  To prove the claim, let $i \geq 0$. 
  $\Amc_{i+1}$ was obtained from $\Amc_{i}$ by applying the
  chase rule. Thus let $C \sqsubseteq D \in \Omc^\omega_T$,
  $\Amc_{i} \models C(a)$, and let $\Amc_{i+1}$ be obtained from
  $\Amc_{i}$ by taking a copy of $\Amc_D$ whose individuals are
  disjoint from those in $\Amc_{i}$ and defining $\Amc_{i+1}$ as the
  union of $\Amc_{i}$ and $\Amc_D$, identifying the root of the
  latter with~$a$. 
  By definition of $\Omc^\omega_T$, we have $\Omc_S \models
  C^\downarrow \sqsubseteq D^\downarrow$. 
  By construction of $\Amc_{i+1}$, we thus have $\Omc_S \models
  C_i^\downarrow \sqsubseteq C_{i+1}^\downarrow$ as required and thus
  the claim is proved.

  \smallskip


  From $\mn{ch}_{\Omc^\omega_T}(\Amc_{C_0}) \models D_0$, we obtain
  $\Amc_i \models D_0$ for some~$i$. Since $D_0$ is over
  $\mn{sig}(\Omc_S)$, $\Amc_i \models D_0$ implies $\emptyset \models
  C^\downarrow_i \sqsubseteq D_0$. Together with the claim and since
  $C_0=C_0^\downarrow$, this gives $\Omc_S \models C_0 \sqsubseteq D_0$.
\end{proof}
We now address completeness, starting with the essential
Lemma~\ref{lem:compaux} below. Preparing for the case of
$\ELU_\bot\!$-to-$\EL_\bot$ approximations, we state and 
prove the lemma directly for this case. This requires a few
preliminaries.

\smallskip

Let $\Omc$ be an $\ELU_\bot$ ontology. For every \EL concept $C$, we
define $\mn{Dis}_{\Omc}(C)$ as in the case without $\bot$.  This can
now be the empty disjunction, which we identify with $\bot$.  In fact,
$C$ is satisfiable w.r.t.\ $\Omc$ if and only if $\mn{Dis}_{\Omc}(C) =
\bot$. We set $\bot^\uparrow=\bot$. We further 
associate with every
\EL concept $C$ a disjunction
$\mn{Dis}^\EL_{\Omc}(C)$ 
that contains a disjunct $\bigsqcap S$ for every set $S \subseteq
\mn{sub}^-(\Omc)$ such that there is a model \Imc of $\Omc$ and a
$d \in C^\Imc$ with
$$S=\{ E \in \mn{sub}^-(\Omc) \mid d \in
E^\Imc \text{ and $E$ is an \EL concept} \}$$
while this is not true for any
proper subset of $S$. If $\mn{Dis}^\EL_{\Omc}(C)$ consists
of a single disjunct that is the empty conjunction, we identify it
with $\top$. The empty disjunction is again identified with $\bot$.

\smallskip For the following lemma, we assume that $\Omc_S$ is an
$\ELU_\bot$ ontology. The lemma refers to $\Omc^-_T$. Note that 
when $\Omc_S$ is formulated in $\ELU$, then $\Omc^-_T$ consists
of all instantiations of the first three lines of
Figure~\ref{fig:second}
and that for $\ELU_\bot$, the same is true for
Figure~\ref{fig:third}. However, the first three lines of these
figures are identical. 
%
\begin{restatable}{lemma}{lemcompaux} \label{lem:compaux} $\Omc_T^-
  \models C_0 \sqsubseteq \mn{Dis}^\EL_{\Omc_S}(C_0)^\uparrow$ for every
  \EL concept~$C_0$ over $\mn{sig}(\Omc_S)$.
\end{restatable}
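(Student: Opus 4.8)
The plan is to prove the lemma through the standard \EL chase for $\Omc_T^-$: by Lemma~\ref{lem:chasecorr} it suffices to show that the chase $\mn{ch}_{\Omc_T^-}(\Amc_{C_0})$ satisfies $\mn{Dis}^\EL_{\Omc_S}(C_0)^\uparrow(a_0)$ at the root $a_0$, and I would establish this by induction on the structure of the \EL concept $C_0$. Writing $C_0 = A_1 \sqcap \cdots \sqcap A_k \sqcap \exists r_1 . C_1 \sqcap \cdots \sqcap \exists r_m . C_m$, the root $a_0$ carries the names $A_1,\dots,A_k$ and, for each $j$, has an $r_j$-successor $b_j$ that is the root of a copy of $\Amc_{C_j}$. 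Since the rules of $\Omc_T^-$ (the first three lines of Figure~\ref{fig:third}) never feed information from a parent into an already-existing child, the subtree below $b_j$ evolves exactly as $\mn{ch}_{\Omc_T^-}(\Amc_{C_j})$, so the induction hypothesis applies and $b_j$ satisfies $\mn{Dis}^\EL_{\Omc_S}(C_j)^\uparrow$.

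The core of the argument is then to assemble the disjunctive type at $a_0$ from the $A_i$ and the successor types. I would distinguish two ways a successor contributes. When $\mn{Dis}^\EL_{\Omc_S}(C_j)$ is a proper disjunction $D^{(j)} \in \mn{Dis}^-(\Omc_S)$, the child satisfies $X_{D^{(j)}}$, so $a_0$ satisfies $\exists r_j . X_{D^{(j)}}$ and Line~3 fires for every $D_1$ with $\Omc_S \models \exists r_j . D^{(j)} \sqsubseteq D_1$, introducing $D_1^\uparrow$ at $a_0$. When $\mn{Dis}^\EL_{\Omc_S}(C_j)$ is a single conjunction $\bigsqcap S_j$, no $X$-name is available, but then the saturated child makes every concept of $S_j$ explicit, hence every existential fact $\exists r_j . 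E$ with $E \in S_j$ holds at $a_0$, so that each literal CI of $\Omc_S$ applicable to these facts fires through Line~1. In both cases the freshly introduced $X$-names are then combined with the $A_i$ and with one another by Line~2, using the entailments $\Omc_S \models D \sqcap D_1 \sqsubseteq D_2$, and I would run this to saturation. I would then argue, via the semantic (minimal-type) characterization of $\mn{Dis}^\EL_{\Omc_S}(C_0)$, that the $X$-names and conjuncts accumulated at $a_0$ jointly entail $\mn{Dis}^\EL_{\Omc_S}(C_0)^\uparrow$; here Lemma~\ref{lem:sound} serves as a useful consistency bound, since everything derived is $\Omc_S$-sound and hence cannot overshoot the strongest $\Omc_S$-entailed disjunction over $\mn{sub}^-(\Omc_S)$, which is exactly $\mn{Dis}^\EL_{\Omc_S}(C_0)$. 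The unsatisfiable case, where $\mn{Dis}^\EL_{\Omc_S}(C_0) = \bot$, is handled separately by observing that the same rule applications produce the empty disjunction, i.e.\ a clash, at $a_0$.

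The hard part is the completeness of this saturation: showing that Lines~1--3 really do derive the strongest realizable disjunctive type and miss no minimal type that occurs in some model of $\Omc_S$. The delicate point is the interaction between successors with proper-disjunction types (which propagate only through Line~3, restricted to $D \in \mn{Dis}^-(\Omc_S)$) and successors with single types (which propagate only after their subtree is fully saturated, and then only through literal CIs via Line~1), together with the need for Line~2 to glue these partial disjunctions into a single strongest disjunction. I expect the clean way to control this is to match the deterministic chase against a disjunctive chase of $\Amc_{C_0}$ with $\Omc_S$, whose branch root types are precisely the disjuncts collected in $\mn{Dis}^\EL_{\Omc_S}(C_0)$, and to show by the same induction that every branch's minimal root type is dominated by one of the disjuncts recorded in the $X$-names derived at $a_0$.
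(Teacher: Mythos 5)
Your overall reduction (standard \EL chase for $\Omc_T^-$ via Lemma~\ref{lem:chasecorr}, plus the observation that Lines~1--3 never push information into an existing child, so subtrees chase independently) is fine, and it is a genuinely different decomposition from the paper's: the paper does not chase with $\Omc_T^-$ at all, but chases $\Amc_{C_0}$ with the \emph{source} ontology $\Omc_S$ using a purpose-built deterministic ``special chase'' (Rules~1--7), shows that every step of that chase is simulated by a CI of $\Omc_T^-$ on the $\uparrow$-encodings of the intermediate ABoxes, and then invokes a separate completeness lemma for the special chase (Lemma~\ref{lem:chasecompl}). However, your proposal has a genuine gap exactly where you flag ``the hard part'': your structural induction carries the hypothesis that $b_j$ satisfies $\mn{Dis}^\EL_{\Omc_S}(C_j)^\uparrow$, and this invariant is too weak to close the inductive step. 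First, $\mn{Dis}^\EL_{\Omc_S}(C_j)$ records only the \EL members of $\mn{sub}^-(\Omc_S)$, but the parent-level consequences can hinge on the non-\EL members (concepts $\exists r.E$ with $E$ containing disjunctions); this is why the paper works with the full semantic disjunctions $\mn{Dis}_{\Omc_S}(\cdot)$ and the per-individual strongest disjunctions $D_a$ ranging over all of $\mn{Dis}(\Omc_S)$, explicitly noting $\mn{sub}^-(\Omc_S)\subseteq\mn{Dis}(\Omc_S)$. Second, the information flow from $b_j$ to $a_0$ is not confined to the type of $b_j$: it also passes through \emph{anonymous} individuals freshly created below $b_j$ by Line~1, whose disjunctive behaviour must be propagated back up through chains of anonymous nodes. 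The paper's special chase needs two dedicated rules (Rules~6 and~7, with a marking discipline and the side condition $\mn{Dis}_\Omc(\exists r.C_b)\in\mn{Dis}^-(\Omc_S)$) precisely for this phenomenon, which your two-case analysis (child with proper \EL-disjunction vs.\ child with a single conjunction type) does not see.

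Your closing suggestion --- match the deterministic chase against a disjunctive chase of $\Amc_{C_0}$ with $\Omc_S$ and show each branch's minimal root type is dominated by some derived disjunct --- is a plan, not a proof, and it is not the route the paper takes: the paper proves completeness of its chase \emph{globally}, by assuming the strongest disjunction is not derived, choosing a witnessing disjunct $E_a$ of $D_a$ at each original disjunctive individual, verifying closure under all chase rules (properties (P1)--(P3), where (P2) requires a nontrivial \emph{chaining} of Rule-4/Line-2 applications to combine the derived disjunctions pairwise into $D_a$), grafting tree models onto the saturated ABox, and contradicting the semantic definition of $\mn{Dis}^\EL_{\Omc_S}(C_0)$. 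A branch-by-branch domination argument faces the difficulty that the deterministic derivation must produce a \emph{single} disjunction covering all branches simultaneously, and that this coverage is achieved exactly by the (P2)-style chaining across branches, which is not compositional over the tree structure of $C_0$. Also, your appeal to Lemma~\ref{lem:sound} only bounds the derivation from above (soundness) and contributes nothing to the lower bound you need; and in the $\bot$ case the claim that ``the same rule applications produce a clash'' is again the completeness statement in disguise, handled in the paper inside the proof of Lemma~\ref{lem:chasecompl}. So: correct skeleton, genuinely different outer decomposition, but the kernel of the lemma remains unproven in your write-up.
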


We prove Lemma~\ref{lem:compaux} by first introducing a special chase
procedure for $\ELU_\bot$ ontologies that is specifically tailored
towards our approximations. Unlike more standard chase procedures for
$\ELU_\bot$, our chase is deterministic rather than disjunctive.

We define an entailment notion $\Amc \vdash C(a)$ between extended
$\ELU_\bot$ ABoxes \Amc and $\ELU_\bot$ concepts $C$ as follows:
\begin{itemize}

\item $\Amc \vdash \top(a)$ always holds;

\item $\Amc \vdash \bot(a)$ if $\bot(b) \in \Amc$ for some $b$;

\item $\Amc \vdash A(a)$ if $A(a) \in \Amc$;

\item $\Amc \vdash C_1 \sqcap C_2(a)$ if $\Amc \vdash C_1(a)$ and
  $\Amc \vdash C_2(a)$;

\item $\Amc \vdash C_1 \sqcup C_2(a)$ if 
$C_1 \sqcup C_2(a) \in \Amc$;

\item $\Amc \vdash \exists r . C(a)$ if there is $b$ such that
  $r(a,b) \in \Amc$ and $\Amc \vdash C(b)$.

\end{itemize}
Note that if $C$ is an \EL concept, then $\Amc \vdash C(a)$ if $a \in
C^{\Imc_{\Amc^-}}$ where $\Imc_{\Amc^-}$ is $\Amc^-$ viewed as an
interpretation in the obvious way. Let $\mn{Dis}_\Omc(C)$ be defined
in the same way as $\mn{Dis}^\EL_\Omc(C)$ except that all concepts in
$\mn{sub}(\Omc)$ that are concept names or of the form $\exists r . E$
are considered instead of only \EL concepts of this form.


\smallskip 

Let $\Amc$ be an ABox and \Omc an $\ELU_\bot$ ontology. The chase
produces produces a sequence of ABoxes $\Amc = \Amc_0,
\Amc_1,\Amc_2,\cdots$ such that $\Amc_i \subseteq \Amc_{i+1}$ for all
$i \geq 0$. Although different sequences can be produced, the limit
$\bigcup_{i \geq 0} \Amc_i$ will be unique and we call it the
\emph{result} of chasing \Amc with \Omc. We call an individual in
$\bigcup_{i \geq 0} \Amc_i$ \emph{original} if it already occurs in
\Amc and \emph{anonymous} otherwise. In the ABoxes $\Amc_i$, anonymous
individuals can be marked or not. Each ABox $\Amc_{i+1}$ is obtained
from $\Amc_i$ by \emph{chasing a single step} with~$\Omc$, that is,
$\Amc_{i+1}$ is obtained from $\Amc_i$ in one of the following ways:
\begin{enumerate}

\item choose $C \sqsubseteq D \in \Omc$ and $a \in \mn{Ind}(\Amc)$
  with $\Amc \vdash C(a)$ and add $\mn{DNF}(D)(a)$; 

\item choose $C_1 \sqcap C_2(a) \in \Amc$
  and add $C_1(a),C_2(a)$;

\item choose $\exists r . C(a) \in \Amc$ 
  and add $r(a,b), C(b)$ for a fresh $b$; we say that $b$ was
  \emph{introduced for~$C$};





\item choose $D_1(a) \in \Amc$ with $D_1 \in \mn{Dis}^-(\Omc)$ and
  $D_2,D_3 \in \mn{Dis}(\Omc)$ such that $\Amc \vdash D_2(a)$
  and
  $\Omc \models D_1 \sqcap D_2 \sqsubseteq D_3$, and add $D_3(a)$;


\item choose $r(a,b),D_1(b) \in \Amc$ with $D_1 \in \mn{Dis}^-(\Omc)$
  and $D_2 \in \mn{Dis}(\Omc)$
    such that
  $\Omc \models \exists r . D_1 \sqsubseteq D_2$
  and add $D_2(a)$;

\item choose $D(a) \in \Amc$ with $D \in \mn{Dis}^-(\Omc)$ and $a$ anonymous and introduced for $C$, and add $\mn{Dis}_\Omc(C)(a)$; 
  mark $a$;

\item choose $r(a,b) \in \Amc$ with $a$ anonymous 
and introduced for
  $C_a$ 
and $b$ marked, anonymous, and introduced for $C_b$ such that
$\mn{Dis}_\Omc(\exists r .C_b) \in \mn{Dis}^-(\Omc_S)$; add
  $\mn{Dis}_\Omc(C_a)(a)$; 
 mark $a$ if it is anonymous.
%


\end{enumerate}
Note that Rules 1-3 implement Line 1 of our \EL approximations
$\Omc^\ell_T$ while Rules~4 and~5 correspond to Lines~2 and~3 of the
approximation. Rules 6-7 are there to deal with anonymous individuals
which behave in a different way than original ones.  

We require that the chase is \emph{fair}, that is, every possible way
to chase a single step is eventually used. Note that our chase is
oblivious, that is, a chase rule can be applied even if its
`consequence' is already there. This implies that the results of the
chase, which we denote with $\mn{ch}^{\text{sp}}_{\Omc}(\Amc)$, is
unique up to isomorphism.

The main property that we require of the chase is the following
completeness
property. 
%
%
 %
\begin{restatable}{lemma}{lemchasecompl}
\label{lem:chasecompl}
%
%
Let $\Omc$ be an $\ELU_\bot$ ontology and $C_0$ an \EL concept over
$\mn{sig}(\Omc_S)$. Then $\mn{ch}^{\text{sp}}_{\Omc}(\Amc_{C_0})
\vdash \mn{Dis}^\EL_\Omc(C_0)(a_0)$.
\end{restatable}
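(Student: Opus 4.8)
The plan is to prove the statement by contradiction, turning a failure of the deterministic chase into a counter-model of the source ontology. Write $\mn{ch} = \mn{ch}^{\text{sp}}_{\Omc}(\Amc_{C_0})$ and $F = \mn{Dis}^\EL_\Omc(C_0) = \bigsqcup_i \bigsqcap S_i$, where the $S_i$ are the minimal realizable exact $\EL$-types over $\mn{sub}^-(\Omc)$. The first thing to record is that, by the very definition of $F$, we have $\Omc \models C_0 \sqsubseteq F$: any point $d$ with $d \in C_0^\Imc$ realizes some exact $\EL$-type $T$, and since $\mn{sub}^-(\Omc)$ is finite there is a minimal realizable exact type $S_i \subseteq T$, whence $d \in (\bigsqcap S_i)^\Imc \subseteq F^\Imc$. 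It therefore suffices to show that if $\mn{ch} \not\vdash F(a_0)$, then one can build a model $\Imc \models \Omc$ with $a_0 \in C_0^\Imc$ whose exact $\EL$-type contains no $S_i$ — the desired contradiction.

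The core of the argument is this model construction. I would take as domain of $\Imc$ the individuals of $\mn{ch}$, an infinite ditree consisting of the original individuals of $\Amc_{C_0}$ together with the anonymous subtrees spawned by existentials, and interpret the roles by the role assertions of $\mn{ch}$. Concept names cannot simply be read off the assertions, since every disjunctive fact $D(a) \in \mn{ch}$ must be satisfied by choosing at least one disjunct; the task is to resolve all these disjunctions consistently, closing the chosen disjuncts under the CIs of $\Omc$ so that $\Imc$ is genuinely a model. The resolution is guided by the assumption $\mn{ch} \not\vdash F(a_0)$ which, because $\vdash$ treats a proper disjunction as derived only when it is literally present, signals that the chase has not forced $F$ at the root and should leave room to keep $a_0$'s $\EL$-type strictly below every $S_i$.

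Two invariants drive the construction. First, a soundness invariant: every disjunctive fact $D(a)$ produced by the chase is a certain consequence at $a$, holding under every admissible resolution; this follows by an easy induction over the chase steps from the side conditions $\Omc \models \dots$ attached to Rules~1--7. Second, and this is the crux, a completeness invariant stating that the disjunctions accumulated at each individual are strong enough — whatever $\EL$-type is \emph{certain} at $a$ is already $\vdash$-derivable from $\mn{ch}$. I would establish this by induction on the tree, the root case then yielding the lemma, and it is exactly here that the availability of a consistent $F$-avoiding resolution at $a_0$ must be extracted from $\mn{ch} \not\vdash F(a_0)$.

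The main obstacle will be the completeness invariant at anonymous individuals, where the deterministic up-and-down propagation of disjunctions must faithfully simulate reasoning by cases. Anonymous successors behave differently from original ones: once a genuine disjunction from $\mn{Dis}^-(\Omc)$ reaches such a node $a$ introduced for a concept $C$, Rule~6 records the full alternative set $\mn{Dis}_\Omc(C)(a)$ and marks $a$, and Rule~7 lifts this to anonymous ancestors. I expect the difficulty to be proving that this mechanism neither over-derives (controlled by the soundness invariant) nor under-derives, so that the counter-model's disjunct choices can always be arranged to avoid $F$ at the root while remaining a model of $\Omc$ on the infinite anonymous subtrees. Showing that these local, type-minimal choices propagate consistently through the subtrees — never conflicting with a CI forced higher up — will be the technically heaviest part, and it is fairness of the chase together with the minimality built into $\mn{Dis}^\EL_\Omc$ that should make the argument close.
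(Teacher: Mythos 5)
Your overall strategy coincides with the paper's: argue by contradiction, observe that $\Omc \models C_0 \sqsubseteq \mn{Dis}^\EL_\Omc(C_0)$ by definition, and turn $\mn{ch}^{\text{sp}}_{\Omc}(\Amc_{C_0}) \not\vdash \mn{Dis}^\EL_\Omc(C_0)(a_0)$ into a model of $\Omc$ satisfying $C_0$ but not $\mn{Dis}^\EL_\Omc(C_0)$ at the root; your soundness invariant is the paper's Lemma~\ref{lem:chsound}. However, one concrete step in your plan fails: taking the domain of the counter-model to be exactly the individuals of the chase. After the disjunctions are resolved --- in the paper, by choosing a disjunct $E_a$ of $D_a = \mn{Dis}_\Omc(\bigsqcap \{D \in \mn{Dis}(\Omc) \mid \mn{ch}^{\text{sp}}_{\Omc}(\Amc_{C_0}) \vdash D(a)\})$ at each original disjunctive $a$ --- the chosen disjuncts contain existential conjuncts from $\mn{sub}^-(\Omc)$ that have no witness among the chase individuals (Rule~3 fires only on assertions actually present, and the resolution adds new ones), and at anonymous marked individuals the subtree generated by the chase predates any disjunct choice and cannot be repaired in place. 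The paper therefore grafts fresh material: tree models $\Imc_{\exists r . C(a)}$ attached at original disjunctive individuals, chosen semantically so that their roots satisfy no more of $\mn{sub}(\Omc)$ than $E_a$ entails, and a wholesale \emph{replacement} of the subtrees rooted at the set $\Gamma$ of marked anonymous individuals with anonymous unmarked predecessors by canonical models $\Imc_a$. Without these additions the structure you describe is in general not a model of $\Omc$.

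The second genuine gap is that the crux is announced rather than argued, and your invariants as stated will not close it. Your completeness invariant (``whatever $\EL$-type is certain at $a$ is $\vdash$-derivable''), instantiated at $a_0$, is essentially the lemma itself, so an induction on the tree with that as its root case begs the question; the paper instead proves local bookkeeping properties (P1)/(P2) relating $\vdash$ to $D_a$ via non-applicability of Rule~4, and extracts a disjunct $E_{a_0}$ with $\emptyset \not\models E_{a_0} \sqsubseteq \mn{Dis}^\EL_\Omc(C_0)$ directly from (P2). More importantly, propagating choices down an edge $r(b,a)$ requires a disjunct $E_a$ of $D_a$ compatible with what is already fixed at $b$ (the paper's conditions (a) and (b)); existence is proved by assuming every disjunct violates one of them, aggregating the violating concepts into disjunctions $D_L \in \mn{Dis}^-(\Omc)$ and $D_R$ with $\Omc \models \exists r . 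D_L \sqsubseteq D_R$, and invoking Rule~5 together with (P1)/(P2), case-splitting on whether $b$ is disjunctive, to derive a contradiction. This aggregation argument is exactly where the deterministic chase simulates reasoning by cases --- the difficulty you flag as ``the technically heaviest part'' --- and nothing in your proposal supplies it. You also omit two degenerate cases the paper treats separately: $\bot$ occurring in the chase (discharged via Lemma~\ref{lem:chsound} and the $\vdash$-clause for $\bot$) and a non-disjunctive root $a_0$, where one must show $\mn{Dis}^\EL_\Omc(C_0)$ has a single disjunct rather than pick a disjunct avoiding it.
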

Since the proof of Lemma~\ref{lem:chasecompl} is rather laborious, we
defer it to Section~\ref{app:chasecompl}.

\medskip

We now return to the proof of Lemma~\ref{lem:compaux}. 
Let $\Amc_0,\Amc_1,\dots$ be a sequence of ABoxes generated by chasing
$\Amc_{C_0}=\Amc_0$ with~$\Omc_S$ using the special chase introduced
above. It is easy to see that all extended ABoxes
$\Amc_0,\Amc_1,\dots$ are ditree-shaped and can thus be viewed as
$\ELU_\bot$ concepts $C_0,C_1,\dots$ in which all disjunctions are
from $\mn{Dis}(\Omc_S)$. Note that also the ABox assertions $C(a) \in
\Amc_i$ with $C$ a compound concept or $\bot$ give raise to
subconcepts in~$C_i$.
\\[2mm]
{\bf Claim.} $\Omc_T^- \models C^\uparrow_i \sqsubseteq
C^\uparrow_{i+1}$, for all $i \geq 0$.
\\[2mm]
To prove the claim, let $i \geq 0$. We make a case distinction
according to the chase rule with which $\Amc_{i+1}$ is obtained
from~$\Amc_i$:
\begin{enumerate}

\item Then there is a $C \sqsubseteq D \in \Omc_S$ and an $a \in
  \mn{Ind}(\Amc)$ such that $\Amc_i \vdash C(a)$ and
  $\Amc_{i+1}=\Amc_i \cup \{ \mn{DNF}(D)(a) \}$. Let $E_a$ be the
  subconcept of $C_i$ that corresponds to the subtree rooted at $a$ in
  $\Amc_i$ and let $F_a$ be the subconcept of $C_{i+1}$ that
  corresponds to the subtree rooted at $a$ in $\Amc_{i+1}$. Since $C$
  is an \EL concept, $\Amc_i \vdash C(a)$ implies $\Amc^-_i \vdash
  C(a)$.  Consequently, $\emptyset \models E_a^\uparrow \sqsubseteq
  C$. Moreover, $F_a=E_a \sqcap \mn{DNF}(D)$ and $\Omc_T^-$ contains
  the CI $C \sqsubseteq \mn{DNF}(D)^\uparrow$, thus $\Omc_T^- \models
  C^\uparrow_i \sqsubseteq C^\uparrow_{i+1}$ as required.

\item Trivial.

\item Trivial.

\item Then there are $D_1(a) \in \Amc_i$ with $D_1 \in \mn{Dis}^-(\Omc_S)$
  and $D_2,D_3 \in \mn{Dis}(\Omc_S)$ such that $\Amc_i \vdash D_2(a)$,
  $\Omc_S \models D_1 \sqcap D_2 \sqsubseteq D_3$, and
  $\Amc_{i+1}=\Amc_i \cup \{ D_3(a) \}$. Let $E_a$ be the subconcept
  of $C_i$ that corresponds to the subtree rooted at $a$ in $\Amc_i$
  and let $F_a$ be the subconcept of $C_{i+1}$ that corresponds to the
  subtree rooted at $a$ in $\Amc_{i+1}$. Then $F_a = E_a \sqcap D_3$.
  From $D_1(a) \in \Amc$ and $D_1 \in \mn{Dis}^-(\Omc_S)$, we obtain
  that $X_{D_1}$ is a top-level conjunct of $E_a^\uparrow$. From $\Amc_i \vdash
  D_2(a)$, we obtain that $\emptyset \models E^\uparrow_a \sqsubseteq
  D_2^\uparrow$. Moreover, $\Omc_T^-$ contains the CI $X_{D_1} \sqcap
  D_2^\uparrow \sqsubseteq D_3^\uparrow$, and thus $\Omc_T^- \models
  C^\uparrow_i \sqsubseteq C^\uparrow_{i+1}$ as required.

\item Similar to the previous case, using the third line of $\Omc_T^-$.

\item Then there is a $D(a) \in \Amc_i$ with $D \in \mn{Dis}^-(\Omc_S)$,
  $a$ anonymous and introduced for $C$, and $\Amc_{i+1}=\Amc_i \cup \{
  \mn{Dis}_{\Omc_S}(C)(a) \}$. Let $E_a$ be the subconcept of $C_i$
  that corresponds to the subtree rooted at $a$ in $\Amc_i$ and let
  $F_a$ be the subconcept of $C_{i+1}$ that corresponds to the subtree
  rooted at $a$ in $\Amc_{i+1}$. Since $a$ was introduced for $C$,
  $C(a) \in \Amc_i$ and thus $\emptyset \models E_a \sqsubseteq C$. Since $C \in
  \mn{Dis}(\Omc_S)$ and $\Omc_S \models C \sqsubseteq
  \mn{Dis}_{\Omc_S}(C)$, $\Omc^-_T$ contains the CI $X_D \sqcap
  C^\uparrow \sqsubseteq \mn{Dis}_{\Omc_S}(C)$. Moreover, $F_a = E_a
  \sqcap \mn{Dis}_{\Omc_S}(C)$.  It follows that
  $\Omc_T^- \models C^\uparrow_i \sqsubseteq C^\uparrow_{i+1}$.

\item Similar to the previous case.

\end{enumerate}
This finishes the proof of the claim.

\medskip

By Lemma~\ref{lem:chasecompl},
$\mn{ch}^{\text{sp}}_{\Omc_S}(\Amc_{C_0}) \vdash
\mn{Dis}^\EL_{\Omc_S}(C_0)(a_0)$ and thus $\Amc_{i} \vdash
\mn{Dis}^\EL_{\Omc_S}(C_0)(a_0)$ for some $i$. First assume that
$\mn{Dis}^\EL_{\Omc_S}(C_0)$ contains more than one disjunct. Then, by
definition of $\vdash$, $\mn{Dis}^\EL_{\Omc_S}(C_0)(a_0) \in \Amc_i$
and thus $X_{\mn{Dis}^\EL_{\Omc_S}(C_0)}=
\mn{Dis}^\EL_{\Omc_S}(C_0)^\uparrow$ is a top-level conjunct of
$C^\uparrow_i$ implying $\emptyset \models C^\uparrow_i \sqsubseteq
X^\EL_{\mn{Dis}_{\Omc_S}(C_0)}$. From the claim and
$C_0=C_0^\uparrow$, we obtain $\Omc_T^- \models C_0 \sqsubseteq
C^\uparrow_i$ and are done. Now assume that
$\mn{Dis}^\EL_{\Omc_S}(C_0)$ contains a single disjunct. Then
$\Amc_{i} \vdash K$ for each conjunct $K$ of
$\mn{Dis}^\EL_{\Omc_S}(C_0)$. By definition of `$\vdash$' and
$C^\uparrow_i$, it follows that
$\emptyset \models C^\uparrow_i \sqsubseteq K^\uparrow$ for each such $K$, and
thus $\emptyset \models C_i^\uparrow \sqsubseteq
\mn{Dis}^\EL_{\Omc_S}(C_0)^\uparrow$. It again remains to apply the
claim. Finally assume that $\mn{Dis}^\EL_{\Omc_S}(C_0) =\bot$.  Then
$\bot(b) \in \Amc_i$ for some $b$ and thus $\emptyset \models
C_i^\uparrow \sqsubseteq \bot$. We can once more apply the claim.
This finishes the proof of Lemma~\ref{lem:compaux}. 

\medskip 

Now back to the proof of completeness, that is, of
Lemma~\ref{lem:comp}. We need some more preliminaries.
\begin{lemma}
  \label{lem:generatablesem}
  Let \Omc be an \ELU ontology and $C, \exists r . D$ \EL concepts.
  If $\Omc \models C \sqsubseteq \exists r . D$ and $C$ contains no
  top-level conjunct $\exists r . C'$ such that $\Omc \models C'
  \sqsubseteq D$, then $D$ is \Omc-generatable.
\end{lemma}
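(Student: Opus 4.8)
The plan is to prove the statement by contradiction in a model-theoretic way. I would assume the stated hypothesis that $C$ has no top-level conjunct $\exists r . C'$ with $\Omc \models C' \sqsubseteq D$, and assume towards a contradiction that $D$ is \emph{not} $\Omc$-generatable. From these two assumptions I would construct a model $\Imc$ of $\Omc$ together with a root $d_0 \in C^\Imc$ such that $d_0 \notin (\exists r . D)^\Imc$; this contradicts $\Omc \models C \sqsubseteq \exists r . D$ and thereby forces $D$ to be $\Omc$-generatable. A useful feature of this setting is that $\Omc$ is an \ELU ontology and hence contains no $\bot$, so every \EL concept is satisfiable w.r.t.\ $\Omc$ and all the models invoked below are guaranteed to exist.

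First I would build a \emph{lean} tree-shaped model rooted at $d_0$ by chasing the ditree ABox $\Amc_C$ with $\Omc$, resolving each disjunction that is forced on a right-hand side by committing to a single disjunct under a globally satisfiable choice (possible since $C$ is satisfiable w.r.t.\ $\Omc$). Leanness means that an $r$-successor of $d_0$ is introduced only when it is \emph{forced}: either (i) by a top-level existential conjunct $\exists r . C'$ of $C$, so the successor carries the demand $C'$, or (ii) by an existential $\exists r . E$ on the right-hand side of a CI that fires at $d_0$, so the successor carries the demand $E$. Crucially I would create a \emph{separate} successor for each such demand rather than merging several demands into one. For every forced $r$-successor with demand $F \in \{C',E\}$ I would then replace its subtree by the unravelling of a model of $\Omc$ whose root satisfies $F$ but not $D$. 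Such a model exists in case (i) because the hypothesis yields $\Omc \not\models C' \sqsubseteq D$, and in case (ii) because non-generatability of $D$ gives $\Omc \not\models E \sqsubseteq D$ for every existential $\exists r . E$ occurring on a right-hand side.

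Finally I would verify that the grafted structure is still a model of $\Omc$ with $d_0 \in C^\Imc$: each demand $F$ is met by the new root of its subtree, each subtree is itself a model of $\Omc$, and the concept-name part of $d_0$ as well as the CIs firing at $d_0$ are unaffected, since \ELU is positive in existential successors and shrinking a successor's type by dropping $D$ removes no witness required for $C$ or for a CI applicable at $d_0$. By leanness together with the grafting, every $r$-successor of $d_0$ fails $D$, so $d_0 \notin (\exists r . D)^\Imc$, giving the contradiction. The hard part will be the classification of forced successors in the second step: justifying, in the presence of disjunction, that every $r$-successor that $\Omc$ forces at $d_0$ stems either from a top-level conjunct of $C$ or from a right-hand-side existential $\exists r . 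E$. This is the \ELU analogue of the \EL fact recorded in Lemma~\ref{lem:extra}, and making it precise is exactly where the disjunctive chase with consistent disjunct choices is needed, so as to guarantee that no spurious $r$-successor satisfying $D$ is created and that no forced successor is merged into one that ends up satisfying $D$.
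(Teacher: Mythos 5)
Your overall strategy---derive a contradiction by building a model of \Omc whose root satisfies $C$ but has no $r$-successor in $D$, obtaining the $D$-avoiding subtrees from exactly the two sources you identify (the hypothesis for top-level conjuncts $\exists r . C'$ of $C$, and non-generatability for existentials $\exists r' . E$ on right-hand sides of CIs)---is the same as the paper's. But the route differs in one essential respect, and that is where your gap sits. The paper does not chase at all and needs no notion of ``forced'' successor: it puts the root $a_0$ into the extension of \emph{every} concept name and attaches, for \emph{every} existential restriction $\exists r' . E$ occurring on the right-hand side of a CI of \Omc, a ditree model of \Omc whose root satisfies $E$ but not $D$ (these exist precisely because $D$ is assumed not \Omc-generatable), in addition to the successors demanded by the top-level conjuncts of $C$. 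Since \ELU has neither negation nor $\bot$, this maximal saturation makes the root satisfy every right-hand side of every CI outright, so there is nothing to classify and the verification is immediate. Your flagged ``hard part''---an \ELU analogue of Lemma~\ref{lem:extra} characterizing which $r$-successors a disjunctive chase can force---is thus not a lemma the paper proves; it is a difficulty its construction dissolves.

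The concrete gap is in your final verification step, where you claim that ``the CIs firing at $d_0$ are unaffected'' by grafting because dropping $D$ ``removes no witness''. Shrinkage is not the problem: grafting a $D$-avoiding model at a successor can also \emph{enlarge} that successor's type (the chosen model's root may satisfy concept names and existentials your lean chase never derived), and an \ELU CI may have a left-hand side of the form $\exists r' . B$, so an enlarged successor type can make $d_0$ satisfy new left-hand sides and fire CIs at the root that your chase never processed. Your one-shot ``chase, then graft'' order therefore does not obviously yield a model of \Omc; you would have to iterate chasing and grafting to a fixpoint (which does converge, as only finitely many subconcepts of \Omc matter at the root) and prove your forced-successor classification at that fixpoint---exactly the laborious step you deferred. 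The maximal-saturation trick avoids both problems at once, which is why the paper's proof is short. (One symmetric small slip, present but harmless in the paper as well: for top-level conjuncts $\exists r_i . E_i$ of $C$ with $r_i \neq r$, neither the hypothesis nor non-generatability yields a $D$-avoiding model, but none is needed, since such successors cannot witness $\exists r . D$.)
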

\noindent
\begin{proof} \
Assume $\Omc \models C \sqsubseteq \exists r . D$ and $C$ contains no
top-level conjunct $\exists r . C'$ such that $\Omc \models C'
\sqsubseteq D$. Assume $D$ is not $\Omc$-generatable. Let 
  $$C= A_1 \sqcap \cdots \sqcap A_n \sqcap \exists r_1 . E_1
  \sqcap \cdots \sqcap \exists r_m . E_m.$$
  In $\Amc_C$, the root $a_0$ has outgoing edges $r_1(a_0,b_1),\dots,
  r_m(a_0,b_m)$. Extend $\Amc_{C}$ to a model $\Imc$ as follows:
\begin{enumerate}
\item add for any $b_{i}$, $1\leq i \leq m$, a ditree-shaped model $\Imc_{b_{i}}$ of $\Omc$ with root $b_{i}$
such that $b_{i}\in E_{i}^{\Imc_{b_{i}}}$ and $b_{i}\not\in D^{\Imc_{b_{i}}}$;
\item add for any $\ELU$ concept $\exists r.E$ such that there is a CI $C' \sqsubseteq
D$ in $\Omc$ such that $D$ contains $\exists r . E$ as a top-level conjunct
an $r$-successor $a_{r,E}$ of $a_{0}$ and a ditree-shaped model $\Imc_{r,E}$ of $\Omc$ with root $a_{r,E}$
such that $a_{r,E}\in E^{\Imc_{r,E}}$ and $a_{r,E}\not\in D^{\Imc_{b_{i}}}$;
\item $a_{0}$ to $A^{\Imc}$ for any concept name $A$.
\end{enumerate}
Note that the interpretations $\Imc_{b_{i}}$ exist since $C$ contains no
top-level conjunct $\exists r . C'$ such that $\Omc \models C'
\sqsubseteq D$ and the interpretations $\Imc_{r,E}$ exist since we assume that $D$ is not $\Omc$-generatable.
By construction, $a_{0}\not\in (\exists r.D)^{\Imc}$ and $\Imc$ is a model of $\Omc$ as all nodes
distinct from $a_{0}$ clearly satisfy all CIs in $\Omc$ and $a_{0}$ satisfies all CIs in $\Omc$ by construction.
We have derived a contradiction to $\Omc \models C \sqsubseteq \exists r . D$ as $a_{0}\in C^{\Imc}$.
\end{proof}
For a ditree-shaped ABox \Amc and $k \geq 0$, we use $\Amc|_k$ to
denote the result of removing from \Amc all individuals on levels
larger than $k$ and $C_\Amc^a$ to denote the subABox of \Amc rooted at
$a$ viewed as an \EL concept. To prepare for the case of
$\ELU_\bot$-to-\EL approximations, we establish the following lemma
directly for $\ELU_\bot$ instead of for $\ELU$.
\begin{lemma}
\label{lem:smallC}
Let $\Omc$ be an $\ELU_\bot$ ontology such that all concepts
on the left hand side of CIs in $\Omc$ are \EL concepts. Let \Amc be a ditree-shaped ABox with root
$a_0$ such that $\Omc,\Amc \models \exists r . C(a_0)$, $C$~an
\EL concept of depth $k$. Let $\Amc^\pm$ be the extended ABox obtained
from $\Amc|_k$ by adding $\mn{Dis}^\EL_\Omc(C^a_\Amc)$ whenever $a$ is a
leaf in $\Amc|_k$. Then $\Omc,\Amc^\pm \models \exists r . C(a_0)$.
\end{lemma}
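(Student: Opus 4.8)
The plan is to argue directly: take an arbitrary model $\mathcal{J}$ of $\Omc$ and $\Amc^\pm$ and show $a_0\in(\exists r.C)^{\mathcal{J}}$. The idea is to \emph{complete} $\mathcal{J}$ below the leaves of $\Amc|_k$ into a model $\Imc$ of $\Omc$ and the full ABox $\Amc$, to invoke the assumption $\Omc,\Amc\models\exists r.C(a_0)$ in order to obtain a witness for $\exists r.C$ in $\Imc$, and finally to pull this witness back into $\mathcal{J}$, exploiting that $C$ has depth $k$ and that the assertions added at the leaves of $\Amc|_k$ record exactly the relevant realizable \EL-behaviour of the truncated subtrees.

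First I would build $\Imc$. For each leaf $a$ of $\Amc|_k$, the assertion added in $\Amc^\pm$ forces $a$ to satisfy one disjunct $\bigsqcap S_a$ of $\mn{Dis}^\EL_{\Omc}(C^a_\Amc)$ in $\mathcal{J}$; by the defining property of $\mn{Dis}^\EL$ there is a model $\Imc_a$ of $\Omc$ with an element $d_a\in(C^a_\Amc)^{\Imc_a}$ whose \EL-type is $S_a$, and I would take $\Imc_a$ to be ditree-shaped with root $d_a$. Define $\Imc$ by grafting, for every leaf $a$, a fresh copy of $\Imc_a$ below $a$, identifying $d_a$ with $a$ and retaining the concept names of $\mathcal{J}$ at $a$. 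The key verification is $\Imc\models\Omc$, and this is exactly where the hypothesis that every left-hand side of a CI in $\Omc$ is an \EL concept is used: satisfaction of a CI at a point then depends only on the point's concept names and the \EL-types of its successors. Since each $\Imc_a$ is a tree rooted at $d_a$, only $a$ gains new successors, and those successors realise only \EL concepts of $\mn{sub}^-(\Omc)$ that already lie in $S_a$ and hence already hold at $a$ in $\mathcal{J}$; thus no element changes its $\mn{sub}^-(\Omc)$-type, so every CI of $\Omc$ is still satisfied. Moreover $\Imc\models\Amc$, as each graft realises $C^a_\Amc$ below $a$. By the assumption we obtain $a_0\in(\exists r.C)^{\Imc}$, i.e.\ a homomorphism from $\Amc_C$ into $\Imc$ whose root maps to an $r$-successor $b$ of $a_0$; because $a_0$ is not a leaf of $\Amc|_k$, $b$ lies in $\mathcal{J}$.

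The pull-back is the main obstacle, and it is where the depth bound $\mn{depth}(C)=k$ must do its work. I would try to show, by induction on the depth of $E$, that for every node $x$ of $\mathcal{J}$ and every \EL concept $E$ over $\mn{sig}(\Omc)$ of suitably bounded depth, $x\in E^{\Imc}$ implies $x\in E^{\mathcal{J}}$; applying this to $\exists r.C$ at $a_0$ then yields the claim. Concept names and conjunctions are immediate, and an existential witnessed by a successor that still lies in $\mathcal{J}$ is handled by the induction hypothesis. The delicate case, which I expect to absorb most of the effort, is an existential whose witnessing successor lies inside a graft: then $x$ is a leaf $a$ and the whole sub-witness lives in $\Imc_a$, so one must argue that the \EL-property extracted from $\Imc_a$ is genuinely forced by the recorded type $S_a$ (so that it survives in $\mathcal{J}$, where $a\models\bigsqcap S_a$ and $\mathcal{J}\models\Omc$) rather than being an incidental feature of the particular $\Imc_a$. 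Since $\Omc$ is non-Horn and lacks a universal model, controlling these grafted witnesses is the crux; the tools I would combine are the depth bound (the homomorphism can reach a graft below a level-$k$ leaf only at its topmost layer, so only shallow properties of the graft are ever queried) and Lemma~\ref{lem:generatablesem}, which guarantees that any $\exists r.G$-successor that $\Omc$ is forced to introduce, and that is not already present as a top-level conjunct, has an $\Omc$-generatable filler $G$ and is therefore determined by $\Omc$-level information recorded at the leaf. Making these two observations interact so that every relevant grafted existential is recoverable from $\mn{Dis}^\EL_{\Omc}(C^a_\Amc)$ is the heart of the argument.
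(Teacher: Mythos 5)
Your construction of $\Imc$ is essentially the paper's: the paper argues contrapositively, but it grafts the same kind of ditree models of $\Omc$ realizing the truncated subtrees with a minimal $\mn{sub}^-(\Omc)$-type below the truncation leaves, and it preserves $\Omc$-modelhood by exactly your observation that the left-hand sides of CIs are \EL and no node changes its $\mn{sub}^-(\Omc)$-type; since the direct and contrapositive formulations both reduce to the implication ``witness in the extended model yields witness in $\Jmc$'', the organizational difference is immaterial. The genuine gap is in the pull-back, precisely where you locate ``the heart of the argument''. The invariant you propose --- $x \in E^{\Imc}$ implies $x \in E^{\Jmc}$ for nodes $x$ of $\Jmc$ and \EL concepts $E$ of bounded depth --- is false: minimality of $S_a$ constrains only the $\mn{sub}^-(\Omc)$-type of the graft root, so the graft $\Imc_a$ may realize \EL concepts outside $\mn{sub}^-(\Omc)$ (already $\exists s.\top$ when $\exists s.\top \notin \mn{sub}^-(\Omc)$) that no model of $\Amc^\pm$ is obliged to satisfy at $a$; a witness entering a graft interior is therefore not reproducible in $\Jmc$. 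Your proposed rescue via Lemma~\ref{lem:generatablesem} cannot close this: that lemma is stated and proved for \ELU ontologies, whereas the present lemma must hold for $\ELU_\bot$, where generatability precisely fails to control entailed existentials (this is the point of Example~\ref{ex:ELUbot}); and even in \ELU it only says the filler is $\Omc$-generatable, not that $\bigsqcap S_a$ forces the existential at $a$ in $\Jmc$. You compound the problem by grafting at \emph{all} leaves of $\Amc|_k$: at a leaf of $\Amc$ of depth $d<k$ the full subtree is already contained in $\Amc^\pm$, so grafting there is unnecessary, and it plants graft interiors at depths $\leq k$ that the evaluation of $\exists r.C$ can see. (The paper grafts only at depth-$k$ nodes.)

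What actually makes the lemma go through --- and what the paper's proof silently relies on --- is pure depth counting, with the indices read as in the lemma's application inside the proof of Lemma~\ref{lem:comp}: there the truncation level $k$ equals $\mn{depth}(\exists r . C)$, i.e., $C$ has depth $k-1$. (The statement as printed has an off-by-one; with $C$ itself of depth $k$ the claim is false already for $\Omc = \emptyset$, $\Amc = \{r(a_0,b_1),\, r(b_1,b_2)\}$ and $C = \exists r . \top$, since then $\mn{Dis}^\EL_\Omc(C^{b_1}_\Amc) = \top$ and $\Amc^\pm$ loses the second edge.) With grafts attached only below depth-$k$ nodes and $\exists r.C$ of depth $k$, a subconcept at nesting depth $j$ is evaluated at tree depth $j$, so the grafting points are reached only by depth-$0$ subconcepts: graft interiors are never inspected, only concept names at graft roots, and those agree with $\Jmc$. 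Hence the ``delicate case'' for which you budget most of the effort never arises; your plan stalls at a step that, under the correct accounting, is a two-line count, and the tools you propose for it would not work.
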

\noindent
\begin{proof} \ 
  Assume that
  $\Omc,\Amc^\pm \not\models \exists r . C(a_0)$. Take a ditree shaped
  model $\Imc$ of $\Omc$ and $\Amc^\pm$ with
  $a_{0}\not\in (\exists r.C)^{\Imc}$.  Let $a$ be a node of depth $k$
  in $\Amc$. We have $a \in \mn{Dis}^{\EL}_\Omc(C^a_\Amc)^\Imc$ and thus
  there is a disjunct $D$ of $\mn{Dis}^{\EL}_\Omc(C^a_\Amc)$ with
  $a \in D^\Imc$. Let $\Amc_{a}$ be the subABox of $\Amc$ rooted at $a$.
  Observe that $\Amc_{a}$ is satisfiable w.r.t.~$\Omc$: otherwise $\bot$ is the only
  disjunct of $\mn{Dis}^{\EL}_{\Omc}(C^a_\Amc)$ and so $\Amc^{\pm}$ is not satisfiable.
  Thus $\Omc,\Amc^\pm \models \exists r . C(a_0)$, and we have derived a contraction.
  As $\Amc_{a}$ is satisfiable w.r.t.~$\Omc$ we obtain by definition of $\mn{Dis}^{\EL}_{\Omc}(C^a_\Amc)$ 
  that there is a model $\Jmc_a$ of $\Omc$ and $\Amc_{a}$ 
  such that whenever $a \in E^{\Jmc_a}$ for some
  $\EL$ concept $E \in \mn{sub}^-(\Omc)$, then $a \in E^\Imc$. Construct a new
  interpretation $\Imc'$ by adding to $\Imc$ the interpretation
  $\Jmc_{a}$, for all nodes $a$ of depth $k$ in $\Amc$ (where $\Imc$
  and $\Jmc_{a}$ only share $a$). $\Imc'$
  is a model of $\Omc$ and $\Amc$ since $a\in E^{\Imc}$ if $a\in E^{\Imc'}$, for all
  \EL concepts $E\in \mn{sub}^-(\Omc)$ and $a$ of depth $k$ in $\Amc$. Moreover,
  $a_{0}\not\in (\exists r.C)^{\Imc'}$, as required.
\end{proof}

\medskip

We are now in a position to prove Lemma~\ref{lem:comp}.

\lemcomp*

\noindent
\begin{proof}\ Assume that $\Omc_S \models C_0 \sqsubseteq D_0$ with
  $C_0,D_0$ \EL concepts over $\mn{sig}(\Omc_S)$ such that the role
  depth of $D_0$ is bounded by~$\ell$. It clearly suffices to consider
  the cases where $D_0$ is a concept name and where it is of the form
  $\exists r . E_0$. 

  We start with the former, so let $D_0=A$.  Clearly,
  $\Omc_S \models C_0 \sqsubseteq A$ implies
  $\Omc_S \models \mn{Dis}^\EL_{\Omc_S}(C_0) \sqsubseteq A$.  It thus
  follows from Lemma~\ref{lem:compaux} that
  $O_T^\ell \models C_0 \sqsubseteq A$. To see this, first assume that
  $\mn{Dis}^\EL_{\Omc_S}(C_0)$ contains a single disjunct. Then $A$
  must be a conjunct of $\mn{Dis}^\EL_{\Omc_S}(C_0)$ and it suffices
  to apply Lemma~\ref{lem:compaux}. Now assume that
  $\mn{Dis}^\EL_{\Omc_S}(C_0)$ has more than one disjunct, that is, it
  is in $\mn{Dis}^-(\Omc_S)$. Then $\Omc^\ell_T$ contains the CI
  $X_{\mn{Dis}^\EL_{\Omc_S}(C_0)} \sqcap X_{\mn{Dis}^\EL_{\Omc_S}(C_0)}
  \sqsubseteq A$ (second line of Figure~\ref{fig:second}) and thus it
  again suffices to apply Lemma~\ref{lem:compaux}.

  The case where $D_0=\exists r . E_0$ is a consequence of the
  following claim. For each $a \in \mn{Ind}(\Amc_{C_0})$, we write
  $C^a_0$ as an abbreviation for $C^a_{\Amc_{C_0}}$.
  \\[2mm]
  {\bf Claim.}  For all $a \in \mn{Ind}(\Amc_{C_0})$ and \EL concepts
  $\exists r . E$ of depth $\ell-\mn{depth}(a)$, $\Omc_S \models
  C^a_0 \sqsubseteq \exists r . E$ implies $\Omc^\ell_T
  \models C^a_0 \sqsubseteq \exists r . E$.
  \\[2mm]
  \emph{Proof of claim}. The proof is by induction on the co-depth of~$a$. 

  \smallskip 
  \noindent 
  \emph{Induction start}. Then $a$ is a leaf in $\Amc_{C_0}$ and thus
  $C^a_0$ does not have any top-level conjuncts of the form
  $\exists r . E'$.  Lemma~\ref{lem:generatablesem} thus yields that
  $E$ is $\Omc_\Smc$-generatable.  Thus $C^a_0 \sqsubseteq
  \exists r . E$ is a CI in $\Omc_T^\ell$.

  \smallskip 
  \noindent 
  \emph{Induction step.} Then $a$ is a non-leaf in $\Amc_{C_0}$. We
  distinguish two cases.

  \medskip
  \noindent
  \emph{Case~1}. There is a top-level conjunct $\exists r . E'$ in
  $C^a_0$ such that $\Omc_\Smc \models E' \sqsubseteq
  E$. Then $a$ has an $r$-successor $b$ in $\Amc_{C_0}$ such that
  $C^b_0 = E'$.  Let
  $$
  E=A_1 \sqcap \cdots \sqcap A_{n} \sqcap \exists r_1 . E_1 \sqcap
  \cdots \sqcap \exists r_{m} . E_{m}.
  $$
  Since we have already shown Lemma~\ref{lem:comp} for the case where
  $D_0$ is a concept name, we obtain $\Omc_T^\ell \models
  C^b_0 \sqsubseteq A_i$ for $1 \leq i \leq n$.  From the
  induction hypothesis, we further obtain $\Omc_T^\ell \models
  C^b_0 \sqsubseteq \exists r_i . E_i$ for $1 \leq i \leq
  m$. Thus $\Omc_T^\ell \models C^b_0 \sqsubseteq E$ and
  consequently $\Omc_T^\ell \models C^a_0 \sqsubseteq
  \exists r . E$ as required.
  
  \medskip
  \noindent
  \emph{Case~2}. There is no top-level conjunct $\exists r . E'$ in
  $C^a_0$ such that $\Omc_\Smc \models E' \sqsubseteq E$.
  Then Lemma~\ref{lem:generatablesem} yields that $E$ is
  $\Omc_\Smc$-generatable. Let $\Amc$ be the ditree-shaped subABox of
  $\Amc_{C_0}$ rooted at $a$ and let $\Amc^\pm$ be the extended ABox
  obtained from $\Amc|_{k}$, with $k$ the depth of $\exists r . E$, by
  adding $\mn{Dis}^\EL_{\Omc_S}(C^c_\Amc)^\uparrow(c)$ whenever $c$ is a
  leaf in $\Amc|_{k}$. Applying Lemma~\ref{lem:smallC} to \Amc and
  $\Amc^\pm$ and with $\exists r .E$ in place of $\exists r . C$, we
  obtain $\Omc_S,\Amc^\pm \models \exists r . E(a)$.  Let $C^\pm$ be
  $\Amc^\pm$ viewed as an \ELU concept. Then $\Omc_S \models C^\pm
  \sqsubseteq \exists r . E$. Since $E$ is $\Omc_S$-generatable,
  $(C^\pm)^\uparrow \sqsubseteq \exists r . E$ is thus a CI in
  $\Omc_T^\ell$. We next observe that, by Lemma~\ref{lem:compaux},
  $\Omc_T^\ell\models C^c_\Amc \sqsubseteq
  \mn{Dis}^\EL_{\Omc_S}(C^c_\Amc)^\uparrow$ and thus $\Omc_T^\ell,
  \Amc\models \mn{Dis}^\EL_{\Omc_S}(C^c_\Amc)^\uparrow(c)$ for all leaves
  $c$ in $\Amc|_{k}$.  Together with the construction of $\Amc^\pm$
  and $C^\pm$, this yields that $\Omc^\ell_T \models C^a_0
  \sqsubseteq (C^\pm)^\uparrow$. Together with $(C^\pm)^\uparrow
  \sqsubseteq \exists r . E$ being a CI in $\Omc_T^\ell$, we obtain
  $\Omc^\ell_T \models C^a_0 \sqsubseteq \exists r. E$ as
  required.
\end{proof}

\subsection{Soundness and Completeness of the Special Chase}
\label{app:chasecompl}

Our main aim is to establish Lemma~\ref{lem:chasecompl}.  We start,
however, with proving soundness of the chase. While this is
interesting in its own right, we are not going to use it directly in
the context of approximations. It is, however, an ingredient to the
subsequent completeness proof.
%
%
%
%
%
\begin{lemma}
  \label{lem:chsound}
  Let $C_0$ be an \EL concept and \Omc an $\ELU_\bot$ ontology. Then
  $\mn{ch}^{\text{sp}}_{\Omc}(\Amc_{C_0}) \vdash D(a_0)$ implies
  $\Omc \models C_0 \sqsubseteq D$ for all 
  $D \in \mn{Dis}(\Omc)$.
%
%
\end{lemma}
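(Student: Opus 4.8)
The plan is to fix an arbitrary model $\Imc$ of $\Omc$ together with an element $d_0 \in C_0^\Imc$ and to show $d_0 \in D^\Imc$ whenever $\mn{ch}^{\text{sp}}_{\Omc}(\Amc_{C_0}) \vdash D(a_0)$; since $\Imc$ and $d_0$ are arbitrary, this yields $\Omc \models C_0 \sqsubseteq D$ (if no such $\Imc,d_0$ exist, then $C_0$ is unsatisfiable w.r.t.\ $\Omc$ and the claim holds vacuously). The technical device is a homomorphism-style invariant: along a generating sequence $\Amc_{C_0} = \Amc_0,\Amc_1,\dots$ of the chase I build a partial map $h$ from chase individuals into $\Delta^\Imc$ maintaining (I1) $h(a_0) = d_0$, (I2) $(h(a),h(b)) \in r^\Imc$ for every $r(a,b) \in \Amc_i$, and (I3) $h(a) \in C^\Imc$ for every concept assertion $C(a) \in \Amc_i$, including compound and disjunctive $C$.

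First I would record a sub-claim that $\vdash$ is sound relative to $h$: if $h$ satisfies (I1)--(I3) on $\Amc_i$ and $\Amc_i \vdash C(a)$, then $h(a) \in C^\Imc$. This follows by a routine induction on $C$ through the clauses defining $\vdash$; the only mildly delicate clause is $\bot$, where $\Amc_i \vdash \bot(a)$ forces some $\bot(b) \in \Amc_i$ and hence, by (I3), $h(b) \in \bot^\Imc = \emptyset$, a contradiction, so this situation simply cannot arise while $\Imc$ exists. I also isolate two semantic facts to be reused: $\mn{DNF}(D) \equiv D$ (DNF conversion treating existentials as atoms preserves equivalence, and dropping $\bot$-disjuncts is harmless), and $\Omc \models C \sqsubseteq \mn{Dis}_\Omc(C)$ (every $d \in C^\Imc$ realizes a $\mn{sub}^-(\Omc)$-type that contains one of the minimal realized types forming a disjunct of $\mn{Dis}_\Omc(C)$, hence $d$ satisfies that disjunct).

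The base case sets $h(a_0) = d_0$ and extends it over the tree $\Amc_{C_0}$ via the standard homomorphism witnessing $d_0 \in C_0^\Imc$, establishing (I1)--(I3) for $\Amc_0$. For the inductive step I go through the seven chase rules. Rules 2, 4, and 5 are immediate from (I2)/(I3), the sub-claim, and $\Imc \models \Omc$. Rule 1 uses the sub-claim to obtain $h(a) \in C^\Imc$, then $\Imc \models C \sqsubseteq D$ and $\mn{DNF}(D) \equiv D$. Rule 3 is the only rule introducing a fresh individual $b$ (for $C$): since $\exists r . C(a) \in \Amc_i$, (I3) supplies a witness $e \in \Delta^\Imc$ with $(h(a),e) \in r^\Imc$ and $e \in C^\Imc$, and I set $h(b) = e$. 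Rules 6 and 7 both add $\mn{Dis}_\Omc(C')(a)$ for the concept $C'$ that the anonymous individual $a$ was introduced for; since $C'(a) \in \Amc_i$ (assertions are never deleted), (I3) gives $h(a) \in C'^\Imc$, and $\Omc \models C' \sqsubseteq \mn{Dis}_\Omc(C')$ closes the case, their elaborate guards being irrelevant to soundness.

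Finally, any derivation of $\mn{ch}^{\text{sp}}_{\Omc}(\Amc_{C_0}) \vdash D(a_0)$ uses only finitely many assertions, all present in some finite $\Amc_i$, so $\Amc_i \vdash D(a_0)$; applying the sub-claim at stage $i$ gives $h(a_0) = d_0 \in D^\Imc$. I expect the main obstacle to be the bookkeeping around the anonymous-individual rules 6 and 7 and the concepts $\mn{Dis}_\Omc(\cdot)$, specifically confirming that ``$a$ introduced for $C'$'' guarantees $C'(a) \in \Amc_i$ and cleanly establishing $\Omc \models C' \sqsubseteq \mn{Dis}_\Omc(C')$, whereas the remaining rules are genuinely routine.
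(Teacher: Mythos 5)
Your proposal is correct and takes essentially the same route as the paper's proof: your invariant (I1)--(I3) is exactly the paper's notion of an \emph{extended homomorphism} from $\Amc_i$ to $\Imc$ anchored at $d_0$, your sub-claim on the soundness of $\vdash$ relative to $h$ is the paper's Claim (same structural induction, same vacuous handling of $\bot$), and your rule-by-rule maintenance of the invariant --- including treating Rules~6 and~7 via $C'(a) \in \Amc_i$ and $\Omc \models C' \sqsubseteq \mn{Dis}_\Omc(C')$, and Rule~3 via the existential witness --- matches the paper's induction on chase steps, with the same finite-stage argument at the end. The only cosmetic difference is that you state explicitly the facts $\mn{DNF}(D) \equiv D$, $\Omc \models C \sqsubseteq \mn{Dis}_\Omc(C)$, and the unsatisfiable-$C_0$ case, which the paper uses implicitly.
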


\noindent
\begin{proof}\ Let $\Amc_{C_0}=\Amc_0,\Amc_1,\dots$ be a sequence
  generated by chasing $\Amc_{C_0}$ with \Omc using the special
  chase. Further, let \Imc be a model of \Omc and let
  $d \in C_0^\Imc$.  An \emph{extended homomorphism} from $\Amc_i$ to
  \Imc is a function $h:\mn{Ind}(\Amc) \rightarrow \Delta^\Imc$ such
  that
  \begin{enumerate}

  \item $C(a) \in \Amc_i$, $C$ potentially compound, implies
    $h(a) \in C^\Imc$ and

  \item $r(a,b) \in \Amc_i$ implies $(h(a),h(b)) \in r^\Imc$.

  \end{enumerate}
  We next observe the following.
  \\[2mm]
  {\bf Claim.} if $\Amc_i \vdash D(a)$, $a \in \mn{Ind}(\Amc_i)$ and
  $D \in \mn{Dis}(\Omc)$, and $h$ is an extended homomorphism from
  $\Amc_i$ to \Imc, then $h(a) \in D^\Imc$.
  \\[2mm]
  The claim can be proved by induction on the structure of $D$. If $D$
  takes the form $D_1 \sqcap D_2$ or $\exists r . D_1$, then this is
  straightforward using the semantics and induction hypothesis. If $D$
  is $\top$, $\bot$, a concept name, or of the form $D_1 \sqcup D_2$
  (note that in the latter case $\Amc_i \vdash D(a)$ implies
  $D(a) \in \Amc_i$), then this is immediate by definition of extended
  homomorphisms.

  \medskip

  We show by induction on $i$ that for each $i \geq 0$, there is an
  extended homomorphism $h_i$ from $\Amc_i$ to \Imc with
  $h(a_0)=d$. This is trivial for $i=0$ since $d \in C_0^\Imc$. For
  $i \geq 0$, we make a case distinction according to the rule that
  was applied to obtain $\Amc_{i+1}$ from $\Amc_i$:
  \begin{enumerate}

  \item Then there is a $C \sqsubseteq D \in \Omc$ and an $a \in
    \mn{Ind}(\Amc)$ such that $\Amc_i \vdash C(a)$ and
    $\Amc_{i+1}=\Amc_i \cup \{ \mn{DNF}(D)(a) \}$.  By the claim,
    $\Amc_i \vdash C(a)$ implies $h_i(a) \in C^\Imc$.  Since \Imc is a
    model of \Omc, $h_i(a) \in D^\Imc = \mn{DNF}(D)^\Imc$ and
    consequently $h_i$ can be extended to an extended homomorphism
    $h_{i+1}$ from $\Amc_{i+1}$ to \Imc.

\item Trivial.

\item Trivial.

\item Then there are $D_1(a) \in \Amc_i$ with $D_1 \in \mn{Dis}^-(\Omc)$
  and $D_2,D_3 \in \mn{Dis}(\Omc)$ such that $\Amc_i \vdash D_2(a)$,
  $\Omc \models D_1 \sqcap D_2 \sqsubseteq D_3$, and
  $\Amc_{i+1}=\Amc_i \cup \{ D_3(a) \}$. 
  From $D_1(a) \in \Amc_i$, $\Amc_i \vdash D_2(a)$, and the claim,
  we get $h_i(a) \in (D_1 \sqcap D_2)^\Imc$. Since \Imc is a model of
  \Omc, $h_i(a) \in D_3(a)$. Thus, we can choose $h_{i+1}=h_i$.

\item Similar to the previous case.

\item Then there is a $D(a) \in \Amc_i$ with $D \in \mn{Dis}^-(\Omc)$,
  $a$ anonymous and introduced for $C$, and $\Amc_{i+1}=\Amc_i \cup \{
  \mn{Dis}_{\Omc}(C)(a) \}$. We have $C(a) \in \Amc_i$, and thus $h(a)
  \in C^\Imc$. Since \Imc is a model of \Omc, this implies $h(a) \in
  \mn{Dis}_{\Omc}(C)^\Imc$ and thus we can choose $h_{i+1}=h_i$.

\item Similar to the previous case. 

  \end{enumerate}
  We now finish the proof of Lemma~\ref{lem:chsound}. Let
  $\mn{ch}^{\text{sp}}_{\Omc}(\Amc_{C_0}) \vdash D(a_0)$ with $D \in
  \mn{Dis}(\Omc)$. Then there is an $\Amc_i$ with $\Amc_i \vdash
  D(a_0)$. From the claim, we obtain $d=h_i(a_0) \in C^\Imc$. Since
  this holds for all \Imc and $d$, we have shown that $\Omc \models
  C_0 \sqsubseteq D$, as required.
\end{proof}

\lemchasecompl*

\noindent
\begin{proof}\
  We start with a special case, which is that $\bot(b) \in
  \mn{ch}^{\text{sp}}_{\Omc}(\Amc_{C_0})$ for some $b$. By
  Lemma~\ref{lem:chsound}, $\Omc \models C_0 \sqsubseteq \bot$
  and thus $\mn{Dis}^\EL_\Omc(C_0) = \bot$. By definition of
  `$\vdash$', $\mn{ch}^{\text{sp}}_{\Omc}(\Amc_{C_0}) \vdash
  \bot(a_0)$ and thus we are done. In what follows, we can thus
  assume that $\bot(b) \notin
  \mn{ch}^{\text{sp}}_{\Omc}(\Amc_{C_0})$ for all $b$.
    
  To deal with the general case, assume to the contrary of what we
  have to prove that
  $\mn{ch}^{\text{sp}}_{\Omc}(\Amc_{C_0}) \not\vdash
  \mn{Dis}^\EL_\Omc(C_0)(a_0)$.  We are going to construct from
  $\mn{ch}^{\text{sp}}_{\Omc}(\Amc_{C_0})$ a model \Imc of \Omc with
  an element $d$ such that
  $d \in C_0^\Imc \setminus (\mn{Dis}^\EL_\Omc(C_0))^\Imc$, in
  contradiction to the definition of $\mn{Dis}^\EL_\Omc(C_0)$.
  %

\medskip 
  
An original $a \in \mn{Ind}(\mn{ch}^{\text{sp}}_{\Omc}(\Amc_{C_0}))$ is
\emph{disjunctive} if $\mn{ch}^{\text{sp}}_{\Omc}(\Amc_{C_0})$
contains at least one assertion $D(a)$ with $D \in \mn{Dis}^-(\Omc)$.
With each original disjunctive $a$, we associate a disjunction
$$
D_a = \mn{Dis}_\Omc(\bigsqcap \{ D \in \mn{Dis}(\Omc) \mid
\mn{ch}^{\text{sp}}_{\Omc}(\Amc_{C_0}) \vdash D(a)\}).
$$
For the above definition, it is important to note that
$\mn{sub}^-(\Omc) \subseteq \mn{Dis}(\Omc)$ and thus also all
$C \in \mn{sub}^-(\Omc)$ with
$\mn{ch}^{\text{sp}}_{\Omc}(\Amc_{C_0}) \vdash C(a)$
contribute to the definition of $D_a$. We observe the following:
\begin{itemize}

\item[(P1)] If $\mn{ch}^{\text{sp}}_{\Omc}(\Amc_{C_0}) \vdash D(a)$
  with $D \in \mn{Dis}(\Omc)$, then $\emptyset \models D_a \sqsubseteq D$.

  $\mn{ch}^{\text{sp}}_{\Omc}(\Amc_{C_0}) \vdash D(a)$ implies
  %
  $$\emptyset \models
\bigsqcap \{ D' \in \mn{Dis}(\Omc) \mid
\mn{ch}^{\text{sp}}_{\Omc}(\Amc_{C_0}) \vdash D'(a_0)\} \sqsubseteq D.
  $$
  By definition of $\mn{Dis}_\Omc$,
  $$
  \emptyset \models D_a \sqsubseteq \bigsqcap \{ D' \in \mn{Dis}(\Omc) \mid
\mn{ch}^{\text{sp}}_{\Omc}(\Amc_{C_0}) \vdash D'(a_0)\}  
  $$
  and thus 
%
  $\emptyset \models D_a \sqsubseteq D$. 
  
\item[(P2)] If $\emptyset \models D_a \sqsubseteq  D \in \mn{Dis}(\Omc)$, then
  $D(a) \in \mn{ch}^{\text{sp}}_{\Omc}(\Amc_{C_0})$.

  Since $a$ is disjunctive, there is some
  $D_1(a) \in \mn{ch}^{\text{sp}}_{\Omc}(\Amc_{C_0})$ with
  $D_1 \in \mn{Dis}^-(\Omc)$. Let $D_1,\dots,D_k$ be all disjunctions
  from $\mn{Dis}(\Omc)$ with
  $\mn{ch}^{\text{sp}}_{\Omc}(\Amc_{C_0}) \vdash D_i(a)$. Consider
  $\mn{Dis}_\Omc(D_1 \sqcap D_2)$. In the very special case that this
  disjunction consists of a single disjunct that contains all concepts
  from $\mn{sub}^-(\Omc)$ as conjuncts,
  $\mn{Dis}_\Omc(D_1 \sqcap D_2)=D_a$ and Rule~4 applied to $D_1$ and
  $D_2$ yields $D_a(a) \in \mn{ch}^{\text{sp}}_{\Omc}(\Amc_{C_0})$ as
  required. Otherwise, we find some $D'$ with at least two disjuncts
  such that
  $\emptyset \models \mn{Dis}_\Omc(D_1 \sqcap D_2) \sqsubseteq
  D'_2$. We can apply Rule~4 again to show
  $D'_2(a) \in \mn{ch}^{\text{sp}}_{\Omc}(\Amc_{C_0})$.  Since $D'_2$
  has at least two disjuncts, we can proceed in the same way applying
  Rule~4 to $D'_2,D_3$, then to $D'_3,D_4$, and so on. In the last
  step, we can clearly choose $D_a$ as $D'_k$.
  Finally, another application of Rule~4 with $D_1=D_2=D_a$ and
  $D_3=D$ yields $D(a) \in \mn{ch}^{\text{sp}}_{\Omc}(\Amc_{C_0})$.

\end{itemize}
Note that it follows from (P2) and the assumption that $\bot(a)
\notin \mn{ch}^{\text{sp}}_{\Omc}(\Amc_{C_0})$ that $D_a \neq \bot$,
that is, $D_a$ has at least one disjunct.

\smallskip

We now consider each original disjunctive
$a \in \mn{Ind}(\mn{ch}^{\text{sp}}_{\Omc}(\Amc_{C_0}))$, identify a
disjunct $E_a$ of $D_a$ and extend
$\mn{ch}^{\text{sp}}_{\Omc}(\Amc_{C_0})$ with $D(a)$ for each
$D \in \mn{Dis}(\Omc)$ with $\emptyset \models E_a \sqsubseteq D$.
We then show that no new applications of
chase rules are possible afterwards, with the possible exception of
applications of Rule~3 to original disjunctive individuals $a$. We
also select an $E_a$ for each original non-disjunctive $a$, in a
trivial way: $E_a$ is then the conjunction of all
$C \in \mn{sub}^-(\Omc)$ such that
$\mn{ch}^{\text{sp}}_{\Omc}(\Amc_{C_0}) \vdash C(a)$.

\medskip 

We start at the root $a_0$ (if it is disjunctive).  Recall our
assumption that
$\mn{ch}^{\text{sp}}_{\Omc}(\Amc_{C_0}) \not\vdash
\mn{Dis}^\EL_\Omc(C_0)(a_0)$. There must be a disjunct $E_{a_0}$ of
$D_{a_0}$ such that
$\emptyset \not\models E_{a_0} \sqsubseteq \mn{Dis}^\EL_\Omc(C_0)$ as
otherwise
$\emptyset \models D_{a_0} \sqsubseteq \mn{Dis}^\EL_\Omc(C_0)$ and
thus (P2) yields
$\mn{Dis}^\EL_\Omc(C_0)(a_0) \in
\mn{ch}^{\text{sp}}_{\Omc}(\Amc_{C_0})$. Together with Rules~2 and~3,
this yields
$\mn{ch}^{\text{sp}}_{\Omc}(\Amc_{C_0}) \vdash
\mn{Dis}^\EL_\Omc(C_0)(a_0)$, a contradiction.
%
%
Let $\Amc^+$ denote the result of extending
$\mn{ch}^{\text{sp}}_{\Omc}(\Amc_{C_0})$ for $E_{a_0}$ as described
above. We observe the following counterpart of (P1) for $\Amc^+$.
\begin{itemize}

\item[(P3)] if $\Amc^+ \vdash D(a_0)$ with $D \in \mn{Dis}(\Omc)$,
  then $\emptyset \models E_{a_0}
  \sqsubseteq D$.

  If $\mn{ch}^{\text{sp}}_{\Omc}(\Amc_{C_0}) \vdash D(a_0)$, then this
  follows from (P1).  Otherwise, by definition of $\vdash$ and
  construction of $\Amc^+$,
  we must have $\emptyset \models \bigsqcap S \sqsubseteq D$
  where $S$ contains
  \begin{enumerate}

  \item all concepts $D' \in \mn{Dis}(\Omc)$ such that
    $\mn{ch}^{\text{sp}}_{\Omc}(\Amc_{C_0}) \vdash D'(a_0)$ and

  \item all concepts $D'$ with $D'(a_0)$ fresh in $\Amc^+$.

  \end{enumerate}
  (P1) implies   $\emptyset \models E_{a_0} \sqsubseteq D'$ for all concepts
  $D'$ from Point~1 and the construction of $\Amc^+$ yields
  $\emptyset \models E_{a_0} \sqsubseteq D'$ for all concepts
  $D'$ from Point~2. Thus $\emptyset \models E_{a_0} \sqsubseteq D$.

\end{itemize}
We show that no new rule applications are possible, except for 
applications of Rule~3 to original disjunctive individuals:
\begin{itemize}

\item Rule~1. Assume that $\Amc^+ \vdash C(a_0)$ and that $C
  \sqsubseteq D \in \Omc$. Then (P3) yields $\emptyset \models E_{a_0}
  \sqsubseteq C$.  By definition of $D_{a_0}$, this implies that $C$
  is a conjunct of $E_{a_0}$.  Let $D$ be of the form $C_1 \sqcap
  \cdots \sqcap C_n \sqcap D_1 \sqcap \cdots \sqcap D_m$ where
  $C_1,\dots,C_n$ are concept names or existential restrictions and
  $D_1,\dots,D_m$ are disjunctions. Then $C_1,\dots,C_n$ must also be
  conjuncts in $E_{a_0}$. Moreover, for $1 \leq i \leq m$,
  $\mn{DNF}(D_i)$ must contain a disjunct $G$ such that all conjuncts
  of $G$ are in $E_{a_0}$. This implies $\emptyset \models E_{a_0}
  \sqsubseteq \mn{DNF}(D)$.  Consequently, $\mn{DNF}(D)(a_0)$ is in $\Amc^+$.

\item Rule~2. If $C_1 \sqcap C_2(a_0)$ is fresh in $\Amc^+$, then $C_1
  \sqcap C_2 \in \mn{Dis}(\Omc)$ is such that $\emptyset \models E_{a_0} \sqsubseteq C_1
  \sqcap C_2$. Thus $\emptyset \models E_{a_0} \sqsubseteq C_i$ for $i \in \{1,2\}$ and as a
  consequence, $C_1(a_0),C_2(a_0)$ are also in~$\Amc^+$.

\item Rule~3. New applications of Rule~3 are possible only to $a_0$,
  which is original and disjunctive.
  
\item Rule~4. Assume that $D_1(a_0)$ is in $\Amc^+$, $D_1 \in
  \mn{Dis}^-(\Omc)$, that
  $\Amc^+ \vdash D_2(a_0)$, and that
  $\Omc \models D_1 \sqcap D_2 \sqsubseteq D_3$. We have
  $\emptyset \models E_{a_0} \sqsubseteq D_1$: if $D_1(a_0)$ is fresh in $\Amc^+$, then this is
  clear; otherwise, $D_1(a_0) \in
  \mn{ch}^{\text{sp}}_{\Omc}(\Amc_{C_0})$, which implies
  $\mn{ch}^{\text{sp}}_{\Omc}(\Amc_{C_0}) \vdash D_1(a_0)$ since
 $D_1 \in
  \mn{Dis}^-(\Omc)$, and thus
  (P1) yields $\emptyset \models E_{a_0} \sqsubseteq D_1$. Moreover, $\emptyset \models E_{a_0} \sqsubseteq
  D_2$ by (P3).
  By definition of $D_{a_0}$, $\emptyset \models E_{a_0} \sqsubseteq D_3$ and thus
  $D_3(a_0)$ is in $\Amc^+$.

\item Rule~5. Trivially not applicable since $a_0$ has no predecessors. 

\item Rule~6 and~7. Only apply to anonymous individuals, but $a_0$ is
  original.

\end{itemize}
This finishes the extension of
$\mn{ch}^{\text{sp}}_{\Omc}(\Amc_{C_0})$
at $a_0$. From now on, we assume that this extension has been
incorporated into $\mn{ch}^{\text{sp}}_{\Omc}(\Amc_{C_0})$, that
is, we write $\mn{ch}^{\text{sp}}_{\Omc}(\Amc_{C_0})$ in place of
$\Amc^+$. Trivially, property (P1) still holds for all $a \neq a_0$
and property (P2) is preserved.

\medskip

We then apply the following extension as long as possible. Choose some $r(b,a) \in \mn{ch}^{\text{sp}}_{\Omc}(\Amc_{C_0})$ with
$a$ original and disjunctive and assume that $E_b$ was already
determined and $\mn{ch}^{\text{sp}}_{\Omc}(\Amc_{C_0})$ extended
accordingly (the latter only if $b$ is disjunctive). We argue that there must be a
disjunct $E_a$ of $D_a$ such that the following properties are
satisfied:
\begin{enumerate}


\item[(a)] $\emptyset \models E_a \sqsubseteq C$ and $\exists r . C \in \mn{sub}(\Omc)$ implies
$ \mn{ch}^{\text{sp}}_{\Omc}(\Amc_{C_0}) \vdash \exists r . C(b)$;

\item[(b)] $\emptyset \models E_a \sqsubseteq D_1 \in \mn{Dis}^-(\Omc)$ and $\Omc \models \exists r
  . D_1 \sqsubseteq D_2$ with $D_2 \in \mn{Dis}(\Omc)$ implies that
  $D_2(b) \in \mn{ch}^{\text{sp}}_{\Omc}(\Amc_{C_0})$.

\end{enumerate}
Assume that this is not the case. Let $D_a = E_1 \sqcup \cdots \sqcup
E_k$. For $1 \leq i \leq k$, we then find one of the following:
\begin{itemize}

\item[(i)] $D'_i= \exists r . D_i \in \mn{sub}(\Omc)$ with $\emptyset \models E_i \sqsubseteq D_i$ and
  $\mn{ch}^{\text{sp}}_{\Omc}(\Amc_{C_0}) \not\vdash D'_i(b)$.

\item[(ii)] $D_i \in \mn{Dis}^-(\Omc)$ and $D'_i \in \mn{Dis}(\Omc)$
  such that $\emptyset \models E_i \sqsubseteq D_i$, $\Omc \models
  \exists r . D_i \sqsubseteq D'_i$, and
  $D'_i(b) \notin\mn{ch}^{\text{sp}}_{\Omc}(\Amc_{C_0})$.

\end{itemize}
Let $D_L$ denote the result of removing identical disjuncts from
$\bigsqcup_{1 \leq i \leq k} D_i$ and $D_R$ the result of removing
identical disjuncts from $\bigsqcup_{1 \leq i \leq k} D'_i$. We have
$D_L,D_R \in \mn{Dis}(\Omc)$ while this need not be true for the
disjunctions that they have been obtained from. Clearly,
$ \Omc \models \exists r . D_L \sqsubseteq D_R$. Since each $D_i$ is
from $\mn{Dis}^-(\Omc)$, $D_L \in \mn{Dis}^-(\Omc)$ while this is not
guaranteed for $D_R$ even when $k>1$.  Since
$\Omc \models \exists r . D_L \sqsubseteq D_R$ and since $D_L$ is from
$\mn{Dis}^-(\Omc)$, Rule~5 yields
$D_R(b) \in \mn{ch}^{\text{sp}}_{\Omc}(\Amc_{C_0})$.
due to Rules~2 and~3, this implies
$\mn{ch}^{\text{sp}}_{\Omc}(\Amc_{C_0}) \vdash D_R(b)$. 
We distinguish two cases.

First assume that $b$ is disjunctive.  Then $D_b$ is defined and
$\mn{ch}^{\text{sp}}_{\Omc}(\Amc_{C_0}) \vdash D_R(b)$ and (P2) yield
$\emptyset \models D_b \sqsubseteq D_R$.  It follows that there is a disjunct $K$ of
$D_R$ with $\emptyset \models E_b \sqsubseteq  K$.
Consequently, $\emptyset \models E_b \sqsubseteq D'_i$ for some~$i$. It follows that
when $\mn{ch}^{\text{sp}}_{\Omc}(\Amc_{C_0})$ was extended for
$b$, then 
$D'_i(b)$ has been added to $\mn{ch}^{\text{sp}}_{\Omc}(\Amc_{C_0})$.  If $D_i,D'_i$
come from Case~(ii), then this is an immediate contradiction.
Otherwise, non-applicability of Rules~2 and~3 yields
$\mn{ch}^{\text{sp}}_{\Omc}(\Amc_{C_0}) \vdash D'_i(b)$, a
contradiction to Case~(i).

Now assume that $b$ is not disjunctive. Since
$D_R(b) \in \mn{ch}^{\text{sp}}_{\Omc}(\Amc_{C_0})$, this implies that
$D_R$ has only a single disjunct $K$. This implies that
$D'_1 = \cdots = D'_k =K$. From
$\mn{ch}^{\text{sp}}_{\Omc}(\Amc_{C_0}) \vdash D_R(b)$ and
$D_R(b) \in \mn{ch}^{\text{sp}}_{\Omc}(\Amc_{C_0})$,
we thus obtain
$\mn{ch}^{\text{sp}}_{\Omc}(\Amc_{C_0}) \vdash D'_1(b)$ and
$D'_1(b) \in \mn{ch}^{\text{sp}}_{\Omc}(\Amc_{C_0})$, a
contradiction
in Case~(i) and~(ii), respectively.

\medskip

We now extend $\mn{ch}^{\text{sp}}_{\Omc}(\Amc_{C_0})$ for $E_a$ as 
described above. We observe the same property (P3) as above, the
proof is identical:
\begin{itemize}

\item[(P3)] if $\Amc^+ \vdash D(a)$ with $D \in \mn{Dis}(\Omc)$, then $\emptyset \models E_{a}
  \sqsubseteq D$.

\end{itemize}
We again show that no new rule applications are 
possible except applications of Rule~3 to original disjunctive
individuals. We only consider those cases explicitly for which the
arguments are not the same as above:
\begin{itemize}

\item Rule~1. Assume that $\Amc^+ \vdash C(c)$ and that $C \sqsubseteq
  D \in \Omc$.  We can use Property~(a) to show that the former
  implies $\mn{ch}^{\text{sp}}_{\Omc}(\Amc_{C_0}) \vdash C(c)$
  whenever $c \neq a$, and thus non-applicability of Rule~1 before the
  extension ensures $\mn{DNF}(D)(c) \in
  \mn{ch}^{\text{sp}}_{\Omc}(\Amc_{C_0}) \subseteq \Amc^+$. Now assume
  that $c=a$. From $\Amc^+ \vdash C(a)$ and (P3), we obtain $\emptyset
  \models E_{a} \sqsubseteq C$.  By definition of $D_{a}$, this
  implies $\emptyset \models E_{a} \sqsubseteq \mn{DNF}(D)$, and
  consequently $\mn{DNF}(a)$ is in $\Amc^+$.


\item Rule~4. Assume that $D_1(c)$ is in $\Amc^+$,
  $D_1 \in \mn{Dis}^-(\Omc)$, that $\Amc^+ \vdash D_2(c)$, and that
  $\Omc \models D_1 \sqcap D_2 \sqsubseteq D_3$.  First assume that
  $c \neq a$. Then
  $D_1(c) \in \mn{ch}^{\text{sp}}_{\Omc}(\Amc_{C_0})$. Moreover, we
  can use Property~(a) to show that $\Amc^+ \vdash D_2(c)$ implies
  $\mn{ch}^{\text{sp}}_{\Omc}(\Amc_{C_0}) \vdash D_2(c)$.  Thus,
  Rule~4 yields
  $D_2(c) \in \mn{ch}^{\text{sp}}_{\Omc}(\Amc_{C_0}) \subseteq
  \Amc^+$. Now assume that $c = a$.  Then
  $\emptyset \models E_{a} \sqsubseteq D_1$: if $D_1(a)$ is fresh in
  $\Amc^+$, then this is clear; otherwise,  otherwise, $D_1(a) \in
  \mn{ch}^{\text{sp}}_{\Omc}(\Amc_{C_0})$, which implies
  $\mn{ch}^{\text{sp}}_{\Omc}(\Amc_{C_0}) \vdash D_1(a)$ since
 $D_1 \in
  \mn{Dis}^-(\Omc)$, and thus
  (P1) yields $\emptyset \models E_{a} \sqsubseteq D_1$. Moreover,
  $\emptyset \models E_{a} \sqsubseteq D_2$ by (P3).  By definition of
  $D_{a}$, $\emptyset \models E_{a} \sqsubseteq D_2$ and thus $D_2(a)$
  is in $\Amc^+$.

\item Rule~5. Assume that $r(b,c), D_1(c) \in \Amc^+$ with
  $D_1(c)$ fresh and $D_1 \in \mn{Dis}^-(\Omc)$. Assume further that
  $\Omc \models \exists r . D_1 \sqsubseteq D_2$. Clearly,
  we must have $c=a$. Since $D_1(a) \in \Amc^+$ and
  $D_1 \in \mn{Dis}^-(\Omc)$, $\Amc^+ \vdash D_1(a)$.
  Thus (P2) yields $\emptyset \models E_a \sqsubseteq D_1$
and from Property~(b) we obtain $D_2(b) \in 
    \mn{ch}^{\text{sp}}_{\Omc}(\Amc_{C_0}) \subseteq \Amc^+$.

\end{itemize}
This finishes the extension of
$\mn{ch}^{\text{sp}}_{\Omc}(\Amc_{C_0})$ at $a$. It is not hard to
verify that (P1) holds for all original disjunctive $a$ for which the
extension has not yet been carried out, (P2) is preserved, and (P3)
holds for all original disjunctive $a$ for which the extension has
already been carried out. In particular, the extension of
$\mn{ch}^{\text{sp}}_{\Omc}(\Amc_{C_0})$ at $a$ does not invalidate
(P3) for original disjunctive $a'$ that have been treated earlier due
to Property~(a). We continue until the extension has taken place for
all original disjunctive $a$ and use \Emc to denote the ABox that is
obtained in the limit. No new rule applications are possible with the
exception of applications of Rule~3 to original disjunctive
individuals.

\smallskip

Recall that we aim to construct a model \Imc of \Omc with an element
$d$ such that
$d \in C_0^\Imc \setminus (\mn{Dis}^\EL_\Omc(C_0))^\Imc$. We are going
to start from $\Emc^-$, that is, $\Emc$ restricted to role assertions
and atomic concept assertions, viewed as an interpretation. The
resulting interpretation \Imc, however, need not be a model of \Omc,
for two reasons. First, new applications of Rule~3 to original
disjunctive individuals $a$ are possible which means that there might
be assertions
$\exists r . C(a) \in \Emc$ such
that $a \notin (\exists r . C)^\Imc$, and this in turn means that some
CIs in \Omc might not be satisfied. And second, we have chosen
disjuncts $E_a$ of the disjunctions $D_a$ at original disjunctive
individuals to ensure that all disjunctions are satisfied at original
individuals, but we have not ensured the same at anonymous
individuals. We thus modify the initial \Imc in two ways, which both
involve grafting additional tree-shaped interpretations that we select
in what follows. We first observe that
\begin{itemize}

\item[($*$)] If
  $C(a) \in \Emc$ with
  $a$ original and disjunctive, then $\emptyset \models E_a \sqsubseteq C$.

\end{itemize}
To see this, first assume that
$\exists r .C(a) \in \mn{ch}^{\text{sp}}_{\Omc}(\Amc_{C_0})$ already
before the extension to \Emc. Then non-applicability of Rules~2 and~3 implies
$\mn{ch}^{\text{sp}}_{\Omc}(\Amc_{C_0}) \vdash C(a)$ and (P3) yields
$\emptyset \models E_a \sqsubseteq C$. Otherwise,
$\emptyset \models E_a \sqsubseteq C$ by construction of \Emc.

\smallskip

By ($*$) and the (semantic!) definition of $D_a$ of which $E_a$ is a
disjunct, we find for each
$\exists r .C(a) \in \Emc$ with $a$
original and disjunctive, a tree model $\Imc_{\exists r . C(a)}$ of
\Omc with root $d$ such that $d \in C^{\Imc_{\exists r . C(a)}}$
and
$d \in F^{\Imc_{\exists r . C(a)}}$
implies $\emptyset \models E_a \sqsubseteq \exists r .F$ for all
$\exists r . F \in \mn{sub}(\Omc)$. 

Let $\Gamma$ denote the set of individuals $a$ in
$\mn{ch}^{\text{sp}}_{\Omc}(\Amc_{C_0})$ that are anonymous and marked
and whose predecessor is anonymous and unmarked.\footnote{We work
  here with the anonymous part, which is identical in
  $\mn{ch}^{\text{sp}}_{\Omc}(\Amc_{C_0})$ and in \Emc.}
  Let $a \in \Gamma$
have been introduced for $C_a$ and let $r(b,a)$ be the unique
assertion of this form in $\mn{ch}^{\text{sp}}_{\Omc}(\Amc_{C_0})$.
Since Rule~5 is not applicable, $\mn{Dis}_\Omc(\exists r . C_a)(b)
\in \mn{ch}^{\text{sp}}_{\Omc}(\Amc_{C_0})$ and since $b$ is anonymous
and not marked, $\mn{Dis}_\Omc(\exists r . C_a)\notin
\mn{Dis}^-(\Omc)$. Furthermore, $\mn{Dis}_\Omc(\exists r . C_a)$
is not empty since we assume that
$\mn{ch}^{\text{sp}}_{\Omc}(\Amc_{C_0})$
contains no assertion of the form $\bot(b)$.
Consequently, we find a tree model $\Imc_a$ of \Omc with
root $a \in C_a^{\Imc_a}$ such that, in the extension $\Jmc$ of
$\Imc_a$ obtained by adding an $r$-predecessor $b$ to the root $a$ of
$\Imc_a$, we have $b \in (\exists r . C)^{\Jmc}$ iff $\Omc \models C_a
\sqsubseteq C$ for all $\exists r . C \in
\mn{sub}(\Omc)$. 
  
  \medskip

  Construct an interpretation $\Imc$ as follows:
  \begin{itemize}

  \item start with $\Emc^-$
    viewed as an 
    interpretation;

  \item for each 
    $\exists r .C(a) \in \Emc$ with 
    $a$ original and disjunctive, disjointly add the interpretation
    $\Imc_{\exists r . C(a)}$ with root $d$ and extend $r^\Imc$ with
    $(a,d)$;

  \item for each $a \in \Gamma$, replace the subtree rooted at $a$
    with $\Imc_a$.

  \end{itemize}
  We next observe the following.
  \\[2mm]
  {\bf Claim 1.} Let $a \in \Delta^\Imc$ be an individual of
  $\mn{ch}^{\text{sp}}_{\Omc}(\Amc_{C_0})$ and let
  $C \in \mn{sub}(\Omc)$ be an \EL concept. Then $a \in C^\Imc$ implies
  \begin{enumerate}

  \item 
 $\emptyset \models E_a \sqsubseteq C$
    if $a$ is original and disjunctive, and

  \item $\mn{ch}^{\text{sp}}_{\Omc}(\Amc_{C_0}) \vdash C(a)$ otherwise.

    \end{enumerate}
    The proof is by induction on the structure of $C$. In the
    induction start, $C=A$ is a concept name. Since $a \in A^\Imc$, we
    must have $A(a) \in \Emc$ and thus $\Emc \vdash A(a)$.  If $a$ is
    original and disjunctive, then (P3) yields
    $\emptyset \models E_a \sqsubseteq C$ as required. If this is
    not the case, then $A(a) \in \Emc$ implies $A(a) \in
    \mn{ch}^{\text{sp}}_{\Omc}(\Amc_{C_0})$, thus
    $\mn{ch}^{\text{sp}}_{\Omc}(\Amc_{C_0}) \vdash A(a)$ as required.

The case $C=C_1 \sqcap C_2$ is
straightforward using the semantics and induction hypothesis. Details
are left to the reader.

It thus remains to deal with the case $C= \exists r . C_1$. Then
$a \in C^\Imc$ implies that there is a $d \in C_1^\Imc$ with
$(a,d) \in r^\Imc$. We distinguish several cases. First assume that
$d$ is an individual from $\mn{ch}^{\text{sp}}_{\Omc}(\Amc_{C_0})$
that is not in $\Gamma$.  We have the following subcase:
  \begin{enumerate}

  \item $a$ and $d$ are original and disjunctive.

    The induction hypothesis yields $\emptyset \models E_d
    \sqsubseteq C_1$. Thus Condition~(a) from the extension
    step ensures that $\mn{ch}^{\text{sp}}_{\Omc}(\Amc_{C_0}) \vdash
    \exists r . C_1(a)$. Property (P1) yields $\emptyset \models E_a
    \sqsubseteq \exists r . C_1$, as required.

  \item $a$ is original and disjunctive and $d$ is not.

    The induction hypothesis yields $\mn{ch}^{\text{sp}}_{\Omc}(\Amc_{C_0}) \vdash
    C_1(d)$ and the
    construction of \Imc yields $r(a,d) \in
    \mn{ch}^{\text{sp}}_{\Omc}(\Amc_{C_0})$,
    thus $\mn{ch}^{\text{sp}}_{\Omc}(\Amc_{C_0}) \vdash \exists
    r.C_1(a)$. Property (P1) yields $\emptyset \models E_a
    \sqsubseteq \exists r . C_1$, as required.
    
  \item $d$ is original and disjunctive and $a$ is not.

    The induction hypothesis yields $\emptyset \models E_d
    \sqsubseteq C_1$. Thus Condition~(a) from the extension
    step ensures that $\mn{ch}^{\text{sp}}_{\Omc}(\Amc_{C_0}) \vdash
    \exists r . C_1(a)$, as required.

  \item neither $a$ nor $d$ are original and disjunctive.
    
    The induction hypothesis yields $\mn{ch}^{\text{sp}}_{\Omc}(\Amc_{C_0}) \vdash
    C_1(d)$ and the
    construction of \Imc yields $r(a,d) \in
    \mn{ch}^{\text{sp}}_{\Omc}(\Amc_{C_0})$,
    thus $\mn{ch}^{\text{sp}}_{\Omc}(\Amc_{C_0}) \vdash \exists
    r.C_1(a)$, as required. 

  \end{enumerate}
  The following cases remain:

  \begin{enumerate}

\setcounter{enumi}{4}
    
  \item $a$ is original and disjunctive and there is an
    $\exists r . E(a) \in \mn{ch}^{\text{sp}}_{\Omc}(\Amc_{C_0})$
    such that $d$ is the root of $\Imc_{\exists r .E(a)}$.

    By choice of $\Imc_{\exists r .E(a)}$, this implies
    $\emptyset \models E_a \sqsubseteq \exists r . C_1$, as required.
    
  \item \emph{$d \in \Gamma$} and thus the root of $\Imc_d$.

    Then $d \in C_1^{\Imc_d}$. By choice of $\Imc_d$ and since
    $\exists r . C_1 \in \mn{sub}(\Omc)$, we have
    $\Omc \models C_d \sqsubseteq C_1$ where $C_d$ is the concept that
    $d$ was introduced for. We moreover have
    $C_d(d) \in \mn{ch}^{\text{sp}}_{\Omc}(\Amc_{C_0})$ and by
    non-applicability of Rules~2 and~3 also
    $\mn{ch}^{\text{sp}}_{\Omc}(\Amc_{C_0}) \vdash C_d(d)$. Since $d$
    is in $\Gamma$, it is marked. Thus
    $D(d) \in \mn{ch}^{\text{sp}}_{\Omc}(\Amc_{C_0})$ for some
    $D \in \mn{Dis}^-(\Omc)$. Clearly,
    $\Omc \models D \sqcap C_d \sqsubseteq C_1$. We can thus
    invoke Rule~4 with
    $D_1 =D$, $D_2=C_d$, and $D_3=C_1$ to yield
    $C_1(d) \in \mn{ch}^{\text{sp}}_{\Omc}(\Amc_{C_0})$. Rules~2 and~3
    thus ensure that
    $\mn{ch}^{\text{sp}}_{\Omc}(\Amc_{C_0}) \vdash C_1(d)$.  From
    $(a,d) \in r^\Imc$, we obtain
    $r(a,d) \in \mn{ch}^{\text{sp}}_{\Omc}(\Amc_{C_0})$, and thus
    $\mn{ch}^{\text{sp}}_{\Omc}(\Amc_{C_0}) \vdash \exists r.C_1(a)$
    as required (since $a$ cannot be original).

  \end{enumerate}
  This finishes the proof of the claim.

  \medskip
  
    \noindent
    {\bf Claim 2.}  Let $a \in \Delta^\Imc$ be an individual of
  $\mn{ch}^{\text{sp}}_{\Omc}(\Amc_{C_0})$ and let $C \in
  \mn{sub}(\Omc)$. Then $\mn{ch}^{\text{sp}}_{\Omc}(\Amc_{C_0}) \vdash C(a)$
 implies
  $a \in C^\Imc$.
  \\[2mm]
  The proof is by induction on the structure of $C$. The case that $C$
  is a concept name is clear by construction of \Imc. The case that
  $C=C_1 \sqcap C_2$ and
  $C=\exists r . C_1$ are straightforward using
  the fact that Rules~2 and~3 are not applicable and the induction
  hypothesis. It remains to deal with the case $C= C_1 \sqcup
  C_2$. Then
  $C_1 \sqcup C_2(a) \in \mn{ch}^{\text{sp}}_{\Omc}(\Amc_{C_0})$ and
  thus $a$ is disjunctive and $C_1 \sqcup C_2$ is a conjunct of every
  disjunct of $D_a$, including the disjunct $E_a$ chosen for $a$
  during the extension of $\mn{ch}^{\text{sp}}_{\Omc}(\Amc_{C_0})$.
  By definition of $D_a$, it follows that some $C_i$, $i \in \{1,2\}$,
  is also a conjunct of $E_a$. Thus $C_i(a)$ has been added in the
  extension of  $\mn{ch}^{\text{sp}}_{\Omc}(\Amc_{C_0})$ and it
  remains
  to apply the induction hypothesis. This finishes the proof of Claim~4

  \medskip

  Note that \Imc is a tree interpretation with root $a_0$. By
  construction of \Imc, it is clear that $a_0 \in C_0^\Imc$. We next
  show that \Imc is a model of \Omc.

\smallskip 

Let $C \sqsubseteq D \in \Omc$ and let $d \in C^\Imc$. If $d$ is in
the domain of some interpretation $\Imc_{\exists r . C(a)}$ or
$\Imc_a$, then it follows from the construction of \Imc and the fact that
all interpretations $\Imc_{\exists r . C(a)}$ and $\Imc_a$ are models
of \Omc that $d \in D^\Imc$. Thus let $a$ be an individual of
$\mn{ch}^{\text{sp}}_{\Omc}(\Amc_{C_0})$.   We distinguish two cases.

First assume that $a$ is original and disjunctive.  Then Point~1 of Claim~1 yields
$\emptyset \models E_{a} \sqsubseteq C$ and as a consequence
we have $\emptyset \models E_{a} \sqsubseteq D$ which implies
$F(a) \in \Emc$ for all top-level conjuncts $F$ of $D$. If $F$ is a
concept name, then this yields $a \in F^\Imc$ by construction of
\Imc. If $F$ takes the form $\exists r . G$, then the addition of
$\Imc_{\exists r . G}$ ensures that $a \in (\exists r . G)^\Imc$.
As a consequence, $a \in D^\Imc$.

Now assume that $a$ is not original or not disjunctive.  Then Point~2
of Claim~1 yields $\mn{ch}^{\text{sp}}_{\Omc}(\Amc_{C_0}) \vdash
C(a)$. Non-applicability of Rule~1 of the chase yields $\mn{DNF}(D)(a)
\in \mn{ch}^{\text{sp}}_{\Omc}(\Amc_{C_0})$ and non-applicability of
Rules~2 and~3 yields $\mn{ch}^{\text{sp}}_{\Omc}(\Amc_{C_0}) \vdash
\mn{DNF}(D)(a)$. By Claim~2, $a \in D^\Imc$.

  \smallskip 

  It remains to show that $a_0 \notin \mn{Dis}^\EL_\Omc(C_0)^\Imc$.
  Assume to the contrary that $a_0 \in \mn{Dis}^\EL_\Omc(C_0)^\Imc$.
  Then there is a disjunct $K$ of $\mn{Dis}^\EL_\Omc(C_0)^\Imc$ such
  that $a_0 \in C^\Imc$ for every conjunct $C$ of~$K$.  We distinguish two cases.

  First assume that $a_0$ is disjunctive.  By Point~1 of Claim~1,
  $\emptyset \models E_{a_0} \sqsubseteq C$ for all conjuncts~$C$
  of~$K$.  Thus
  $\emptyset \models E_{a_0} \sqsubseteq \mn{Dis}^\EL_\Omc(C_0)$, in
  contradiction to our choice of $E_{a_0}$.
  
  Now assume that $a_0$ is not disjunctive. By Point~2 of Claim~1,
  $\mn{ch}^{\text{sp}}_{\Omc}(\Amc_{C_0}) \vdash C(a_0)$ for all
  disjuncts $C$ of $K$. By Lemma~\ref{lem:chsound},
  $\Omc \models C_0 \sqsubseteq C$ for all such $C$.
  Since \Imc is a model of \Omc, this implies that
  $\mn{Dis}^\EL_\Omc(C_0)$ has only the disjunct $K$.
  We have thus shown that $\mn{ch}^{\text{sp}}_{\Omc}(\Amc_{C_0}) \vdash
  \mn{Dis}^\EL_\Omc(C_0)$, a contraction.
\end{proof}

\section{Proof of Theorem~\ref{thm:smallercomplete}}

Let $\mn{1Dis}(\Omc_S)$ denote the set of disjunctions of concepts
from $\mn{sub}^-(\Omc_S)$, without repetition, of which there are
clearly only single exponentially many. For each $D \in
\mn{Dis}(\Omc_S)$, let $K_D$ be the set of disjunctions $D' \in
\mn{1Dis}(\Omc_S)$ such that $D'$ is a conjunct of the result of
converting $D$ viewed as a DNF formula (in which all concept names and
concepts $\exists r . E$ serve as propositional variables) into
KNF. We then have $\emptyset \models D \equiv \bigsqcap K_D$.  For
$\ell \in \mathbb{N} \cup \{ \omega \}$, let the \EL ontology
$\widehat\Omc^\ell_T$ be defined as $\Omc^\ell_T$ in
Figure~\ref{fig:second}, but with every occurrence of a concept name
$X_D$ replaced by $\bigsqcap_{C \in K_D} Y_C$.

\smallskip

Theorem~\ref{thm:smallercomplete} is a consequence of
the following.
\begin{lemma}
\label{lem:singleexp}
Let $\ell \in \mathbb{N} \cup \{ \omega \}$. For all \EL concepts $C_0,D_0$ over $\mn{sig}(\Omc_S)$, $\Omc^\ell_T
\models C \sqsubseteq D$ iff $\widehat \Omc^\ell_T \models C \sqsubseteq D$.
\end{lemma}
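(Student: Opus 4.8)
The plan is to pass to canonical models and compare their $\Sigma$-reducts, where $\Sigma=\mn{sig}(\Omc_S)$. Write $\sigma$ for the substitution that replaces each concept name $X_D$ by $\bigsqcap_{C\in K_D}Y_C$; by construction $\widehat\Omc^\ell_T=\sigma(\Omc^\ell_T)$. Both $\Omc^\ell_T$ and $\widehat\Omc^\ell_T$ are \EL ontologies, so Lemma~\ref{lem:chasecorr} gives $\Omc^\ell_T\models C_0\sqsubseteq D_0$ iff $a_0\in D_0^{U}$ and $\widehat\Omc^\ell_T\models C_0\sqsubseteq D_0$ iff $a_0\in D_0^{\widehat U}$, where $U=\mn{ch}_{\Omc^\ell_T}(\Amc_{C_0})$ and $\widehat U=\mn{ch}_{\widehat\Omc^\ell_T}(\Amc_{C_0})$ are viewed as interpretations. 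Since $C_0,D_0$ are over $\Sigma$ and membership of $a_0$ in an \EL concept over $\Sigma$ is witnessed by a homomorphism from the tree $\Amc_{D_0}$ into the $\Sigma$-reduct, it suffices to show that $U|_\Sigma$ and $\widehat U|_\Sigma$ are homomorphically equivalent via maps fixing $a_0$; in fact I would argue they are isomorphic.

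One homomorphism comes for free. Given any model $\widehat\Imc$ of $\widehat\Omc^\ell_T$, define $\Imc$ by keeping the $\Sigma$- and $Y_C$-extensions and setting $X_D^\Imc:=(\bigsqcap_{C\in K_D}Y_C)^{\widehat\Imc}$. A straightforward induction on concepts shows $E^\Imc=\sigma(E)^{\widehat\Imc}$ for every concept $E$ of $\Omc^\ell_T$, whence $\Imc\models\Omc^\ell_T$ and $\Imc,\widehat\Imc$ share their $\Sigma$-reduct. Applying this to $\widehat U$ turns it into a model of $\Omc^\ell_T$, into which the canonical $U$ maps homomorphically; restricting to $\Sigma$ yields $h\colon U|_\Sigma\to\widehat U|_\Sigma$ with $h(a_0)=a_0$, so $\Omc^\ell_T\models C_0\sqsubseteq D_0$ implies $\widehat\Omc^\ell_T\models C_0\sqsubseteq D_0$.

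The converse is the substantial part. Note first that no single re-interpretation of $U$ is a model of $\widehat\Omc^\ell_T$: the choice $Y_C^\Nmc:=\bigcap_{D:\,C\in K_D}X_D^{U}$ satisfies all rule bodies but may violate the heads, while the dual union choice fails symmetrically. Instead I would run the two chases jointly from $\Amc_{C_0}$ and maintain, along a fair run, a $\Sigma$- and role-preserving correspondence $a\leftrightarrow\hat a$ between individuals together with the marker invariant $a\in X_D^{U}\Leftrightarrow K_D\subseteq\{C:\hat a\in Y_C^{\widehat U}\}$. The forward implication is immediate, since every rule adding the marker $X_D$ at $a$ has as its $\sigma$-image a rule adding exactly the bundle $\{Y_C:C\in K_D\}$ at $\hat a$. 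Granting the invariant, a CI of $\Omc^\ell_T$ fires at $a$ iff its $\sigma$-image fires at $\hat a$ (a body demands $X_D$ precisely when its image demands the bundle $K_D$), and the heads generate isomorphic $\Sigma$-role structure because the only existential heads, arising from Line~1 and Line~4 of Figure~\ref{fig:second}, produce $\Sigma$-successors untouched by $\sigma$. In the limit this yields $U|_\Sigma\cong\widehat U|_\Sigma$, hence the reverse homomorphism and the lemma.

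The main obstacle is the backward implication of the marker invariant, i.e.\ that an accidentally completed bundle in $\widehat U$ forces $X_D$ to appear at the corresponding node of $U$. Here I would exploit positivity of \ELU together with the given equivalence $\emptyset\models D\equiv\bigsqcap_{C\in K_D}C$: writing $M(a)=\{D':a\in X_{D'}^U\}$, the condition $K_D\subseteq\bigcup_{D'\in M(a)}K_{D'}$ gives $\bigsqcap_{D'\in M(a)}D'\models D$, so by the tautological instances of Line~2 of Figure~\ref{fig:second} (available because every such positive validity is $\Omc_S$-entailed) and completeness of the \EL chase, the marker $X_D$ is eventually derived at $a$. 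Precisely this mismatch in firing order—$\widehat U$ may realize a bundle before $U$ has derived the matching marker—is why the argument must be run at the level of the limit canonical models rather than as a strict stepwise lockstep, and keeping the simultaneous induction consistent for markers nested under existential restrictions is the most delicate bookkeeping.
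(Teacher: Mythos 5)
Your proof is correct in substance but takes a genuinely different route from the paper. The paper argues purely model-theoretically: for every model of one ontology it constructs a model of the other with an \emph{identical} $\mn{sig}(\Omc_S)$-reduct. The easy direction coincides with yours (set $X_D^{\Imc} := (\bigsqcap_{D' \in K_D} Y_{D'})^{\widehat\Imc}$); for the hard direction, rather than running a joint chase, the paper exhibits an explicit re-interpretation
$Y_D^{\widehat\Imc} := \big(\mn{Dis}_{\Omc_S}(D)^\uparrow \sqcap \bigsqcup_{D' \in \mn{Dis}^-(\Omc_S),\ D \in K_{D'}} X_{D'}\big)^{\Imc}$
and verifies $\bigsqcap_{D'\in K_D} Y_{D'}^{\widehat\Imc} = X_D^{\Imc}$ by chaining Line-2 instances of Figure~\ref{fig:second} through the semantic disjunctions $\mn{Dis}_{\Omc_S}(D''_{j-1}\sqcap D_j)$. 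This directly refutes your opening remark that \emph{no} single re-interpretation of the canonical model can be turned into a model of $\widehat\Omc^\ell_T$: your two candidates (intersection and union) do fail, but the paper's mixed choice works on arbitrary models of $\Omc^\ell_T$, in particular on the chase itself. That remark is not load-bearing, however, and your joint-chase argument stands: its crux --- that an accidentally completed bundle $K_D \subseteq \bigcup_{D' \in M(a)} K_{D'}$ yields $\emptyset \models \bigsqcap_{D'\in M(a)} D' \sqsubseteq D$ via $D \equiv \bigsqcap K_D$, whence $X_D$ is derivable at $a$ by a chain of Line-2 instances --- is precisely the algebraic kernel of the paper's `$\subseteq$' computation, transplanted from an arbitrary model into the chase. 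Two small inaccuracies: Line-1 heads $\mn{DNF}(E)^\uparrow$ can contain markers nested under existential restrictions (when a right-hand side has a disjunction inside an existential), so these heads are \emph{not} untouched by $\sigma$ --- only their $\Sigma$-part is, and the invariant must be propagated to the freshly created nodes, as your closing caveat anticipates; and the claimed isomorphism of the $\Sigma$-reducts is more than you need or than the `eventually derived' catch-up argument establishes --- homomorphic equivalence fixing $a_0$ suffices for \EL subsumption and is what the delayed simulation actually delivers. What the paper's route buys is a stronger conclusion (identical $\Sigma$-reduct model classes, hence agreement on $\Sigma$-consequences beyond \EL CIs) with none of the chase bookkeeping; what your route buys is reuse of the chase machinery already developed in the appendix and freedom from having to guess the explicit interpretation of the $Y_D$.
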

\noindent
\begin{proof}\ It suffices to show that for every model $\Imc$ of
  $\Omc^\ell_T$ there is a model $\widehat \Imc$ of $\widehat
  \Omc^\ell_T$ such that the restrictions of \Imc and $\widehat \Imc$
  to the symbols of $\mn{sig}(\Omc_S)$ are identical and vice versa.

  We start with the easier direction. Thus let $\widehat \Imc$ be a
  model of~$\widehat \Omc^\ell_T$. Let \Imc be defined like $\widehat
  \Imc$ except that $X_D^\Imc=\bigsqcap_{D' \in K_D} Y_{D'}^{\widehat
    \Imc}$.  Observe that the concept names $X_D$ are not in
  $\mn{sig}(\Omc_S)$ and thus the restrictions of \Imc and $\widehat
  \Imc$ to the symbols of $\mn{sig}(\Omc_S)$ are identical, as
  required. It is not straightforward to verify that \Imc satisfies
  every CI $C_1 \sqsubseteq C_2 \in \Omc^\ell_T$ given that $\widehat
  \Omc^\ell_T$ contains a CI $C'_1 \sqsubseteq C'_2$ such that $C'_i$
  can be obtained from $C'$ by replacing each $X_D$ with
  $\bigsqcap_{D' \in K_D} Y_{D'}$ and that $\widehat \Imc$ satisfies
  $C'_1 \sqsubseteq C'_2$.

  For the converse direction, let \Imc be a model of $\Omc^\ell_T$. We
  cannot define a corresponding $\widehat \Imc$ of $\widehat
  \Omc^\ell_T$ by setting $\bigsqcap_{D' \in K_D} Y_{D'}^{\widehat
    \Imc}=X_D^\Imc$ because we need to interpret individual concept
  names $Y_{D'}$ rather than conjunctions thereof. To achieve this, we
  resort to semantic disjunctions $\mn{Dis}_{\Omc_S}(D)$. In fact, we
    define $\widehat \Imc$ to be like \Imc except that 
    $$
    Y_D^{\widehat
      \Imc}=\big ({\mn{Dis}_{\Omc_S}(D)}^\uparrow
    \sqcap
\big (
\bigsqcup_{D' \in \mn{Dis}^-(\Omc_S) \mid D \in K_{D'}} X_{D'}
\big)\big)^\Imc
    $$
    for every $D \in \mn{1Dis}(\Omc_S)$. It remains to show that
    $\widehat \Imc$ is a model of of $\widehat \Omc^\ell_T$. We can
    argue exactly as in the converse direction if we know that
    $\bigsqcap_{D' \in K_D} Y_{D'}^{\widehat \Imc}=X_D^\Imc$ for all
    $D \in \mn{Dis}^-(\Omc_S)$. By definition of $\widehat \Imc$, this
    amounts to showing that 
    $$\big ( \!\!\!\! \bigsqcap_{D' \in K_D}
    \mn{Dis}_{\Omc_S}(D')^\uparrow \sqcap
\big (
\bigsqcup_{D'' \in \mn{Dis}^-(\Omc_S) \mid D' \in K_{D''}} \!\!\! X_{D''}
\big) \big)^\Imc = X_D^\Imc.
    $$
    For the `$\supseteq$' direction, we note that $\Omc_S \models D
    \sqsubseteq \mn{Dis}_{\Omc_S}(D')^\uparrow$ for every $D' \in K_D$
    since $\emptyset \models D \sqsubseteq D'$ and due to the
    definition of $\mn{Dis}_{\Omc_S}(D')$. Thus $\Omc^\ell_T$ contains
    the CI $X_D \sqcap X_D \sqsubseteq
    {\mn{Dis}_{\Omc_S}(D')}^\uparrow$ and consequently $\Omc^\ell_T
    \models X_D \sqsubseteq \mn{Dis}_{\Omc_S}(D') \sqcap X_D$ for
    every $D' \in K_D$. It suffices to recall that $\Imc$ is a model
    of~$\Omc^\ell_T$.

    For the `$\subseteq$' direction, 
    assume that
    $$
    d \in (\bigsqcap_{D' \in K_D}
    \mn{Dis}_{\Omc_S}(D')^\uparrow \sqcap X_{D''_0})^\Imc
    $$
    for some $D''_0 \in \mn{Dis}^-(\Omc_S)$. Let $K_D=\{D_1,\dots,D_k
    \}$ and let $D'_1=\mn{Dis}_{\Omc_S}(D''_0 \sqcap D_1)$.

    We first argue that $d \in ({D'_1}^\uparrow)^\Imc$. This follows
    from $\Omc_S \models D''_0 \sqcap D_1 \sqsubseteq D'_1$ and the fact
    that, thus, $\Omc^\ell_T$ contains the CI $X_{D''_0} \sqcap
    D_1^\uparrow \sqsubseteq {D'_1}^\uparrow$ and since $\Imc$ is a model of
    $\Omc^\ell_T$. 

    In the very special case that $D'_1$ consists of a single disjunct
    that contains all concepts from $\mn{sub}^-(\Omc_S)$ as conjuncts,
    we actually have $\Omc_S \models D''_0 \sqcap D_1 \sqsubseteq D$,
    and thus we can argue as above that $d \in (D^\uparrow)^\Imc$ and
    are done since $D^\uparrow= X_D$ given that $D \in
    \mn{Dis}^-(\Omc_S)$. 

    Otherwise, we can find a $D''_1 \in \mn{Dis}(\Omc_S)$ with at
    least two disjuncts such that $\emptyset \models D'_1 \equiv
    D''_1$. Let $D'_2=\mn{Dis}_{\Omc_S}(D''_1 \sqcap D_2)$. As in the
    case of $D''_1$, we can show that $d \in ({D'_2}^\uparrow)^\Imc$.
    We can repeat this until we have shown that $d \in
    ({D'_k}^\uparrow)^\Imc$, $D'_k=\mn{Dis}_{\Omc_S}(D''_{k-1} \sqcap
    D_k)$. Again, we are done if $D'_k$ consists of a single disjunct
    that contains all concepts from $\mn{sub}^-(\Omc_S)$ as conjuncts.
    Otherwise, we can find a a $D''_k \in \mn{Dis}(\Omc_S)$ with at
    least two disjuncts such that $\emptyset \models D'_k \equiv
    D''_1$.

    By construction of $D'_k$ and choice of $D''_k$, we must have
    $\Omc_S \models D''_k \sqsubseteq D$. Thus $\Omc^\ell_T$ contains
    the CI $X_{D''_k} \sqcap {D''_k}^\uparrow \sqsubseteq D^\uparrow$.
    It follows that $d \in X_D^\Imc$.
\end{proof}

\section{Proof of Theorem~\ref{thm:acyclicbetter}}

Theorem~\ref{thm:corr} follows from the following lemma. 
\begin{lemma}
\label{lem:acyclic}
Let \Omc be an acyclic $\ELU$ ontology and let
$$
C=A_1 \sqcap \cdots \sqcap A_n \sqcap \exists r_1 . E_1 \sqcap \cdots
\sqcap \exists r_m . E_m, \quad D=\exists r.E
$$
be \EL concepts such that $\Omc\models C \sqsubseteq D$ and 
there does not exist any $i \leq m$ with $r_{i}=r$ and $\Omc \models E_{i}\sqsubseteq E$.
Then there exists $i\leq n$ with $\Omc \models A_{i} \sqsubseteq D$.
\end{lemma}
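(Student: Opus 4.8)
The plan is to prove the statement by contradiction via an explicit model construction. So assume that $\Omc \models C \sqsubseteq D$ and that no top-level existential conjunct is responsible (as in the hypothesis), and suppose towards a contradiction that $\Omc \not\models A_i \sqsubseteq D$ for every $i \leq n$, where $D = \exists r . E$. From these assumptions I will build a single model $\Imc$ of $\Omc$ with a root $d$ such that $d \in C^\Imc$ but $d \notin D^\Imc$, contradicting $\Omc \models C \sqsubseteq D$.

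The model $\Imc$ is assembled from small pieces. For each concept-name conjunct $A_i$, the assumption $\Omc \not\models A_i \sqsubseteq \exists r . E$ yields a model of $\Omc$ with a point $d_i \in A_i$ and $d_i \notin (\exists r . E)$, which I may take to be ditree-shaped with root $d_i$ (unravelling if necessary). For each existential conjunct $\exists r_j . E_j$ I pick a ditree-shaped model of $\Omc$ with root $s_j \in E_j$, where in the critical case $r_j = r$ the hypothesis $\Omc \not\models E_j \sqsubseteq E$ additionally lets me require $s_j \notin E$. I then form $\Imc$ by taking the disjoint union of all these trees, identifying the roots $d_1,\dots,d_n$ into a single node $d$, and adding an edge $r_j(d,s_j)$ for each existential conjunct. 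The node $d$ is placed in every concept name that some $d_i$ satisfied, and finally I close the concept-name labelling of $d$ under the equivalence-induced inclusions: I repeatedly add $A$ to $d$ whenever $\Omc$ contains $A \equiv C'$ and $d \in C'^\Imc$ already holds. This process terminates since there are only finitely many concept names.

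By construction $d \in A_i^\Imc$ for all $i \leq n$ and $d \in (\exists r_j . E_j)^\Imc$ for all $j \leq m$, so $d \in C^\Imc$. The two delicate points are that $\Imc$ is a model of $\Omc$ and that $d \notin (\exists r . E)^\Imc$, and both hinge on the claim that the closure never forces a fresh successor of $d$. This is exactly where acyclicity is used: in an acyclic ontology every inclusion is either $A \sqsubseteq C'$ with a concept name on the left, or $C' \sqsubseteq A$ arising from an equivalence $A \equiv C'$ and thus carrying a concept name on the right. Inclusions of the first kind hold at $d$ purely by monotonicity of $\EL$/$\ELU$ concepts, since the relevant $A$ was already true at some $d_i$ with $d_i \in C'^{\Imc_i}$ and such concepts are preserved when only concept memberships and successors are added. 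Inclusions of the second kind are discharged by the closure; and when the closure adds $A$ because $d \in C'^\Imc$, the unique matching forward inclusion $A \sqsubseteq C'$ is automatically satisfied, as $d \in C'^\Imc$ is precisely the trigger. Hence adding names never produces an unmet existential demand, and no new successor of $d$ is ever introduced.

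The remaining bookkeeping is then routine: the $r$-successors of $d$ are exactly the inherited $r$-successors of the various $d_i$, none of which satisfies $E$ since $d_i \notin (\exists r . E)$ in its tree and subtrees remain untouched, together with the $s_j$ for which $r_j = r$, which satisfy $\neg E$ by choice; thus $d \notin (\exists r . E)^\Imc$, while interior nodes satisfy $\Omc$ unchanged. I expect the main obstacle to be precisely the cascade analysis of the closure step: one must convincingly rule out that combining the labels of several $d_i$ at $d$ triggers equivalence inclusions that in turn demand a new $r$-successor witnessing $E$. The crucial ingredient there is the acyclic shape of $\Omc$, which guarantees that compound left-hand sides carry only concept names on the right, so that the closure can propagate names upward without ever generating existential obligations.
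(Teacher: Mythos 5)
Your proposal is correct and is essentially the paper's own proof: both argue by contradiction, glue together tree models witnessing $\Omc \not\models A_i \sqsubseteq D$ at a common root together with $r_j$-successors witnessing the $E_j$, close the root's concept-name labelling under the equivalences $A \equiv C'$, and use the syntactic shape of acyclic ontologies (atomic, unique left-hand sides) to argue that this closure never creates an existential obligation, so the root satisfies $C$ but not $\exists r . E$. If anything, you are slightly more careful than the paper, which demands $b_j \notin E$ for \emph{all} existential conjuncts rather than, as the hypothesis actually licenses and as you correctly do, only for those with $r_j = r$.
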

\begin{proof} \
Assume the lemma does not hold. Take tree shaped interpretations $\Imc_{i}$, $1\leq i \leq n$, with root $a_{i}$
and $\Jmc_{i}$, $1\leq i\leq m$, with root $b_{i}$ such that all $\Imc_{i},\Jmc_{i}$ are models of $\Omc$
and 
\begin{itemize}
\item $a_{i}\in A_{i}^{\Imc_{i}}$ and $a_{i}\not\in D^{\Imc_{i}}$, for all $1\leq i \leq n$;
\item $b_{i} \in E_{i}^{\Jmc_{i}}$ and $b_{i}\not\in E^{\Imc_{i}}$, for all $1\leq i \leq m$.
\end{itemize}
Construct a model $\Imc$ by taking the disjoint union of all $\Imc_{i},\Jmc_{i}$ and
then identifying all $a_{i}$, $1\leq i \leq m$, to a single node $a$
and adding $(a,b_{i})$ to the interpretation of $r_{i}$.
Next define $\Imc'$ by adding in $\Imc$, recursively, $a$ to the interpretation of a concept name $A$
if there exists $C$ such that $A\equiv C \in \Omc$ and $a\in C^{\Imc}$. We claim that $\Imc'$ 
is a model of $\Omc$ and $a\not\in D^{\Imc}$. The latter holds by definition. For the former,
consider some $A'\equiv C'\in \Omc'$ for which $a$ has not been added to the interpretation of $A'$
in the step above (the remaining CIs are trivially true in $\Imc'$). Then it only remains to
check that $a\in A'^{\Imc}$ implies $a\in C'^{\Imc}$, but this follows by construction again.
\end{proof}

\section{Proof of Theorems~\ref{thm:corrbot} and~\ref{thm:corrminus}}

The \EL chase introduced in Appendix~\ref{app:prelims} can be extended
to $\EL_\bot$ in a straightforward way. Recall that we assume $\bot$ to
occur only in CIs of the form $C \sqsubseteq \bot$. The $\EL_\bot$
chase is defined exactly as the \EL chase. In particular, it also
treats CIs of the form $C \sqsubseteq \bot$, adding $\bot(a)$ to an
ABox \Amc when $\Amc \models C(a)$, and thus producing $\EL_\bot$
extended ABoxes. We write $\mn{ch}_\Omc(\Amc_C) \models \bot$ if
there is some $a$ with $\bot(a) \in \mn{ch}_\Omc(\Amc_C)$. The
correctness of the chase now reads as follows.
\begin{lemma}
\label{lem:chasebotcorr}
Let \Omc be an $\EL_\bot$ ontology and let $C,D$ be
$\EL_\bot$ concepts.  Then $\Omc \models C \sqsubseteq D$ iff
$\mn{ch}_\Omc(\Amc_C) \models D(a_0)$ or
$\mn{ch}_\Omc(\Amc_C) \models \bot$.
\end{lemma}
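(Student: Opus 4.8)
The plan is to adapt the standard argument behind Lemma~\ref{lem:chasecorr}, the only genuinely new ingredient being a careful accounting of the $\bot$-rule together with fairness. I would first dispose of degenerate syntactic cases: if $C$ or $D$ contains $\bot$, then, since $\bot$ propagates upward through $\sqcap$ and $\exists r$, the offending concept is equivalent to $\bot$. If $C \equiv \bot$, then $C$ is unsatisfiable, so $\Omc \models C \sqsubseteq D$ holds, and $\Amc_C$ already contains a $\bot$-assertion, so $\mn{ch}_\Omc(\Amc_C) \models \bot$; both sides of the claimed equivalence are true. If $D \equiv \bot$ (and $C$ is $\bot$-free), then, reading $\mn{ch}_\Omc(\Amc_C) \models D(a_0)$ as $a_0 \in D^\Imc$ with $\Imc = \mn{ch}_\Omc(\Amc_C)$ viewed as an interpretation and $\bot^\Imc = \emptyset$, both ``$\Omc \models C \sqsubseteq D$'' and the right-hand disjunct reduce to unsatisfiability of $C$, which is handled below. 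So I may assume $C$ and $D$ are $\bot$-free.

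For the soundness direction ``$\Leftarrow$'', I would use the same step-by-step homomorphism construction as for the $\EL$ chase: for every model $\Imc$ of $\Omc$ and every $d \in C^\Imc$ there is a homomorphism $h$ from $\mn{ch}_\Omc(\Amc_C)^-$ into $\Imc$ with $h(a_0)=d$ preserving role assertions and concept-name assertions. If $\mn{ch}_\Omc(\Amc_C) \models \bot$, then $\bot(a)$ was added by firing some CI $C' \sqsubseteq \bot \in \Omc$ at an $a$ with $\mn{ch}_\Omc(\Amc_C) \models C'(a)$, whence $h(a) \in C'^\Imc \subseteq \bot^\Imc = \emptyset$, a contradiction; thus no such $\Imc, d$ exist, i.e.\ $\Omc \models C \sqsubseteq \bot$ and a fortiori $\Omc \models C \sqsubseteq D$. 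If instead $\mn{ch}_\Omc(\Amc_C) \models D(a_0)$, then mapping the ditree witnessing $D(a_0)$ through $h$ gives $d = h(a_0) \in D^\Imc$; as $\Imc, d$ were arbitrary, $\Omc \models C \sqsubseteq D$.

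For the completeness direction ``$\Rightarrow$'', assume $\Omc \models C \sqsubseteq D$ and, towards a contradiction, that $\mn{ch}_\Omc(\Amc_C) \not\models \bot$ and $\mn{ch}_\Omc(\Amc_C) \not\models D(a_0)$. Let $\Imc$ be $\mn{ch}_\Omc(\Amc_C)$ viewed as an interpretation; it carries no $\bot$-assertions. I claim $\Imc$ is a model of $\Omc$. Ordinary CIs are satisfied because the chase fires them exhaustively, exactly as in the $\EL$ case. Each CI $C' \sqsubseteq \bot \in \Omc$ is satisfied vacuously: by fairness, were $\Imc \models C'(a)$ for some $a$, the $\bot$-rule would eventually add $\bot(a)$, contradicting $\mn{ch}_\Omc(\Amc_C) \not\models \bot$; hence $C'^\Imc = \emptyset$ and the CI holds. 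Since moreover $a_0 \in C^\Imc$ (the start ABox $\Amc_C$ satisfies $C(a_0)$ and the chase only adds assertions), $\Omc \models C \sqsubseteq D$ yields $a_0 \in D^\Imc$, i.e.\ $\mn{ch}_\Omc(\Amc_C) \models D(a_0)$, contradicting our assumption.

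The main obstacle is precisely this $\bot$-bookkeeping: I must show that ``$\bot$ is never produced'' is equivalent to ``every CI $C' \sqsubseteq \bot$ is vacuously satisfied in the chase model'', which hinges on fairness guaranteeing that any eventual satisfaction of such a left-hand side is detected, and I must align this with the homomorphism argument so that the $\bot$-disjunct in the statement captures exactly the unsatisfiability of $C$ w.r.t.\ $\Omc$. Once these points are established, the remainder is a routine adaptation of the standard universal-model argument underlying Lemma~\ref{lem:chasecorr}.
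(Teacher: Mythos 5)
Your proof is correct and follows essentially the same route as the paper: the paper states Lemma~\ref{lem:chasebotcorr} without proof, treating it as the standard universal-model argument behind Lemma~\ref{lem:chasecorr}, and your reconstruction is precisely that argument, extended in the expected way (soundness via the stepwise homomorphism into an arbitrary model with the $\bot$-rule firing forcing unsatisfiability of $C$, completeness by reading the $\bot$-free chase as a model in which all $C' \sqsubseteq \bot$ hold vacuously by fairness). Your explicit handling of the degenerate cases where $C$ or $D$ contains $\bot$ is a sensible addition consistent with the paper's normalization conventions.
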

Based on Lemma~\ref{lem:chasebotcorr}, we can prove the soundness of
the approximation. The proof is essentially identical to that of
Lemma~\ref{lem:sound}, that is, to the correctness of the
approximation in the \ELU-to-\EL case. We omit details.
\begin{lemma} \label{lem:soundbot} $\Omc^\omega_T \models C_0 \sqsubseteq
  D_0$ implies $\Omc_S \models C_0 \sqsubseteq D_0$ for all \EL
  concepts $C_0,D_0$ over $\mn{sig}(\Omc_S)$.
\end{lemma}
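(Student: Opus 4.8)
The plan is to follow the proof of Lemma~\ref{lem:sound} almost verbatim, using the $\EL_\bot$ chase together with its correctness statement Lemma~\ref{lem:chasebotcorr} in place of the \EL chase, and adding a case distinction to absorb the new $\bot$ alternative. First I would assume $\Omc^\omega_T \models C_0 \sqsubseteq D_0$ for \EL concepts $C_0,D_0$ over $\mn{sig}(\Omc_S)$. By Lemma~\ref{lem:chasebotcorr}, this gives that $\mn{ch}_{\Omc^\omega_T}(\Amc_{C_0}) \models D_0(a_0)$ or $\mn{ch}_{\Omc^\omega_T}(\Amc_{C_0}) \models \bot$, where $a_0$ is the root of $\Amc_{C_0}$. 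I would then fix a chase sequence $\Amc_{C_0}=\Amc_0,\Amc_1,\dots$, note that every $\Amc_i$ is a ditree-shaped extended $\EL_\bot$ ABox (now possibly carrying $\bot$ assertions) and hence can be read as an $\EL_\bot$ concept $C_i$. As in the $\ELU$-to-\EL case, for an $\EL_\bot$ concept $C$ over $\mn{sig}(\Omc^\omega_T)$ let $C^\downarrow$ denote the result of replacing every $X_D$ by $D \in \mn{Dis}(\Omc_S)$ and setting $\bot^\downarrow = \bot$.

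The central step is the same claim as in Lemma~\ref{lem:sound}, namely $\Omc_S \models C_i^\downarrow \sqsubseteq C_{i+1}^\downarrow$ for all $i \geq 0$. Its proof rests on the observation that every CI $C \sqsubseteq D$ that may be applied in the chase, i.e.\ every CI of $\Omc^\omega_T$, satisfies $\Omc_S \models C^\downarrow \sqsubseteq D^\downarrow$. I would verify this line by line against Figure~\ref{fig:third}: for Line~1 the inclusion is $\Omc_S \models C \sqsubseteq \mn{DNF}(E)$, which follows from $C \sqsubseteq E \in \Omc_S$ and $\emptyset \models E \equiv \mn{DNF}(E)$; for Lines~2 and~3 the $\downarrow$-image is literally the side condition $\Omc_S \models D \sqcap D_1 \sqsubseteq D_2$ resp.\ $\Omc_S \models \exists r . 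D \sqsubseteq D_1$; and for the last line it is the side condition $\Omc_S \models F \sqsubseteq \exists r . G$, using that $G$ is over $\mn{sig}(\Omc_S)$ so $G^\downarrow = G$. The only genuinely new subcase is when the right-hand side is $\bot$, which arises precisely when $\mn{DNF}(E)^\uparrow$ is the empty disjunction: then by definition of $\Omc^\omega_T$ we already have $\Omc_S \models C^\downarrow \sqsubseteq \bot$, so introducing a $\bot$ assertion at the relevant node keeps the claim valid. Given this per-CI property, the claim follows exactly as in Lemma~\ref{lem:sound} by inspecting which subtree of $C_{i+1}^\downarrow$ is fresh.

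Finally I would close by a case distinction on the output of Lemma~\ref{lem:chasebotcorr}. If $\mn{ch}_{\Omc^\omega_T}(\Amc_{C_0}) \models D_0(a_0)$, then some $\Amc_i \models D_0$, and since $D_0$ is over $\mn{sig}(\Omc_S)$ this yields $\emptyset \models C_i^\downarrow \sqsubseteq D_0$; chaining the claim together with $C_0 = C_0^\downarrow$ gives $\Omc_S \models C_0 \sqsubseteq D_0$. If instead $\mn{ch}_{\Omc^\omega_T}(\Amc_{C_0}) \models \bot$, then some $\Amc_i$ contains an assertion $\bot(b)$, so $\emptyset \models C_i^\downarrow \sqsubseteq \bot$, and the same chaining yields $\Omc_S \models C_0 \sqsubseteq \bot$, which trivially entails $\Omc_S \models C_0 \sqsubseteq D_0$. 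The only real obstacle here is bookkeeping rather than a conceptual hurdle: one must track $\bot$ assertions carefully both in the correctness statement of the chase and in the definition of $C^\downarrow$, and check that the $\mn{DNF}$-to-$\bot$ collapse in Figure~\ref{fig:third} does not break the per-CI soundness observation.
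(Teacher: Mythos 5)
Your proposal is correct and takes exactly the paper's approach: the paper's own proof of Lemma~\ref{lem:soundbot} simply states that it is ``essentially identical'' to that of Lemma~\ref{lem:sound}, using the $\EL_\bot$ chase and Lemma~\ref{lem:chasebotcorr}, and omits all details. Your elaboration---the per-CI check that every line of Figure~\ref{fig:third} satisfies $\Omc_S \models C^\downarrow \sqsubseteq D^\downarrow$ (including the collapse of $\mn{DNF}(E)^\uparrow$ to the empty disjunction), together with the final case distinction absorbing the $\mn{ch}_{\Omc^\omega_T}(\Amc_{C_0}) \models \bot$ alternative via $\Omc_S \models C_0 \sqsubseteq \bot$---fills in precisely those omitted details correctly.
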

Now for completeness. Recall that we have established the central
Lemma~\ref{lem:compaux} already for the case where $\Omc_S$ is an
$\ELU_\bot$ ontology. The same is true for Lemma~\ref{lem:smallC}.  
%
\begin{restatable}{lemma}{lemcompbot} \label{lem:compbot} Let $\ell \in
  \mathbb{N} \cup \{ \omega \}$. Then $\Omc_S \models C_0 \sqsubseteq
  D_0$ implies $\Omc^\ell_T \models C_0 \sqsubseteq D_0$ for all \EL
  concepts $C_0,D_0$ over $\mn{sig}(\Omc_S)$ such that the role depth
  of $D_0$ is bounded by $\ell$.
\end{restatable}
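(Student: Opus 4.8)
The plan is to follow the proof of Lemma~\ref{lem:comp} almost verbatim, exploiting that its two central ingredients, Lemma~\ref{lem:compaux} and Lemma~\ref{lem:smallC}, have already been stated and proved directly for $\ELU_\bot$ ontologies. As there, I would first reduce to the two cases where $D_0$ is a concept name and where $D_0=\exists r . E_0$, since every \EL concept is a conjunction of such concepts and right-hand subsumption distributes over conjunction.

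The genuinely new phenomenon to dispatch up front is unsatisfiability. If $\Omc_S\models C_0\sqsubseteq\bot$, then by definition $\mn{Dis}^\EL_{\Omc_S}(C_0)=\bot$, so Lemma~\ref{lem:compaux} gives $\Omc^-_T\models C_0\sqsubseteq\bot^\uparrow=\bot$ and hence $\Omc^\ell_T\models C_0\sqsubseteq D_0$ for every $D_0$; this settles the case completely, and I may assume $C_0$ (and the relevant subconcepts) satisfiable w.r.t.\ $\Omc_S$ in the remainder. The concept-name case $D_0=A$ is then identical to Lemma~\ref{lem:comp}: from $\Omc_S\models C_0\sqsubseteq A$ one obtains $\Omc_S\models\mn{Dis}^\EL_{\Omc_S}(C_0)\sqsubseteq A$, and Lemma~\ref{lem:compaux} together with the second line of Figure~\ref{fig:third} yields $\Omc^\ell_T\models C_0\sqsubseteq A$, distinguishing whether $\mn{Dis}^\EL_{\Omc_S}(C_0)$ has one or several disjuncts.

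For the existential case I would run the same induction on the co-depth of $a\in\mn{Ind}(\Amc_{C_0})$, proving that $\Omc_S\models C^a_0\sqsubseteq\exists r . E$ implies $\Omc^\ell_T\models C^a_0\sqsubseteq\exists r . E$ for $\exists r . E$ of the appropriate depth. The crucial simplification over the $\ELU$-to-$\EL$ case is that Figure~\ref{fig:third} drops $\Omc_S$-generatability, so Lemma~\ref{lem:generatablesem} is no longer invoked: its conclusion has been replaced by Point~1 of Figure~\ref{fig:third}, which is exactly the hypothesis of the ``no top-level conjunct'' branch. Concretely, if $C^a_0$ has a top-level conjunct $\exists r . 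E'$ with $\Omc_S\models E'\sqsubseteq E$, I argue as in Case~1 of Lemma~\ref{lem:comp}, using the already-proved concept-name case and the induction hypothesis on the subconcepts of $E$. Otherwise I apply Lemma~\ref{lem:smallC} to the subABox $\Amc$ rooted at $a$, obtaining $\Omc_S,\Amc^\pm\models\exists r . E(a)$, and set $F=C^\pm$, the decorated $\ELU_\bot$ concept read off from $\Amc^\pm$, so that $\Omc_S\models F\sqsubseteq\exists r . E$. Since we are in the ``no top-level conjunct'' branch, $F$ and $G=E$ satisfy Point~1 of Figure~\ref{fig:third}, and the depth bound gives Point~2; hence $F^\uparrow\sqsubseteq\exists r . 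E$ is literally a CI of $\Omc^\ell_T$. Lemma~\ref{lem:compaux} then supplies $\Omc^\ell_T,\Amc\models\mn{Dis}^\EL_{\Omc_S}(C^c_\Amc)^\uparrow(c)$ at the leaves $c$ of $\Amc|_k$, which lets me conclude $\Omc^\ell_T\models C^a_0\sqsubseteq F^\uparrow$ and therefore $\Omc^\ell_T\models C^a_0\sqsubseteq\exists r . E$.

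The main obstacle I anticipate is precisely the interaction of $\bot$ with Point~1 in this last branch. I must ensure that decorating the truncated subtree with the semantic summaries $\mn{Dis}^\EL_{\Omc_S}(C^c_\Amc)$ does not spuriously create a top-level conjunct $\exists r . F'$ of $F$ with $\Omc_S\models F'\sqsubseteq E$, which would invalidate the claim that $F^\uparrow\sqsubseteq\exists r . E$ is a CI of $\Omc^\ell_T$. The point to verify is that each decoration is an $\Omc_S$-consequence of the concept it is attached to, so that $\Omc_S\models F'\sqsubseteq E$ holds for the decorated $F'$ iff it holds for the undecorated one; together with satisfiability (so that no leaf collapses to $\bot$ unnoticed, which would instead fall under the opening $\bot$ case), this preserves the branch hypothesis and closes the argument.
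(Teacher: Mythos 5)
Your proposal is correct and follows essentially the same route as the paper's proof: the same case split ($\bot$/unsatisfiability dispatched first, then concept names via Lemma~\ref{lem:compaux} and the second line of Figure~\ref{fig:third}, then existentials by induction on co-depth), the same key observation that $\Omc_S$-generatability and Lemma~\ref{lem:generatablesem} are replaced by Point~1 of Figure~\ref{fig:third}, and the same use of Lemma~\ref{lem:smallC} with $F=C^\pm$ in Case~2. You also correctly isolate the one genuinely new verification --- that truncating and decorating with $\mn{Dis}^\EL_{\Omc_S}(C^c_\Amc)$ cannot create a spurious top-level conjunct violating Point~1 --- and your argument via the decorations being $\Omc_S$-consequences of the concepts they replace (so $\Omc_S\models C^b_0\sqsubseteq F'$ and transitivity transfers $\Omc_S\models F'\sqsubseteq E$ back to the undecorated conjunct) is exactly the point with which the paper's own proof of Case~2 concludes.
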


\noindent
\begin{proof}\ Assume that $\Omc_S \models C_0 \sqsubseteq D_0$ with
  $C_0,D_0$ $\EL$ concepts over $\mn{sig}(\Omc_S)$ such that the role
  depth of $D_0$ is bounded by~$\ell$. If $C_0$ contains $\bot$, then
  clearly $\Omc^\ell_T \models C_0 \sqsubseteq D_0$. If $D_0$ contains
  $\bot$, then it is equivalent to $\bot$. We can thus assume that
  $C_0$ is an \EL concept and it suffices to consider the cases where
  $D_0$ is $\bot$, a concept name, or of the form $\exists r . E_0$.

  We start with the case $D_0=\bot$. Then $\Omc_S \models C_0
  \sqsubseteq D_0$ implies that $\mn{Dis}^\EL_{\Omc_S}(C_0)$ is $\bot$
  (that is, it is the empty disjunction), and consequently
  Lemma~\ref{lem:compaux} yields $\Omc^\ell_T \models C_0 \sqsubseteq
  \bot$ as required.

  Now let $D_0=A$.  Clearly, $\Omc_S \models C_0 \sqsubseteq A$
  implies $\Omc_S \models \mn{Dis}^\EL_{\Omc_S}(C_0) \sqsubseteq A$.
  It thus follows from Lemma~\ref{lem:compaux} that $O_T^\ell \models
  C_0 \sqsubseteq A$ (see proof of Lemma~\ref{lem:compbot} for 
  details).

  The case where $D_0=\exists r . E_0$ is a consequence of the
  following claim. For each $a \in \mn{Ind}(\Amc_{C_0})$, we write
  $C^a_0$ as an abbreviation for $C^a_{\Amc_{C_0}}$.
  \\[2mm]
  {\bf Claim.}  For all $a \in \mn{Ind}(\Amc_{C_0})$ and \EL concepts
  $\exists r . E$ of depth $\ell-\mn{depth}(a)$, $\Omc_S \models
  C^a_0 \sqsubseteq \exists r . E$ implies $\Omc^\ell_T
  \models C^a_0 \sqsubseteq \exists r . E$.
  \\[2mm]
  \emph{Proof of claim}. The proof is by induction on the codepth of~$a$. 

  \smallskip 
  \noindent 
  \emph{Induction start}. Then $a$ is a leaf in $\Amc_{C_0}$ and thus
  $C^a_0$ does not have any top-level conjuncts of the form $\exists r
  . E'$.  Thus Condition~1 from Figure~\ref{fig:third} is satisfied
  for $F=C^a_0$. Consequently, $\Omc^\ell_T$ contains the CI
  $C^a_0 \sqsubseteq \exists r . E$ and we are done.

  \smallskip 
  \noindent 
  \emph{Induction step.} Then $a$ is a non-leaf in $\Amc_{C_0}$. We
  distinguish two cases.

  \medskip
  \noindent
  \emph{Case~1}. There is a top-level conjunct $\exists r . E'$ in
  $C^a_0$ such that $\Omc_\Smc \models E' \sqsubseteq
  E$. Then $a$ has an $r$-successor $b$ in $\Amc_{C_0}$ such that
  $C^b_0 = E'$.  Let
  $$
  E=A_1 \sqcap \cdots \sqcap A_{n} \sqcap \exists r_1 . E_1 \sqcap
  \cdots \sqcap \exists r_{m} . E_{m}.
  $$
  Since we have already shown Lemma~\ref{lem:compbot} for the case where
  $D_0$ is a concept name, we obtain $\Omc_T^\ell \models
  C^b_0 \sqsubseteq A_i$ for $1 \leq i \leq n$.  From the
  induction hypothesis, we further obtain $\Omc_T^\ell \models
  C^b_0 \sqsubseteq \exists r_i . E_i$ for $1 \leq i \leq
  m$. Thus $\Omc_T^\ell \models C^b_0 \sqsubseteq E$ and
  consequently $\Omc_T^\ell \models C^a_0 \sqsubseteq
  \exists r . E$ as required.
  
  \medskip
  \noindent
  \emph{Case~2}. There is no top-level conjunct $\exists r . E'$ in
  $C^a_0$ such that $\Omc_\Smc \models E' \sqsubseteq E$.  Then
  Condition~1 from Figure~\ref{fig:third} is satisfied for $F=C^a_0$.
  Let $\Amc$ be the ditree-shaped subABox of $\Amc_{C_0}$ rooted at
  $a$ and let $\Amc^\pm$ be the extended ABox obtained from
  $\Amc|_{k}$, with $k$ the depth of $\exists r . E$, by adding
  $\mn{Dis}^\EL_{\Omc_S}(C^c_\Amc)^\uparrow(c)$ whenever $c$ is a leaf
  in $\Amc|_{k}$. Applying Lemma~\ref{lem:smallC} to \Amc and
  $\Amc^\pm$ and with $\exists r .E$ in place of $\exists r . C$, we
  obtain $\Omc_S,\Amc^\pm \models \exists r . E(a)$.  Let $C^\pm$ be
  $\Amc^\pm$ viewed as an \EL concept decorated with disjunctions from
  $\mn{Dis}(\Omc_S)$ at leaves. Since there is no top-level conjunct
  $\exists r . E'$ in $C^a_0$ such that $\Omc_\Smc \models E'
  \sqsubseteq E$ there is no top-level conjunct $\exists r . E'$ in
  $C^\pm$ such that $\Omc_\Smc \models E' \sqsubseteq E$ either: if
  this was the case with $\exists r .E'$ corresponding to the
  successor $r(a,b)$ of $a$ in $\Amc^\pm$, then we can apply
  Lemma~\ref{lem:smallC} to the subABox of \Amc rooted at $b$ and the
  subABox of $\Amc^\pm$ rooted at $b$ to obtain $\Omc_\Smc,\Amc \models
  \exists r . E (b)$ and thus $b$ in \Amc corresponds to
  a top-level conjunct
  $\exists r . E'$ in $C^a_0$ with $\Omc_\Smc \models E'
  \sqsubseteq E$.
\end{proof}

\section{More Optimization for Figure~\ref{fig:third}}

\begin{figure}[t]
   \begin{boxedminipage}{\columnwidth}
     \vspace*{-2mm}
  \begin{center}
  $$
  \begin{array}{rcll}
    C &\sqsubseteq& E^\uparrow & \text{if } C \sqsubseteq E
  \in \Omc_S \\[0.5mm]
  X_{D} \sqcap D_1^\uparrow &\sqsubseteq&
  D_2^\uparrow & \text{if } \Omc_S \models D \sqcap D_1 \sqsubseteq
  D_2\\[0.5mm]
 \exists r . X_{D} &\sqsubseteq& D_1^\uparrow & \text{if }
   \Omc_S \models \exists r . D \sqsubseteq D_1 \\[0.5mm]
   X_D & \sqsubseteq& \exists r . D_1^\uparrow & \text{if } \Omc_S \models D \sqsubseteq 
  \exists r . D_1\\[0.5mm]
   F^\uparrow &\sqsubseteq& \exists r . G & \text{if } \Omc_S \models F \sqsubseteq
  \exists r . G 
  \end{array}
  $$
  \end{center}
  where in the last line $F$ is an \EL concept over $\mn{sig}(\Omc_S)$
  decorated with disjunctions from $\mn{Dis}(\Omc_S)$ at leaves and
  $G$ is an \EL concept over $\mn{sig}(\Omc_S)$ such that 
    \begin{enumerate}

    \item $F$ has no top-level conjunct $\exists r . F'$ s.t.\ $\Omc_S
      \models F' \sqsubseteq G$;



    \item it is not the case that $\mn{Dis}_{\Omc_S}(F)$ has at least two
      disjuncts and $\Omc_S \models \mn{Dis}_{\Omc_S}(F) \sqsubseteq
      \exists r . G$;

    \item   $\mn{depth}(F) \leq \mn{depth}(G) < \ell$.

    \end{enumerate}
  \end{boxedminipage}
  \caption{Optimized $\ell$-bounded $\EL_\bot$ approximation $\Omc^\ell_T$.}
\label{fig:thirdb}
\vspace*{-3mm}
\end{figure}
A further optimization of the approximation from
Figure~\ref{fig:third} is shown in Figure~\ref{fig:thirdb} where
$\mn{Dis}_{\Omc_S}(C_0)$ is defined just like
$\mn{Dis}^\EL_{\Omc_S}(C_0)$ except that the disjunctions and
conjunctions are based on all concepts from $\mn{sub}^-(\Omc_S)$
rather than only those formulated in \EL.  Compared to
Figure~\ref{fig:third}, the second last concept inclusion and
Condition~2 have been added, with the aim of invoking the expensive
bottommost concept inclusion less often.
\begin{exmp}
  Consider the following variation of the \ELU ontology in
  Proposition~\ref{ex:1}:
  $$
  \begin{array}{r@{}rcl@{}l}
    \Omc_S = \{ & A &\sqsubseteq& B_1 \sqcup B_2,\\[0.5mm]
& \exists r . B_i &\sqsubseteq& B_i, & \text{for } i \in \{1,2\}\\[0.5mm]
&    B_i \sqcap A' &\sqsubseteq& \exists r . M& \text{for } i \in \{1,2\} \ \}.
  \end{array}
  $$
  The approximation $\Omc^\omega_T$ in Figure~\ref{fig:third} would
  contain the CI
  \begin{equation*}
  A' \sqcap \exists r^n . A \sqsubseteq \exists r . M 
\tag{$\dagger$}
  \end{equation*}
  for all $n \geq 1$. However, $\mn{Dis}_{\Omc_S}(A' \sqcap \exists
  r^n . A)$ is 
  $$
   (A' \sqcap B_1 \sqcap \exists r . B_1) \sqcup
      (A' \sqcap B_2 \sqcap \exists r . B_2) 
  $$
  and we have $\Omc_S \models \mn{Dis}_{\Omc_S}(A' \sqcap \exists
  r^n . A) \sqsubseteq \exists r . M$. Consequently, the
  CIs~($\dagger$)
  are not contained in the approximation  $\Omc^\omega_T$ according
  to Figure~\ref{fig:thirdb}. It is compensated by the CIs
  $$
  \begin{array}{rcl}
      A &\sqsubseteq& X_{B_1 \sqcup B_2} \\[1mm]
      \exists r. X_{B_1 \sqcup B_2} &\sqsubseteq& X_{(B_1 \sqcap \exists
        r . B_1) \sqcup (B_2 \sqcap \exists r . B_2)} \\[1mm]
     A' \sqcap  X_{(B_1 \sqcap \exists
        r . B_1) \sqcup (B_2 \sqcap \exists r . B_2)} &\sqsubseteq&
      X_{\mn{Dis}_{\Omc_S}(A' \sqcap \exists r^n . A)} \\[1mm]
      X_{\mn{Dis}_{\Omc_S}(A' \sqcap \exists r^n . A)} &\sqsubseteq&
      \exists r . M
  \end{array}
  $$
  with the last line being an instantiation of the new second last CI
  schema in Figure~\ref{fig:thirdb}.
\end{exmp}
Proposition~\ref{ex:2} and Example~\ref{ex:ELUbot} provide cases where
the last line of Figure~\ref{fig:thirdb} is still needed. Arguably,
the cases illustrated by these examples are not too likely to occur in
practice.

\smallskip

It should be clear that the new CIs in the second last line are sound
and thus soundness of the approximation is not compromised.  In what
follows, we proof completeness.  For our proof to go through, we need
to assume that $\top$ is always contained in $\mn{sub}^-(\Omc_S)$.  We
start with observing two technical lemmas, the first one being a
variant of Lemma~\ref{lem:compaux}.
\begin{lemma}
\label{lem:modcompaux}
  Let $C_0$ be an \EL concept over $\mn{sig}(\Omc_S)$ decorated with
  disjunctions from $\mn{Dis}(\Omc_S)$ at leaves such that
  $\mn{Dis}_{\Omc_S}(C_0)$ has at least two disjuncts.
%
  Then 
  $\Omc^-_T \models C_0^\uparrow
  \sqsubseteq \mn{Dis}_{\Omc_S}(C_0)$.
\end{lemma}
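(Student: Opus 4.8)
The plan is to mirror the proof of Lemma~\ref{lem:compaux}, replacing the EL-disjunction $\mn{Dis}^\EL_{\Omc_S}$ throughout by the full disjunction $\mn{Dis}_{\Omc_S}$ and allowing the start concept to carry decorations. Since $\mn{Dis}_{\Omc_S}(C_0)$ has at least two disjuncts, it lies in $\mn{Dis}^-(\Omc_S)$, so (reading the right-hand side of the statement via the $\cdot^\uparrow$ convention) the inclusion to be shown is $\Omc^-_T \models C_0^\uparrow \sqsubseteq X_{\mn{Dis}_{\Omc_S}(C_0)}$; note the genuine disjunction $\mn{Dis}_{\Omc_S}(C_0)$ itself could never be entailed semantically by the $\EL$ ontology $\Omc_T^-$, which forces this reading. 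I would run the special chase of Section~\ref{app:chasecompl} on the ditree-shaped extended ABox $\Amc_{C_0}$, whose leaves now additionally carry the decorating disjunction assertions $D(a)$ with $D \in \mn{Dis}(\Omc_S)$, obtaining a sequence $\Amc_{C_0}=\Amc_0,\Amc_1,\dots$ and viewing each $\Amc_i$ as an \ELU concept $C_i$ all of whose disjunctions come from $\mn{Dis}(\Omc_S)$, so that $C_0$ is the decorated concept itself.

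First I would reuse the per-step claim from the proof of Lemma~\ref{lem:compaux} verbatim, namely $\Omc^-_T \models C_i^\uparrow \sqsubseteq C_{i+1}^\uparrow$ for every $i$: the case analysis over the chase rules there inspects only the single rule application producing $\Amc_{i+1}$ from $\Amc_i$ and is therefore insensitive to whether the initial ABox was decorated. Chaining these inclusions gives $\Omc^-_T \models C_0^\uparrow \sqsubseteq C_i^\uparrow$ for all $i$. It then remains to show that the chase derives the full semantic disjunction at the root, i.e.\ that $\mn{ch}^{\text{sp}}_{\Omc_S}(\Amc_{C_0}) \vdash \mn{Dis}_{\Omc_S}(C_0)(a_0)$; because this disjunction is in $\mn{Dis}^-(\Omc_S)$, the definition of $\vdash$ forces $\mn{Dis}_{\Omc_S}(C_0)(a_0) \in \Amc_i$ for some $i$, so $X_{\mn{Dis}_{\Omc_S}(C_0)}$ is a top-level conjunct of $C_i^\uparrow$, whence $\emptyset \models C_i^\uparrow \sqsubseteq X_{\mn{Dis}_{\Omc_S}(C_0)}$ and, combined with the previous inclusion, the lemma follows.

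The main obstacle is precisely this last claim, which strengthens Lemma~\ref{lem:chasecompl} from the EL-disjunction to the full $\mn{Dis}_{\Omc_S}$. I would re-run the counter-model construction of that lemma: assuming $\mn{ch}^{\text{sp}}_{\Omc_S}(\Amc_{C_0}) \not\vdash \mn{Dis}_{\Omc_S}(C_0)(a_0)$, the assumption together with property~(P2) lets me pick a disjunct $E_{a_0}$ of $D_{a_0}$ with $\emptyset \not\models E_{a_0} \sqsubseteq \mn{Dis}_{\Omc_S}(C_0)$, and then build a model \Imc of $\Omc_S$ with $a_0 \in C_0^\Imc$ but $a_0 \notin \mn{Dis}_{\Omc_S}(C_0)^\Imc$, contradicting that, by construction of $\mn{Dis}_{\Omc_S}$, $\Omc_S \models C_0 \sqsubseteq \mn{Dis}_{\Omc_S}(C_0)$. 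The genuinely new work is that the gap between $\mn{Dis}^\EL_{\Omc_S}$ and $\mn{Dis}_{\Omc_S}$ lives exactly in the non-\EL members $\exists r . E$ of $\mn{sub}^-(\Omc_S)$ (those with $E$ disjunctive), so the inductive Claims~1 and~2 of the proof of Lemma~\ref{lem:chasecompl} must be extended to cover such concepts. Here the decoration hypothesis is essential: a node can $\vdash$-satisfy $\exists r . E$ only if the relevant successor already carries the disjunction assertion $E$, and the decorations guarantee precisely these assertions at the leaves, while Rules~5--7 of the special chase propagate them upward along role edges. The delicate point will be to verify that these propagated disjunctions let me determine, at every node and coherently across all disjuncts of the full types, whether $\exists r . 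E$ holds, exactly as Rules~6 and~7 were designed to guarantee for anonymous successors; granting this extension, the contradiction closes as in Lemma~\ref{lem:chasecompl} and the proof is complete.
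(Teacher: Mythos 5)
Your high-level skeleton matches the paper's: run the special chase on the decorated $\Amc_{C_0}$, chain the per-step inclusions $\Omc^-_T \models C_i^\uparrow \sqsubseteq C_{i+1}^\uparrow$, show the full semantic disjunction is derived at the root, and read the right-hand side as $X_{\mn{Dis}_{\Omc_S}(C_0)}$ (that reading is indeed the intended one, as the paper's later application of the lemma confirms). But there is a genuine gap at exactly the point you call ``delicate'': you keep the chase and the relation $\vdash$ unmodified and hope that the leaf decorations together with Rules~5--7 supply the disjunction assertions needed to extend the inductive Claims of the proof of Lemma~\ref{lem:chasecompl} to non-\EL concepts. That cannot work. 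The extended Claim must show, at arbitrary individuals of the countermodel---including non-disjunctive ones in the interior, far from any decoration---that $a \in (C_1 \sqcup C_2)^\Imc$ yields $\mn{ch}^{\text{sp}}_{\Omc_S}(\Amc_{C_0}) \vdash C_1 \sqcup C_2(a)$; under the original $\vdash$ this requires the assertion $C_1 \sqcup C_2(a)$ to be \emph{literally present}, yet the countermodel may satisfy one disjunct at $a$ without the chase ever producing that assertion. Rules~5--7 only propagate the specific semantic disjunctions $\mn{Dis}_{\Omc_S}(C)$ of concepts $C$ that anonymous individuals were introduced for, not arbitrary disjunctions in $\mn{sub}(\Omc_S)$, and the decorations live only at the leaves of the original part, so neither mechanism closes this hole. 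The paper's actual fix is different: it liberalizes the entailment relation to $\vdash'$, where $\Amc \vdash' C_1 \sqcup C_2(a)$ also holds when a disjunct is satisfied, and replaces $\vdash$ by $\vdash'$ in Rule~4 of the chase. This makes the disjunction case of the induction immediate, but it \emph{changes the chase}, so your plan to reuse the per-step claim of Lemma~\ref{lem:compaux} ``verbatim'' also fails: the Rule~4 case must be re-proved, which the paper does by extracting from the $\vdash'$-derivation of $D_2(a)$ a concept $D'_2$ with $\emptyset \models E_a^\uparrow \sqsubseteq {D'_2}^\uparrow$ and $\Omc_S \models D'_2 \sqsubseteq D_2$, and then invoking the CI $X_{D_1} \sqcap {D'_2}^\uparrow \sqsubseteq D_3^\uparrow$ from the second line of the approximation scheme. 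Contrary to your reading, the decoration hypothesis is \emph{not} the engine of the stronger derivation; the paper notes it is simply carried through without further modifications.

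A second, related omission: your appeal to property (P2) to choose $E_{a_0}$ presupposes that $a_0$ is disjunctive (the disjunction $D_{a_0}$ is only defined in that case), and you never use the hypothesis that $\mn{Dis}_{\Omc_S}(C_0)$ has at least two disjuncts. In the paper this hypothesis is precisely what eliminates the non-disjunctive case: if $a_0$ were not disjunctive, the $\vdash'$-adapted version of the soundness lemma (Lemma~\ref{lem:chsound}) applied to the conjunction $K$ of all $C \in \mn{sub}^-(\Omc_S)$ with $a_0 \in C^\Imc$ gives $\Omc_S \models C_0 \sqsubseteq K$, forcing $\mn{Dis}_{\Omc_S}(C_0)$ to consist of the single disjunct $K$, contradicting the two-disjunct assumption. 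Without an argument of this kind, your final contradiction does not go through for non-disjunctive roots.
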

\noindent
\begin{proof}\ (sketch) The proof is almost identical to that of
  Lemma~\ref{lem:compaux}, we only sketch the differences. The fact
  that $C_0$ is no longer an \EL concept but is decorated with
  disjunctions from $\mn{Dis}(\Omc_S)$ at leaves is no problem at
  all. It is simply carried through the entire proof and does not
  prompt any further modifications. The fact that we work with
  $\mn{Dis}_{\Omc_S}(C_0)$ instead of $\mn{Dis}^\EL_{\Omc_S}(C_0)$,
  however, does require some changes.  In the main proof of
  Lemma~\ref{lem:compaux}, we need a very slight modification of the
  special chase plus an adapted formulation of
  Lemma~\ref{lem:chasecompl}. 

  We define a variant $\vdash'$ of $\vdash$ that only differs in the
  clause for disjunction:
  \begin{itemize}

  \item $\Amc \vdash' C_1 \sqcup C_2(a)$ if (a)~$\Amc \models C_1(a)$ or
    (b)~$\Amc \models C_2(a)$ or (c)~$C_1 \sqcup C_2(a) \in \Amc$.

  \end{itemize}
  Now, the only modification of the special chase is that, in Rule~4,
  we replace $\Amc \vdash D_2(a)$ with $\Amc \vdash' D_2(a)$.  The
  adapted formulation of Lemma~\ref{lem:chasecompl} then reads as
  follows.
  \\[2mm]
  {\bf Claim 1.}  Let $\Omc$ be an $\ELU_\bot$ ontology and $C_0$ be an
  \EL concept over $\mn{sig}(\Omc)$ decorated with disjunctions from
  $\mn{Dis}(\Omc_S)$ at leaves such that
  $\mn{Dis}_{\Omc_S}(C_0)$ has at least two disjuncts. Then
  $X_{\mn{Dis}_\Omc(C_0)}(a_0)  \in \mn{ch}^{\text{sp}}_{\Omc}(\Amc_{C_0})$.
  \\[2mm]
  In the proof of Lemma~\ref{lem:compaux}, in the claim stating that
  $\Omc_T^- \models C^\uparrow_i \sqsubseteq C^\uparrow_{i+1}$ for
  all $i \geq 0$, we need to adapt the Case of Rule~4, as follows.

  \medskip

  Then there are $D_1(a) \in \Amc_i$ with $D_1 \in \mn{Dis}^-(\Omc_S)$
  and $D_2,D_3 \in \mn{Dis}(\Omc_S)$ such that $\Amc_i \vdash'
  D_2(a)$, $\Omc_S \models D_1 \sqcap D_2 \sqsubseteq D_3$, and
  $\Amc_{i+1}=\Amc_i \cup \{ D_3(a) \}$. Let $E_a$ be the subconcept
  of $C_i$ that corresponds to the subtree rooted at $a$ in $\Amc_i$
  and let $F_a$ be the subconcept of $C_{i+1}$ that corresponds to the
  subtree rooted at $a$ in $\Amc_{i+1}$. Then $F_a = E_a \sqcap D_3$.
  From $D_1(a) \in \Amc$ and $D_1 \in \mn{Dis}^-(\Omc_S)$, we obtain
  that $X_{D_1}$ is a top-level conjunct of $E_a^\uparrow$. From
  $\Amc_i \vdash' D_2(a)$, we obtain an \EL concept $D_2'$ with
  $\emptyset \models E_a^\uparrow \sqsubseteq {D_2'}^\uparrow$ and
  $\Omc_S \models D_2' \sqsubseteq D_2$; we in fact obtain $D_2$ by
  `following' $\Amc_i \vdash' D_2(a)$ using the definition of
  $\vdash'$ and whenever we arrive at $\Amc_i \vdash' F_1 \sqcup
  F_2(b)$ and this holds because of Case~(a) from the definition of
  $\vdash'$ (resp.\ Case~(b)), replacing the occurrence of $F_1 \sqcup
  F_2$ in $D_2$ that gave rise to this with $F_1$ (resp.\ $F_2$).
  From $\Omc_S \models D_1 \sqcap D_2 \sqsubseteq D_3$ and $\Omc_S
  \models D'_2 \sqsubseteq D_2$, we obtain $\Omc_S \models D_1 \sqcap
  D'_2 \sqsubseteq D_3$ and thus $\Omc_T^-$ contains the CI $X_{D_1}
  \sqcap {D'_2}^\uparrow \sqsubseteq D_3^\uparrow$. Consequently, $\Omc_T^-
  \models C^\uparrow_i \sqsubseteq C^\uparrow_{i+1}$ as required.

  \medskip

  It remains to prove Claim~1. The proof is, in turn, a slight
  modification of the proof of Lemma~\ref{lem:chasecompl}. Again, we
  concentrate on sketching the differences. Of course, we replace
  $\mn{Dis}^\EL_{\Omc}(C_0)$ with $\mn{Dis}_{\Omc}(C_0)$ throughout
  the proof. Further, we replace $\vdash$ with $\vdash'$ in property
  (P1) and in (the two incarnations of) property (P3). We then go on
  to construct the interpretation $\Imc$ as before and show, also as
  before, that it is a model of $\Omc$.  It remains to show that $a_0
  \notin \mn{Dis}_\Omc(C_0)$. For this, we first need to observe the
  following version of Claim~1 in the proof of
  Lemma~\ref{lem:chasecompl}.
  \\[2mm]
  {\bf Claim 2.} Let $a \in \Delta^\Imc$ be an individual of
  $\mn{ch}^{\text{sp}}_{\Omc}(\Amc_{C_0})$ and let
  $C \in \mn{sub}(\Omc)$ (not necessarily be an \EL concept). Then
$a \in C^\Imc$ implies 
  \begin{enumerate}

  \item $\emptyset \models E_a \sqsubseteq C$
    if $a$ is original and disjunctive, and

  \item $\mn{ch}^{\text{sp}}_{\Omc}(\Amc_{C_0}) \vdash' C(a)$ otherwise.

    \end{enumerate}
    The proof is by induction on the structure of $C$. All cases
    except $C= C_1 \sqcup C_2$ are as in the proof of Claim~1 in
    the proof of Lemma~\ref{lem:chasecompl}. Due to the use of
    $\vdash'$ in place of $\vdash$, however, the additional case
    is straightforward using the semantics and induction hypothesis.

\medskip

We next argue that $a_0$ is disjunctive. Assume to the contrary that
it is not.  It can be verified that Lemma~\ref{lem:chsound} (soundness
of the special chase) still holds when the precondition
$\mn{ch}^{\text{sp}}_{\Omc}(\Amc_{C_0}) \vdash D(a_0)$ is replaced
with $\mn{ch}^{\text{sp}}_{\Omc}(\Amc_{C_0}) \vdash' D(a_0)$.  Let $K$
be the conjunction of all $C \in \mn{sub}^-(\Omc)$ such that $a_0 \in
C^\Imc$. By Claim~2, $\mn{ch}^{\text{sp}}_{\Omc}(\Amc_{C_0}) \vdash'
C(a_0)$ for all such~$C$. Thus the modified Lemma~\ref{lem:chsound}
yields $\Omc \models C_0 \sqsubseteq K$. Since \Imc is a model of
\Omc, this implies that $\mn{Dis}_\Omc(C_0)$ has only the disjunct
$K$, a contradiction to  $\mn{Dis}_\Omc(C_0)$ having two disjuncts.

\smallskip Now back to 
our proof that $a_0 \notin \mn{Dis}_\Omc(C_0)$.
It remains to show that $a_0 \notin \mn{Dis}_\Omc(C_0)^\Imc$.  Then
there is a disjunct $K$ of $\mn{Dis}_\Omc(C_0)^\Imc$ such that $a_0
\in C^\Imc$ for every conjunct $C$ of~$K$. Since $a_0$ is disjunctive,
Point~1 of Claim~1, yields $\emptyset \models E_{a_0} \sqsubseteq C$
for all conjuncts~$C$ of~$K$.  Thus $\emptyset \models E_{a_0}
\sqsubseteq \mn{Dis}_\Omc(C_0)$, in contradiction to our choice of
$E_{a_0}$.
\end{proof}

\begin{lemma}
\label{lem:semadd}
  Let $D \in \mn{Dis}(\Omc_S)$ be satisfiable w.r.t.\ $\Omc_S$ and let
  $C$ be an \EL concept. Then
  $\Omc_S \models D \sqsubseteq \exists r . C$,
implies that there is a
  $D' \in \mn{Dis}(\Omc_S)$ with
  $\Omc_S \models D \sqsubseteq \exists r . D'$ and
  $\Omc_S \models D' \sqsubseteq C$.
\end{lemma}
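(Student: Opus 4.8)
The plan is to define the witness disjunction $D'$ semantically as ``all $C$\nobreakdash-forcing types'' and then verify the two required entailments, the second being the substantial one. Let $\mn{tp}^-(e) = \{F \in \mn{sub}^-(\Omc_S) \mid e \in F^\Imc\}$ denote the $\mn{sub}^-(\Omc_S)$\nobreakdash-type of an element $e$ in a model $\Imc$, and take $D'$ to be the disjunction of all concepts $\bigsqcap S$ for $S \subseteq \mn{sub}^-(\Omc_S)$ with $\bigsqcap S$ satisfiable w.r.t.\ $\Omc_S$ and $\Omc_S \models \bigsqcap S \sqsubseteq C$. Since $\mn{sub}^-(\Omc_S)$ is finite, $D' \in \mn{Dis}(\Omc_S)$, and $\Omc_S \models D' \sqsubseteq C$ holds by construction. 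Writing $D = E_1 \sqcup \cdots \sqcup E_k$ with each $E_i \in \mn{Con}(\Omc_S)$, and observing that unsatisfiable disjuncts are vacuous, it suffices to show $\Omc_S \models E_i \sqsubseteq \exists r . D'$ for each satisfiable disjunct; so I may assume $D = E$ is a single satisfiable conjunction with $\Omc_S \models E \sqsubseteq \exists r . C$.

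The core task is thus to prove $\Omc_S \models E \sqsubseteq \exists r . D'$, i.e.\ that in every model $\Imc$ of $\Omc_S$, any $d \in E^\Imc$ has an $r$\nobreakdash-successor whose type is $C$\nobreakdash-forcing. The plan here is to go through minimal tree models: I would build, from $E$, a family of minimal models of $\Omc_S$ (one per choice of disjuncts, e.g.\ via a disjunctive chase) and use the standard universality property that every model $\Imc$ with $d \in E^\Imc$ admits a homomorphism $h$ from some minimal branch model $\Imc_\pi$ with $h(\mn{root}) = d$. In $\Imc_\pi$ the root satisfies $E$, so by hypothesis it has an $r$\nobreakdash-successor $e_\pi \in C^{\Imc_\pi}$. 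Since $\ELU_\bot$ concepts are positive and existential, homomorphisms preserve $\mn{sub}^-(\Omc_S)$\nobreakdash-membership forward, so $e := h(e_\pi)$ is an $r$\nobreakdash-successor of $d$ with $\mn{tp}^-(e) \supseteq \mn{tp}^-(e_\pi)$; hence $e \in (D')^\Imc$ provided $\mn{tp}^-(e_\pi)$ is $C$\nobreakdash-forcing (a $C$\nobreakdash-forcing type stays $C$\nobreakdash-forcing when enlarged). Everything therefore reduces to the following claim about minimal models: the forced witness $e_\pi$ satisfies $\Omc_S \models \bigsqcap \mn{tp}^-(e_\pi) \sqsubseteq C$.

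This claim is the step I expect to be the main obstacle, and it is exactly where the non\nobreakdash-Horn character of $\Omc_S$ bites. The naive attempt to establish it by induction on $C$ fails: a witness may lie in $C$ in one branch through branch\nobreakdash-specific deep structure, so $e_\pi \in C^{\Imc_\pi}$ alone does \emph{not} imply $\Omc_S \models \bigsqcap\mn{tp}^-(e_\pi) \sqsubseteq C$ (one can exhibit $\Omc_S$ with a disjunction $G \sqsubseteq H_1 \sqcup H_2$ where a single branch realizes a concept that is not $\Omc_S$\nobreakdash-entailed by the type). The point I would exploit to repair this is that the hypothesis $\Omc_S \models E \sqsubseteq \exists r . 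C$ holds in \emph{all} models, hence across \emph{all} branches simultaneously, which upgrades branch\nobreakdash-local witnessing to an $\Omc_S$\nobreakdash-level subsumption: the membership of the successor in $C$ is ultimately caused by an existential $\exists r . K \in \mn{sub}(\Omc_S)$ (or a top\nobreakdash-level existential of $E$) with $\Omc_S \models K \sqsubseteq C$, and this cause is reflected in $\mn{tp}^-(e_\pi)$. Concretely, I would first reduce $C$ to bounded depth by invoking Lemma~\ref{lem:smallC} (truncating $\Imc_\pi$ at depth $\mn{depth}(C)$ and decorating the leaves with the semantic disjunctions $\mn{Dis}^\EL_{\Omc_S}(\cdot)$ while preserving the entailment $\exists r . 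C$), and then run the structural induction on $C$ against these decorated types, using that in a minimal model every existential satisfied by an element is justified by a $\mn{sub}(\Omc_S)$\nobreakdash-existential present in its type. Dually, the claim can be seen through a model\nobreakdash-surgery argument: if every $r$\nobreakdash-successor of $d$ had a non\nobreakdash-forcing type, one would reroute each successor to a $C$\nobreakdash-avoiding realizer of its type and contradict $\Omc_S \models E \sqsubseteq \exists r . C$; the obstruction there is precisely that disjunctions in $\Omc_S$ may prevent realizing a type \emph{exactly}, so the surgery must be organized branch\nobreakdash-wise rather than element\nobreakdash-by\nobreakdash-element, which is again resolved by the all\nobreakdash-branches validity of the entailment.
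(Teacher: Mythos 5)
Your choice of $D'$ and your target claim coincide with the paper's: the paper also takes $D'$ to be a disjunction of $C$-forcing $\mn{sub}^-(\Omc_S)$-types (restricted there to types actually realized by $r$-successors of $D$-elements, an immaterial difference) and reduces everything to showing that in every model, every element satisfying $D$ has an $r$-successor whose type entails $C$ w.r.t.\ $\Omc_S$. The gap is in your argument for this key step, and it is precisely where you yourself expected trouble. The repair you propose --- that the successor's membership in $C$ is ``ultimately caused'' by a single $\exists r . K \in \mn{sub}(\Omc_S)$ (or a top-level existential of $E$) with $\Omc_S \models K \sqsubseteq C$, visible in $\mn{tp}^-(e_\pi)$ --- is exactly the generatability principle of Lemma~\ref{lem:generatablesem}, which the paper proves only for $\ELU$ and which \emph{fails} for $\ELU_\bot$, the setting of Lemma~\ref{lem:semadd} (it belongs to the optimization of Figure~\ref{fig:thirdb} for $\ELU_\bot$-to-$\EL_\bot$). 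Concretely, take $\Omc_S = \{P \sqsubseteq \exists r . B_1 \sqcap \exists r . B_2,\; B_1 \sqsubseteq M \sqcup N_1,\; B_2 \sqsubseteq M \sqcup N_2,\; \exists r . N_1 \sqcap \exists r . N_2 \sqsubseteq \bot\}$ and the satisfiable $D = \exists r . B_1 \sqcap \exists r . B_2 \in \mn{Con}(\Omc_S)$: then $\Omc_S \models D \sqsubseteq \exists r . M$, but no available filler ($B_1, B_2, N_1, N_2$) and no top-level conjunct filler of $D$ satisfies $\Omc_S \models K \sqsubseteq M$, because which successor witnesses $M$ differs from branch to branch --- the same $\bot$-driven merge as in Example~\ref{ex:ELUbot}, where the paper explicitly notes there are no $\Omc_S$-generatable concepts at all. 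So the principle your main route hinges on is false in the relevant DL, and you give no argument that restricting left-hand sides to $\mn{Dis}(\Omc_S)$ rescues it.

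The irony is that what you dismiss as the ``dual'' view in your last sentences is the paper's actual, short proof, and the obstruction you attribute to it does not arise. The paper fixes an arbitrary model $\Imc$ and $d \in D^\Imc$, collects the exact $\mn{sub}^-(\Omc_S)$-types $K$ of the $r$-successors of $d$, and argues by contradiction: if $\Omc_S \not\models K \sqsubseteq C$ for all these $K$, pick for each $K$ a tree model $\Jmc_K$ of $\Omc_S$ whose root satisfies $K$ but not $C$, unravel $\Imc$ at $d$, cut off the subtrees below the $r$-successors of $d$, and graft the $\Jmc_K$ in their place; the result satisfies $D$ at $d$ but not $\exists r . C$, contradicting $\Omc_S \models D \sqsubseteq \exists r . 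C$. Exact realization of types --- your stated reason for abandoning the element-by-element surgery --- is not needed: the $K$ are types of \emph{actual} successors, the grafted roots only need to satisfy $K$ \emph{at least} while avoiding $C$ (immediate from $\Omc_S \not\models K \sqsubseteq C$), and since satisfaction of concepts in $\mn{sub}(\Omc_S)$ at a point is a monotone Boolean function of its $\mn{sub}^-(\Omc_S)$-type, $d$ still satisfies $D$ while every new $r$-successor avoids $C$. (The paper does leave implicit the verification that the grafting does not violate CIs at $d$ itself; but that is the part to elaborate, not the exactness of realizers.) In short: right $D'$, right reduction, but the decisive step rests on a principle provably false in $\ELU_\bot$, while the correct surgery argument is gestured at and not carried out.
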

\noindent
\begin{proof}\ For an interpretation \Imc and $d \in \Delta^\Imc$, let
  $\mn{Con}(d)$ denote the conjunction $K \in \mn{Con}(\Omc_S)$ such
  that for all $C \in \mn{sub}^-(\Omc_S)$, $d \in C^\Imc$ iff $C$ is a
  conjunct of $K$. Now consider all models $\Imc$ of $\Omc_S$ and all
  $d \in \Delta^\Imc$ with $d \in D^\Imc$. We use $\Kmc_{\Imc,d}$ to
  denote the set of all $K \in \mn{Con}(\Omc_S)$ such that
  $K=\mn{Con}(e)$ for some $r$-successor $e$ of $d$ in \Imc. Further,
  we use \Kmf to denote the set of all $\Kmc_{\Imc,d}$.
  \\[2mm]
  {\bf Claim.} For every $\Kmc_{\Imc,d} \in \Kmf$, there is a $K \in 
  \Kmc_{\Imc,d}$ with $\Omc_S \models K \sqsubseteq C$.
  \\[2mm]
  Assume that this is not the case. Then for each $K \in
  \Kmc_{\Imc,d}$ take a tree model $\Jmc_K$ of $\Omc_S$ with root
  $e_K$ such that $e_k \in K^{\Jmc_K} \setminus C^{\Jmc_K}$.  Then let
  the interpretation \Jmc be obtained from the unraveling of \Imc at
  $d$ by dropping all subtrees rooted at $r$-successors of the root
  $d$, taking the disjoint union with all $\Jmc_K$ and making each
  $e_K$ an $r$-successor of $d$. It can be verified that the resulting
  \Jmc is a model of $\Omc_S$ and that $d \in D^\Jmc \setminus
  (\exists r. C)^\Jmc$, in contradiction to $\Omc_S \models D
  \sqsubseteq \exists r . C$. This finishes the proof of the claim.

  \smallskip
  
  Now let $D'$ be the disjunction of all $K \in \Kmc_{\Imc,d}$ with
  $\Omc_S 
  \models K \sqsubseteq C$, over all  $\Kmc_{\Imc,d} \in \Kmf$. By the claim,
  $\Omc_S \models D' \sqsubseteq C$. Moreover, by definition of \Kmf,
  we have $\Omc_S \models D \sqsubseteq \exists r . D'$ and are done.
\end{proof}

Now back to the completeness proof of the modified approximation shown
in Figure~\ref{fig:thirdb}. Due to Lemma~\ref{lem:compbot}, it
suffices to show that for all CIs $F^\uparrow \sqsubseteq \exists r
. G$ with $F$ and $G$ of the form required for the last line of
Figure~\ref{fig:thirdb} and Property~2 from Figure~\ref{fig:thirdb}
not satisfied, then the restriction $\Omc^*_T$ of $\Omc^\omega_T$ to
the first four lines is such that $\Omc^*_T \models F^\uparrow
\sqsubseteq \exists r . G$.

\smallskip

Thus take a CI $F^\uparrow \sqsubseteq \exists r . G$ as described.
Then $D=\mn{Dis}_{\Omc_S}(F)$ has more than one disjunct and $\Omc_S
\models \mn{Dis}_{\Omc_S}(F) \sqsubseteq \exists r . G$. By
Lemma~\ref{lem:modcompaux}, $\Omc^-_T \models F^\uparrow \sqsubseteq
X_{\mn{Dis}_{\Omc_S}(F)}$. Moreover, $\mn{Dis}_{\Omc_S}(F)$ is
satisfiable w.r.t.\ $\Omc_S$ since it contains at least two
disjuncts. 
To show that $\Omc^*_T \models F^\uparrow
\sqsubseteq \exists r . G$, it thus suffices to establish the
following.
\\[2mm]
{\bf Claim.} If $\Omc_S \models D \sqsubseteq C$ with $D \in
\mn{Dis}^-(\Omc_S)$ satisfiable w.r.t.\ $\Omc_S$ and $C$ an \EL
concept, then $\Omc^*_T \models X_D \sqsubseteq C$.
\\[2mm]
We prove the claim by induction on $C$. If $C=A$ is a concept name,
then it follows from $\Omc_S \models D \sqsubseteq C$ that $\Omc^-_T$
contains a CI $X_D \sqcap X_D \sqsubseteq A$, and thus we are done.
The case that $C=C_1 \sqcap C_2$ is straightforward using the
semantics and induction hypothesis. Thus assume that
$C=\exists r . C_1$. By Lemma~\ref{lem:semadd}, there is a
$D' \in \mn{Dis}(\Omc_S)$ with
$\Omc_S \models D \sqsubseteq \exists r . D'$ and
$\Omc_S \models D' \sqsubseteq C_1$. We can find a disjunction $D''$
with at least two disjuncts such that $\Omc_S \models D' \equiv D''$:
if $D'$ has only a single disjunct that does not contain $\top$ as a
conjunct, we can choose $D''=D' \sqcup (D' \sqcap \top)$ and if $D'$
has only a single disjunct that does contain $\top$ as a conjunct, we
can choose $D''=D' \sqcup D^-$ where $D^-$ is $D'$ with conjunct
$\top$ removed. We can apply the induction hypothesis to $D''$ and
$C_1$ to obtain $\Omc^*_T \models X_{D''} \sqsubseteq C_1$. Moreover,
by the second last line in Figure~\ref{fig:thirdb}, $\Omc^*_T$ contains
$X_D \sqsubseteq \exists r . D''$ and thus we have $\Omc^*_T
\models X_{D} \sqsubseteq \exists r . C_1$, as required.

\thmcorrminus*
\noindent
\begin{proof}\ The `if' direction follows from
  Lemma~\ref{lem:chasebotcorr}. For `only if', assume that $\Omc_S \models
  C_0 \sqsubseteq D_0$. By Lemma~\ref{lem:compaux}, $\Omc_T^- \models
  C_0 \sqsubseteq \mn{Dis}^\EL_{\Omc_S}(C_0)^\uparrow$. By definition
  of $\mn{Dis}^\EL_{\Omc_S}(C_0)$, $\Omc_S \models C_0 \sqsubseteq
  D_0$ and $D_0 \in \mn{sub}(\Omc_S)$ implies that every top-level
  conjunct of $D_0$ is a conjunct in every disjunct of
  $\mn{Dis}^\EL_{\Omc_S}(C_0)$. First assume that there is only a
  single such disjunct. Then $\mn{Dis}^\EL_{\Omc_S}(C_0)$ with
  conjunct $D_0$, and since $D_0$ is an \EL concept it is also a
  conjunct of $\mn{Dis}^\EL_{\Omc_S}(C_0)^\uparrow$. Thus $\Omc_T^-
  \models C_0 \sqsubseteq \mn{Dis}^\EL_{\Omc_S}(C_0)^\uparrow$ implies
  $\Omc^-_T \models C_0 \sqsubseteq D_0$ as required. Now assume that
  $\mn{Dis}^\EL_{\Omc_S}(C_0)$ has more than one disjunct. Then
  $\mn{Dis}^\EL_{\Omc_S}(C_0)^\uparrow=X_{\mn{Dis}^\EL_{\Omc_S}(C_0)}$
  and $\Omc^-_T$ contains the CI $X_{\mn{Dis}^\EL_{\Omc_S}(C_0)}
  \sqcap X_{\mn{Dis}^\EL_{\Omc_S}(C_0)} \sqsubseteq D_0$. Thus again
  $\Omc^-_T \models C_0 \sqsubseteq D_0$.
\end{proof}

\section{Proof of Theorem~\ref{thm:OMQresults}}

\thmOMQresults*

\noindent
\begin{proof} \
Define the unfolding $\Amc^{\ast}_{a}$ of an ABox $\Amc$ at an individual names $a$ as 
the (possibly infinite) ABox whose individuals are words $w$ of the form 
$a_{0}r_{1}a_{1}\ldots a_{n}$ with $a_{0}=a$ and $r_{i+1}(a_{i},a_{i+1})\in \Amc$ for all $i<n$, 
and containing the assertions $A(a_{0}r_{1}a_{1}\ldots a_{n})$ if $A(a_{n})\in \Amc$ and
$r(a_{0}r_{1}a_{1}\ldots a_{n},a_{0}r_{1}\ldots a_{n}r_{n+1}a_{n+1})$ if $r_{n+1}(a_{n},a_{n+1})\in \Amc$.
The following has been proved in \cite{DBLP:journals/jsc/LutzW10}.

\medskip
\noindent
\emph{Fact 1.} The following conditions are equivalent for any $\EL_{\bot}$ ontology $\Omc$ and \EL concept $C$:
\begin{enumerate}
\item $\Omc,\Amc\models C(a)$;
\item $\Omc,\Amc^{\ast}_{a}\models C(a)$.
\end{enumerate}
We now show the first claim of Theorem~\ref{thm:OMQresults}. 
The proofs of the remaining two claims are similar and omitted.
Let $\Omc_{S}$ be an \ALC ontology with $\mn{sig}(\Omc_{S})=\Sigma$
and let $\Omc_{T}^{\omega}$ be the ontology from Section~4. To show that $\Omc_{T}^{\omega}$
is an $\EL_{\bot}$ approximation of $\Omc_{S}$ w.r.t.~ELQ$(\Sigma)$, we have to check
the conditions of Definition~\ref{def:approx_queryNEW}. For Condition~1, assume that $C(x)$ is
in ELQ$(\Sigma)$ and that $\Amc$ is an ABox using no symbols from 
$\mn{sig}(\Omc_{T}^{\omega})\setminus \mn{sig}(\Omc_{S})$ such that $\Omc_{T}^{\omega},\Amc\models C(a)$.
By Fact~1, $\Omc_{T}^{\omega},\Amc_{a}^{\ast}\models C(a)$. Denote by $(\Amc_{a}^{\ast})_{|\Sigma}$
the ABox obtained from $\Amc_{a}^{\ast}$ by removing all assertions using symbols not in $\Sigma$.
Then still $\Omc_{T}^{\omega},(\Amc_{a}^{\ast})_{|\Sigma}\models C(a)$ as $\Omc_{T}^{\omega}$ and $C$ do not
use any of the symbols used in the assertions we removed. By compactness there exists 
an \EL concept $D$ corresponding to a finite subABox $\Amc_{1}$ of $(\Amc_{a}^{\ast})_{|\Sigma}$
with root $a$ such that $\Omc_{T}^{\omega}\models D \sqsubseteq C$. Then $\Omc_{S}\models D \sqsubseteq C$
since $\Omc_{T}^{\omega}$ is an $\EL_{\bot}$ approximation of $\Omc_{S}$ and $C,D$ use symbols in $\Sigma$
only. Then $\Omc_{S},\Amc\models C(a)$
since there is a homomorphism from $\Amc_{1}$ to $\Amc$ mapping $a$ to $a$.

For Condition~2, let $C(x)$ be in ELQ$(\Sigma)$ and $Q=(\Omc,\Sigma',C(x))$ such that
$(\Omc_{S},\Sigma',C(x))\supseteq Q$, where $\Sigma'$ is a signature with 
$\Sigma'\cap \mn{sig}(\Omc_{T}^{\omega}) \subseteq \mn{sig}(\Omc_{S})$ and $\Omc$ is an $\EL_{\bot}$ ontology.
To show that $(\Omc_{T}^{\omega},\Sigma',C(x))\supseteq Q$, consider a $\Sigma'$ ABox $\Amc$
such that $\Omc,\Amc\models C(a)$. Then by Fact~1, $\Omc,\Amc_{a}^{\ast}\models C(a)$.
Hence $\Omc_{S},\Amc_{a}^{\ast}\models C(a)$ since $(\Omc_{S},\Sigma',C(x))\supseteq Q$.
Then one can argue as above that $\Omc_{T}^{\omega},\Amc_{a}^{\ast}\models C(a)$.
Hence, by Fact~1, $\Omc_{T}^{\omega},\Amc\models C(a)$, as required.
\end{proof}


\begin{thebibliography}{}

\bibitem[\protect\citeauthoryear{Baader \bgroup \em et al.\egroup
  }{2017}]{DBLP:books/daglib/0041477}
Franz Baader, Ian Horrocks, Carsten Lutz, and Ulrike Sattler.
\newblock {\em An Introduction to Description Logic}.
\newblock Cambridge University Press, 2017.

\bibitem[\protect\citeauthoryear{Bienvenu \bgroup \em et al.\egroup
  }{2014}]{DBLP:journals/tods/BienvenuCLW14}
Meghyn Bienvenu, Balder ten Cate, Carsten Lutz, and Frank Wolter.
\newblock Ontology-based data access: {A} study through disjunctive datalog,
  {CSP}, and {MMSNP}.
\newblock {\em {ACM} Trans. Database Syst.}, 39(4):33:1--33:44, 2014.

\bibitem[\protect\citeauthoryear{B{\"{o}}tcher \bgroup \em et al.\egroup
  }{2019}]{DBLP:conf/ijcai/BotcherLW19}
Anneke B{\"{o}}tcher, Carsten Lutz, and Frank Wolter.
\newblock Ontology approximation in {Horn} description logics.
\newblock In {\em Proc.\ of {IJCAI}}, pages 1574--1580. ijcai.org, 2019.

\bibitem[\protect\citeauthoryear{Botoeva \bgroup \em et al.\egroup
  }{2010}]{DBLP:conf/aimsa/BotoevaCR10}
Elena Botoeva, Diego Calvanese, and Mariano Rodriguez{-}Muro.
\newblock Expressive approximations in \emph{DL-Lite} ontologies.
\newblock In {\em Proc.\ of {AIMSA}}, volume 6304 of {\em LNCS}, pages 21--31.
  Springer, 2010.

\bibitem[\protect\citeauthoryear{Calvanese \bgroup \em et al.\egroup
  }{2007}]{DBLP:journals/jar/CalvaneseGLLR07}
Diego Calvanese, Giuseppe {De Giacomo}, Domenico Lembo, Maurizio Lenzerini, and
  Riccardo Rosati.
\newblock Tractable reasoning and efficient query answering in description
  logics: The \emph{DL-Lite} family.
\newblock {\em J. Autom. Reasoning}, 39(3):385--429, 2007.

\bibitem[\protect\citeauthoryear{Calvanese \bgroup \em et al.\egroup
  }{2009}]{DBLP:conf/rweb/CalvaneseGLLPRR09}
Diego Calvanese, Giuseppe {De Giacomo}, Domenico Lembo, Maurizio Lenzerini,
  Antonella Poggi, Mariano Rodriguez{-}Muro, and Riccardo Rosati.
\newblock Ontologies and databases: The {DL-Lite} approach.
\newblock In {\em Reasoning Web}, volume 5689 of {\em LNCS}, pages 255--356.
  Springer, 2009.

\bibitem[\protect\citeauthoryear{Carral \bgroup \em et al.\egroup
  }{2014}]{DBLP:conf/cade/MartinezFGHH14}
David Carral, Cristina Feier, Bernardo~Cuenca Grau, Pascal Hitzler, and Ian
  Horrocks.
\newblock $\mathcal{EL}$-ifying ontologies.
\newblock In {\em Proc.\ of {IJCAR}}, pages 464--479, 2014.

\bibitem[\protect\citeauthoryear{Cucala \bgroup \em et al.\egroup
  }{2019}]{DBLP:conf/birthday/CucalaGH19}
David~Tena Cucala, Bernardo~Cuenca Grau, and Ian Horrocks.
\newblock 15 years of consequence-based reasoning.
\newblock In {\em Description Logic, Theory Combination, and All That - Essays
  Dedicated to Franz Baader on the Occasion of His 60th Birthday}, volume 11560
  of {\em LNCS}, pages 573--587. Springer, 2019.

\bibitem[\protect\citeauthoryear{Feier \bgroup \em et al.\egroup
  }{2018}]{DBLP:conf/ijcai/FeierLW18}
Cristina Feier, Carsten Lutz, and Frank Wolter.
\newblock From conjunctive queries to instance queries in ontology-mediated
  querying.
\newblock In {\em Proc.\ of {IJCAI}}, pages 1810--1816. ijcai.org, 2018.

\bibitem[\protect\citeauthoryear{Feier \bgroup \em et al.\egroup
  }{2019}]{DBLP:journals/lmcs/FeierKL19}
Cristina Feier, Antti Kuusisto, and Carsten Lutz.
\newblock Rewritability in monadic disjunctive datalog, {MMSNP}, and expressive
  description logics.
\newblock {\em Logical Methods in Computer Science}, 15(2), 2019.

\bibitem[\protect\citeauthoryear{Gottlob \bgroup \em et al.\egroup
  }{2014}]{DBLP:journals/ai/GottlobKKPSZ14}
Georg Gottlob, Stanislav Kikot, Roman Kontchakov, Vladimir~V. Podolskii, Thomas
  Schwentick, and Michael Zakharyaschev.
\newblock The price of query rewriting in ontology-based data access.
\newblock {\em Artif. Intell.}, 213:42--59, 2014.

\bibitem[\protect\citeauthoryear{Haga \bgroup \em et al.\egroup
  }{2020}]{haga2020journey}
Anneke Haga, Carsten Lutz, Johannes Marti, and Frank Wolter.
\newblock A journey into ontology approximation: From {Non-Horn} to {Horn}.
\newblock {\em CoRR}, abs/2001.07754, 2020.

\bibitem[\protect\citeauthoryear{Kaminski \bgroup \em et al.\egroup
  }{2016}]{DBLP:journals/ai/KaminskiNG16}
Mark Kaminski, Yavor Nenov, and Bernardo~Cuenca Grau.
\newblock Datalog rewritability of disjunctive datalog programs and non-{Horn}
  ontologies.
\newblock {\em Artif. Intell.}, 236:90--118, 2016.

\bibitem[\protect\citeauthoryear{Lutz and
  Wolter}{2010}]{DBLP:journals/jsc/LutzW10}
Carsten Lutz and Frank Wolter.
\newblock Deciding inseparability and conservative extensions in the
  description logic $\mathcal{EL}$.
\newblock {\em J. Symb. Comput.}, 45(2):194--228, 2010.

\bibitem[\protect\citeauthoryear{Lutz \bgroup \em et al.\egroup
  }{2012}]{DBLP:conf/kr/LutzSW12}
Carsten Lutz, Inan{\c{c}} Seylan, and Frank Wolter.
\newblock An automata-theoretic approach to uniform interpolation and
  approximation in the description logic {$\mathcal{EL}$}.
\newblock In {\em Proc.\ of {KR}}. {AAAI} Press, 2012.

\bibitem[\protect\citeauthoryear{Pan and Thomas}{2007}]{DBLP:conf/aaai/PanT07}
Jeff~Z. Pan and Edward Thomas.
\newblock Approximating $\textit{OWL-DL}$ ontologies.
\newblock In {\em AAAI}, pages 1434--1439, 2007.

\bibitem[\protect\citeauthoryear{Ren \bgroup \em et al.\egroup
  }{2010}]{DBLP:conf/aaai/RenPZ10}
Yuan Ren, Jeff~Z. Pan, and Yuting Zhao.
\newblock Soundness preserving approximation for tbox reasoning.
\newblock In {\em Proc.\ of {AAAI}}. {AAAI} Press, 2010.

\bibitem[\protect\citeauthoryear{Zhou \bgroup \em et al.\egroup
  }{2015}]{DBLP:journals/jair/ZhouGNKH15}
Yujiao Zhou, Bernardo~Cuenca Grau, Yavor Nenov, Mark Kaminski, and Ian
  Horrocks.
\newblock Pagoda: Pay-as-you-go ontology query answering using a datalog
  reasoner.
\newblock {\em J. Artif. Intell. Res.}, 54:309--367, 2015.

\end{thebibliography}
\end{document}